\declaretheorem[name=Theorem]{Theorem}
\declaretheorem[name=Lemma, numberlike=Theorem]{Lemma}
\crefname{Theorem}{Theorem}{Theorems}
\crefname{Lemma}{Lemma}{Lemmas}
\crefname{Definition}{Definition}{Definitions}
\newcommand{\bzero}{{\mathbf{0}}}
\newcommand{\be}[1]{\begin{equation}\label{#1}}
\newcommand{\Pair}[2]{\boldsymbol{[\kern-.18em[}#1,#2\boldsymbol{]\kern-.18em]}}
\newcommand{\citet}[1]{\textcite{#1}}
\newcommand\numberthis{\addtocounter{equation}{1}\tag{\theequation}}
\newcommand{\E}{\mathop{\mathbb{E}}}
\newcommand{\R}{\mathbb{R}}
\newcommand{\argmin}{\mathop{\text{argmin}}}
\newcommand{\regret}{\text{Regret}}
\renewcommand{\regret}{\text{\normalfont Regret}}
\newcommand{\V}{\mathbb{V}}
\newcommand{\G}{\mathcal{G}}
\newcommand{\base}{\textsc{Base}}
\newcommand{\reg}{\textsc{Reg}}
\newcommand{\clip}{\text{clip}}
\newcommand{\epsilonx}{\epsilon_x}
\newcommand{\epsilonpsi}{\epsilon_\psi}
\newcommand{\calA}{\mathcal{A}}
\newcommand{\calAx}{\mathcal{A}_x}
\newcommand{\calBx}{\mathcal{B}_x}
\newcommand{\calCx}{\mathcal{C}_x}
\newcommand{\calDx}{\mathcal{D}_x}
\newcommand{\calApsi}{\mathcal{A}_{\psi}}
\newcommand{\calBpsi}{\mathcal{B}_{\psi}}
\newcommand{\calCpsi}{\mathcal{C}_{\psi}}
\newcommand{\calDpsi}{\mathcal{D}_{\psi}}
\newcommand{\calW}{\mathcal{W}}
\newcommand{\Aoned}{\mathcal{A}^{1D}}
\newcommand{\what}{\widehat}
\newcommand{\reals}{\mathbb{R}}
\newcommand{\nn}{\nonumber}
\newcommand{\cA}{\mathcal{A}}
\newtheoremstyle{protocolstyle}
  {3pt} 
  {3pt} 
  {} 
  {} 
  {\bfseries} 
  {.} 
  {0.5em} 
  {} 
\theoremstyle{protocolstyle}
\newtheorem{protocol}{Protocol}
\DeclareMathOperator*{\argmax}{argmax}
\title{Fully Unconstrained Online Learning}
\author{
  Ashok Cutkosky\\
  Boston University\\
  \texttt{ashok@cutkosky.com}
  \and
  Zakaria Mhammedi\\
  Google Research\\
  \texttt{mhammedi@google.com}
}
\begin{document}

\maketitle
\begin{abstract}
We provide an online learning algorithm that obtains regret $G\|w_\star\|\sqrt{T\log(\|w_\star\|G\sqrt{T})} + \|w_\star\|^2 + G^2$ on $G$-Lipschitz convex losses for any comparison point $w_\star$ without knowing either $G$ or $\|w_\star\|$. Importantly, this matches the optimal bound $G\|w_\star\|\sqrt{T}$ available \emph{with} such knowledge (up to logarithmic factors), unless either $\|w_\star\|$ or $G$ is so large that even $G\|w_\star\|\sqrt{T}$ is roughly linear in $T$. Thus, it matches the optimal bound in all cases in which one can achieve sublinear regret, which arguably encompasses all ``interesting'' scenarios.
\end{abstract}

\section{Unconstrained Online Learning}\label{sec:intro}

This paper provides new algorithms for \emph{online learning}, which is a standard framework for the design and analysis of iterative first-order optimization algorithms used throughout machine learning. Specifically, we consider a variant of online learning often called ``online convex optimization'' \cite{orabona2019modern, hazan2019introduction}. Formally, an online learning algorithm is designed to play a kind of ``game'' between the learning algorithm and the environment, which we can describe using the following protocol:

\vspace{1em}
\noindent \fbox{
  \begin{minipage}{\textwidth}
\begin{protocol}{Online Learning/Online Convex Optimization.}\label{proto:classic}

\textbf{Input: } Convex domain $\calW\subseteq \reals^d$, number of rounds $T$.

For $t=1,\dots,T$:
  \begin{enumerate}
    \item Learner outputs $w_t\in \calW$.
    \item Nature reveals loss vector $g_t\in \partial \ell_t$ for some convex function $\ell_t:\calW\to\R$ to the learner.
    \item Learner suffers loss $\langle g_t, w_t\rangle$.
  \end{enumerate}
The learner is evaluated with the \emph{regret} $\sum_{t=1}^T(\ell_t(w_t) - \ell_t(w_\star))$ against comparators $w_\star\in \calW$. By convexity, the regret is bounded by the \emph{linearized} regret $\sum_{t=1}^T \langle g_t, w_t -w_\star\rangle$. Our goal is to ensure that for all $w_\star\in \calW$ simultaneously:
\begin{align}
    \text{Regret}_T(w_\star)\coloneqq \sum_{t=1}^T \langle g_t, w_t -w_\star\rangle  \underbrace{\le}_{\text{goal}} \widetilde O\left(\|w_\star\|\sqrt{\sum_{t=1}^T \|g_t\|^2}\right).\label{eqn:classicgoal}
\end{align}
\end{protocol}
  \end{minipage}
}
\vspace{1em}

Qualitatively, we consider a learner to be performing well if $\frac{1}{T}\sum_{t=1}^T (\ell_t(w_t) - \ell_t(w_\star))$ is very small, usually going to zero as $T\to\infty$. This indicates that the average loss of the learner is close to the average loss of any chosen comparison point $w_\star\in \calW$. This property is called ``sublinear regret''. The bound (\ref{eqn:classicgoal}) is unimprovable in general \cite{abernethy2008optimal, mcmahan2012no,orabona2013dimension}, and clearly implies sublinear regret. 

Algorithms that achieve low regret are used in a variety of machine learning applications. Perhaps the most famous such application is in the analysis of stochastic gradient descent, which achieves (\ref{eqn:classicgoal}) for appropriately tuned learning rate \cite{zinkevich2003online}. More generally, stochastic convex optimization can be reduced to online learning via the \emph{online to batch conversion} \cite{cesa2004generalization}. Roughly speaking, this result says that an online learning algorithm that guarantees low regret can be immediately converted into a stochastic convex optimization algorithm that converges at a rate of $\frac{\E[\text{Regret}_T(w_\star)]}{T}$, where $w_\star$ is the minimizer of the objective. Online learning can also be used to solve \emph{non-convex} optimization problems via the recently-developed \emph{online to non-convex conversion} \cite{cutkosky2023optimal}. In fact, online learning can even be used to prove concentration inequalities \cite{jun2019parameter,mhammedi2021risk,orabona2023tight}. In all of these cases, achieving the bound (\ref{eqn:classicgoal}) produces methods that are optimal for their respective tasks. Thus, it is desirable to be able to achieve (\ref{eqn:classicgoal}) in as robust a manner as possible.

Our goal is to come as close as possible to achieving the bound (\ref{eqn:classicgoal}) while requiring minimal prior user knowledge about the loss sequence $g_1,\dots,g_t$ and $w_\star$. In the past, several prior works have achieved the bound (\ref{eqn:classicgoal}) when given prior knowledge of either $\|w_\star\|$ or $\max_t \|g_t\|$ \cite{hazan2008adaptive, duchi10adagrad, mcmahan2010adaptive, cutkosky2018black, van2019user, mhammedi2020lipschitz, chen2021impossible, zhang2022pde, jacobsen2022parameter}. However, such knowledge is frequently unavailable. Instead, many problems are ``fully unconstrained'' in the sense that we do not have any reasonable upper bounds on either $\|w_\star\|$ or $\max_t \|g_t\|$. In particular, when considering the application to stochastic convex optimization, the values for $\|w_\star\|$ and $\max_t \|g_t\|$ can be interpreted as knowledge of the correct learning rate for stochastic gradient descent \cite{zinkevich2003online}. Thus, achieving the bound (\ref{eqn:classicgoal}) with less prior knowledge roughly corresponds to building algorithms that are able to achieve optimal convergence guarantees without requiring manual hyperparameter tuning. For this reason, it is common to refer to such algorithms as ``parameter-free''. This paper focuses on this difficult but realistic setting.

\paragraph{Our new upper bound.} Unfortunately, the bound (\ref{eqn:classicgoal}) is actually unobtainable in general without prior knowledge of either the magnitude $\|w_\star\|$ or the value of $\max_t \|g_t\|$ \cite{cutkosky2017online, mhammedi2020lipschitz}. Nevertheless, we will obtain a new compromise bound. For any user-specified $\gamma>0$, our method will achieve:
\begin{align}
    \sum_{t=1}^T \langle g_t,w_t - w_\star \rangle  \le \widetilde O\left(\max_{t\in[T]}\|g_t\|^2/\gamma+  \gamma\|w_\star\|^2 +\|w_\star\|\sqrt{\sum_{t=1}^T \|g_t\|^2}\right).\label{eqn:oursrough}
\end{align}
To dissect this compromise, let us consider the case $\|g_t\|=G$ for all $t$ and $\gamma=1$. In this situation, our bound (\ref{eqn:oursrough}) is roughly $G^2 + \|w_\star\|^2 + \|w_\star\| G\sqrt{T}$, while the ``ideal'' bound (\ref{eqn:classicgoal}) is merely $\|w_\star\|G\sqrt{T}$. However, for our bound to be significantly worse than (\ref{eqn:classicgoal}), we must have either $G\ge \|w_\star\|\sqrt{T}$ or $\|w_\star\| \ge G\sqrt{T}$. In either case, we might expect that $\|w_\star\|G { \sqrt{T}}$ is roughly $\Omega(T)$ (assuming that neither $G$ nor $\|w_\star\|$ is very small). So, intuitively the only cases in which our bound is worse than the ideal bound are those for which the ideal bound is already rather large---the problem is in some sense ``too hard''.

\paragraph{Comparison with previous bounds} Our bound (\ref{eqn:oursrough}) is not the first attempted compromise in our fully unconstrained setting. Prior work \cite{cutkosky2019artificial,mhammedi2020lipschitz} instead provides the bound:
\begin{align}
    \sum_{t=1}^T \langle g_t,w_t - w_\star \rangle  \le \widetilde O\left(\gamma \sqrt{\max_{t'\in[T]} \|g_{t'}\|\cdot  \sum_{t=1}^T \|g_t\|} +\max_{t\in[T]} \|g_t\| \|w_\star\|^3/\gamma^2 +\|w_\star\|\sqrt{\sum_{t=1}^T \|g_t\|^2}\right).\label{eqn:currentbest}
\end{align}
In fact, readers familiar with this literature may be surprised that our bound is even possible; \cite{mhammedi2020lipschitz} show that the bound (\ref{eqn:currentbest}) is optimal for the fully-unconstrained case. However, the lower-bound provided by \cite{mhammedi2020lipschitz} actually has a small loophole; it only applies to algorithms that insist on a \emph{linear} dependence on $\max_t \|g_t\|$. Our method avoids this lower bound by instead suffering a quadratic dependence on $\max_t \|g_t\|$.

While our new bound (\ref{eqn:oursrough}) does not uniformly improve the prior bound (\ref{eqn:currentbest}), it has several qualitative differences that may be more appealing.

\begin{itemize}[leftmargin=*]
\item First, note that the bound (\ref{eqn:currentbest}) does \emph{not} have the desirable property outlined above for our new bound; for $\|g_t\|=G$, it is possible for the bound (\ref{eqn:currentbest}) to be much greater than the ideal bound (\ref{eqn:classicgoal}) even when (\ref{eqn:classicgoal}) is small. 

\item Second, notice that the dependency on the user-specified value $\gamma$ is arguably more sensitive; in (\ref{eqn:currentbest}), increasing $\gamma$ comes at an $\gamma G\sqrt{T}$ cost while decreasing gamma comes at an $\|w_\star\|^3/\gamma^2$ cost. In contrast, in our bound (\ref{eqn:currentbest}), the $\gamma$-dependencies are milder; $\gamma G^2$ for increasing $\gamma$ (which does not depend on $T$) and $\|w_\star\|^2/\gamma$ for decreasing $\gamma$.

\item Third, the previous compromise bound (\ref{eqn:currentbest}) has a term that depends on $\max_t \|g_t\|\left(\sum_{t=1}^T \|g_t\|\right)$ rather than $\sum_{t=1}^T \|g_t\|^2$. The dependence on the second power of $\|g_t\|$ is sometimes referred to as a ``second-order'' bound and is known to imply \emph{constant} regret in certain settings \cite{orabona2019modern, srebro2010smoothness} (so-called ``fast rates'').

\end{itemize}

\section{Notation}\label{sec:notation}

Throughout this paper, we use $\calW$ to refer to a convex domain contained in $\R^d$. Our results can in fact be extended to Banach spaces relatively easily using the reduction techniques of \cite{cutkosky2018black}, but we focus on $\R^d$ here to keep things more familiar. We use $\|\cdot\|$ to indicate the Euclidean norm. Occasionally we also make use of other norms---these will always be indicated by some subscript (e.g. $\|\cdot\|_t$). We use $\R_{\ge0}$ to indicate the set of non-negative reals. For a convex function $F$ over $\reals^d$, the Fenchel conjugate of $F$ is $F^\star(\theta) = \sup_{x\in  \reals^d} \langle \theta,x\rangle - F(x)$. We occasionally make use of a ``compressed sum'' notation: $g_{a:b}\coloneqq \sum_{t=a}^b g_t$. We use $O$ to hide constant factors and $\widetilde O$ to hide both constant and logarithmic factors. All proofs not present in the main paper may be found in the appendix.

We will refer to the values $g_t$ provided to an online learning algorithm interchangeably as ``gradients'', ``feedback'' and ``loss'' values. We will refer to online learning algorithms occasionally as either ``learners'' or just ``algorithms''.

\section{Overview of Approach}\label{sec:approach}
Our overall approach to achieve (\ref{eqn:oursrough}) is a sequence of reductions. As a first step, we observe that it suffices to achieve our goal in the special case $\calW=\R$. Specifically, \cite{cutkosky2018black} Theorems 2 and 3 reduce the general $\calW$ case to $\calW=\R$ case. We provide an explicit description of how to apply these reductions in Section~\ref{sec:1dreduction}. So, we focus our analysis on the case $\calW=\R$. Next, we reduce the problem to a variant of the online learning protocol in which we also must contend with some potentially non-Lipschitz regularization function (Section~\ref{sec:hintsandregularization}). Finally, we show how to achieve low regret in this special regularized setting (Section~\ref{sec:reg}).

\subsection{Hints and Regularization}\label{sec:hintsandregularization}
Our bound is achieved via a reduction to a variant of Protocol~\ref{proto:classic} with two changes. First, the learner is provided with prior access to \emph{magnitude hints} $h_t\in \R$ that satisfy $\|g_t\|\le h_t$. This notion of magnitude hints is also a key ingredient in the previous bound (\ref{eqn:currentbest}). Our second change is that the loss is not only the linear loss $\langle g_t, w\rangle$, but a \emph{regularized} non-linear loss $\langle g_t ,w\rangle + a_t\psi(w)$ for some fixed function $\psi:\calW \to \R_{\ge 0}$ that we call a ``regularizer''. Formally, this protocol  variant is specified in Protocol~\ref{proto:reg}.





\vspace{1em}
\noindent \fbox{
  \begin{minipage}{\textwidth}
\begin{protocol}{Regularized Online Learning with Magnitude Hints}\label{proto:reg}.

\textbf{Input: }Convex function $\psi:\calW\to \R_{\ge 0}$.

For $t=1,\dots,T$:
  \begin{enumerate}
    \item Nature reveals magnitude hint $h_t \geq h_{t-1}\ge 0$ to the learner.
    \item Learner outputs $w_t\in \calW$.
    \item Nature reveals loss $\tilde g_t$ with $\|\tilde g_t\|\le h_t$ and $a_t \in [0, \gamma]$ to the learner.
    \item Learner suffers loss $\langle \tilde  g_t,w_t\rangle+ a_t \psi(w_t)$.
  \end{enumerate}
The learner is evaluated with the \emph{regularized regret} $\sum_{t=1}^T \langle \tilde g_t, w_t-w_\star\rangle + a_t\psi(w_t) - a_t\psi(w_\star)$. The goal is to obtain:
\begin{align}
    \sum_{t=1}^T \langle \tilde g_t , w_t - w_\star\rangle + a_t \psi(w_t) - a_t  \psi(w_\star)\underbrace{\le}_{\text{goal}}  \widetilde O\left(\|w_\star\|\sqrt{h_T^2+\sum_{t=1}^T \|\tilde g_t\| ^2} + \psi(w_\star)\sqrt{\gamma^2+\sum_{t=1}^T a_t^2}\right).\label{eqn:regularizedgoal}
\end{align}
\end{protocol}
  \end{minipage}
}

\vspace{1em}

In the special case that $\psi(w)=0$ (i.e. the $a_t$ are irrelevant, or all $0$), then various algorithms achieving the desired bound (\ref{eqn:regularizedgoal}) are available in the literature \cite{cutkosky2019artificial, mhammedi2020lipschitz, jacobsen2022parameter, zhang2023improving}. We provide in Algorithm~\ref{alg:base} a new algorithm for this situation that achieves the optimal logarithmic factors---there is in fact a pareto-frontier of incomparable bounds that differ in the logarithmic factors. \cite{zhang2023improving} provides the first algorithm to reach this frontier, while our method  can achieve all points on the frontier\footnote{It is plausible that the approach of \cite{zhang2023improving} in concert with the varying potentials of \cite{zhang2022pde} would achieve all points on the frontier as well, although our analysis takes a different direction}. We include this result because it is of some independent interest, but it not the major focus of our contributions; any of the prior work in this area would roughly suffice for our broader purposes as the difference is only in the logarithmic terms.

\paragraph{Challenge of achieving (\ref{eqn:regularizedgoal}).} Achieving the bound (\ref{eqn:regularizedgoal}) is challenging when $\|w_\star\|$ is not known ahead of time. To see why, let us briefly consider two potential solutions.

The most immediate approach might be to reduce Protocol~\ref{proto:reg} to the case in which $a_t=0$ for all $t$ by replacing $\tilde g_t$ with $\tilde  g_t + a_t \nabla \psi(w_t)$, and then possibly modifying the magnitude hint $h_t$ in some way to now be a bound on $\|\tilde g_t\|$. However, this approach is problematic because the expected bound would now depend on $\sum_{t=1}^T \|\tilde g_t + a_t \nabla\psi(w_t)\|^2$ rather than $\sum_{t=1}^T \|\tilde g_t\|^2$ and $\sum_{t=1}^T a_t^2$. This means that the naive regret bound would be very hard to interpret as $w_t$ would appear on both the left and right hand sides of the inequality.

Another possibility is a follow-the-regularized leader/potential-based algorithm, making updates:
\begin{align}
    w_{t+1} = \argmin_{w\in \calW} P_t(w) +\sum_{i=1}^t \langle \tilde g_i,w\rangle + a_i \psi(w),\label{eqn:ftrlattempt}
\end{align}
for some sequence of ``potential functions'' $P_t:\calW \to \R$. In fact, this approach can be very effective; this is roughly the method employed by \cite{jacobsen2023unconstrained} for a similar problem. However, deriving the correct potential $P_t$ and proving the desired regret bound can be very difficult, and could easily require separate analysis for each different possible $\psi$ function. For example, \cite{jacobsen2023unconstrained}'s analysis specifically applies to $\psi(w)=\|w\|^2$. There is other work on similar protocols using approximately this method, such as \cite{mayo2022scale}, that also requires particular analysis for each setting. Finally, even if the bound can be achieved in general using this scheme, solving the optimization problem (\ref{eqn:ftrlattempt}) may incur some undesirable computational overhead, even for intuitively ``simple'' regularizers such as $\psi(w) = \|w\|^2$. In fact, the  method of  \cite{jacobsen2023unconstrained} suffers from exactly this issue, which is why we provide an alternative approach in Section~\ref{sec:reg}, for the special case of interest that $\calW=\R$.

\paragraph{Re-parametrizing to achieve (\ref{eqn:regularizedgoal}).} In order to achieve the bound (\ref{eqn:regularizedgoal}) in the special case $\calW=\R$, we will employ a standard trick in convex optimization: re-parametrizing the objective as a convex constraint using the fact that the epigraph of a convex function is convex. Instead of having our learner output $w_t\in\calW$, we will output $(x_t,y_t)\in \calW \times \R$, but subject to the constraint that $y_t \ge \psi(x_t)$. We provide details of this approach in Section~\ref{sec:reg}.

With all of these technicalities introduced, we are ready to provide an outline of our method. The key idea is to show that for a very peculiar choice of coefficients $a_1,\dots,a_T$ and some simple clipping of the gradients $g_t$, we are able to achieve the following result.

\begin{restatable}{Theorem}{thmfinalphalfqone}\label{thm:finalphalfqone}
There exists an online learning algorithm that requires $O(d)$ space and takes $O(d)$ time per update, takes as input scalar values $\gamma$, $h_1$, and $\epsilon$ and ensures that for any sequence $g_1,g_2,\dots\subset \R^d$, the outputs $w_1,w_1,\dots\subset \R^d$ satisfy for all $w_\star$ and $T$:
\begin{align*}
    \sum_{t=1}^T &\langle g_t, w_t -w_\star\rangle\le O\left[ \epsilon G +\epsilon^2 \gamma +\frac{G^2}{\gamma}\log\left(e+\frac{G}{h_1}\right)+\|w_\star\|\sqrt{V\log\left(e+\frac{|w_\star|\sqrt{V} \log^2(T)}{ h_1 \epsilon}\right)}\right.\\
    & \left.+ \|w_\star\|G\log\left(e+\frac{\|w_\star\|\sqrt{V} \log^2(T)}{h_1 \epsilon}\right)   + \gamma \|w_\star\|^2  \log\left(e+\frac{\|w_\star\|^2}{\epsilon^2}\log\left(e+\frac{G}{h_1}\right)\right) \right],
\end{align*}
where $G=\max(h_1, \max_{t\in[T]} \|g_t\|)$ and $V=G^2 + \sum_{t=1}^T \|g_t\|^2$.
\end{restatable}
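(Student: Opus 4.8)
The plan is to chain together the three reductions outlined in the overview: first pass from $\calW=\R^d$ to $\calW=\R$ (one-dimensional) via the black-box reduction of \cite{cutkosky2018black}; then instantiate Protocol~\ref{proto:reg} with a carefully chosen regularizer $\psi$; and finally deploy the reparametrized algorithm of Section~\ref{sec:reg} that achieves the bound \eqref{eqn:regularizedgoal}. Concretely, I would first reduce to $d=1$: the reduction of \cite{cutkosky2018black} runs a one-dimensional learner on the scalar feedback $\langle g_t, w_t/\|w_t\|\rangle$ together with a direction-learner on the sphere, so it suffices to prove the stated bound when $w_t, g_t, w_\star \in \R$ with $\|g_t\|$ replaced by $|g_t|$. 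The extra direction-learning regret is of lower order and absorbed into the $\widetilde O$, so I will not dwell on it.

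Next I would handle the magnitude hints and the choice of the coefficients $a_t$ and the regularizer $\psi$. The hints $h_t$ are obtained by the standard ``restart/doubling on $\max_t\|g_t\|$'' device: set $h_t = \max(h_1, \max_{s\le t}\|g_s\|)$ so that $\|g_t\|\le h_t$ always, $h_T = G$, and $h_t$ is nondecreasing as Protocol~\ref{proto:reg} requires. The crux — and the ``very peculiar choice of coefficients'' promised in the text — is to pick $\psi(w) = w^2$ (or a close variant) and then choose $a_t$ adaptively so that the $a_t\psi(w_t)$ terms, when summed, reproduce exactly the $\gamma\|w_\star\|^2$-type penalty and the gradient-clipping correction, while keeping $a_t\in[0,\gamma]$. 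I expect $a_t$ to be defined in terms of the running clipped gradient magnitude and the current hint, something like $a_t \propto \min(\gamma, \|g_t - \tilde g_t\|/|w_t|)$ where $\tilde g_t$ is the clipped gradient; the clipping threshold itself should be tied to $h_t$ and $\gamma$ so that $\sum_t \|g_t-\tilde g_t\|$ is controlled by $\tfrac{G^2}{\gamma}\log(e+G/h_1)$ (a geometric-series / telescoping estimate over the doubling epochs of $h_t$). Feeding $\tilde g_t$ and these $a_t$ into the Protocol~\ref{proto:reg} guarantee \eqref{eqn:regularizedgoal} gives a regret bound of the form $|w_\star|\sqrt{h_T^2 + \sum \|\tilde g_t\|^2} + \psi(w_\star)\sqrt{\gamma^2 + \sum a_t^2}$ up to logs; since $\sum a_t^2 \le \gamma \sum a_t$ and $\sum a_t$ is itself $O(\gamma + G^2/(\gamma \cdot \text{stuff}))$-bounded by the same telescoping argument, the second term collapses to $\gamma\|w_\star\|^2$ times a logarithmic factor, and the first term becomes $|w_\star|\sqrt{V}$ times a logarithmic factor. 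Reassembling and bounding $\sum\langle g_t,w_t-w_\star\rangle = \sum\langle \tilde g_t, w_t-w_\star\rangle + \sum\langle g_t-\tilde g_t, w_t-w_\star\rangle$, where the second sum is split into an $a_t\psi$ contribution (already accounted for) plus the $\epsilon G$ / $\epsilon^2\gamma$ lower-order terms coming from the initial-wealth parameter $\epsilon$ of the base algorithm, yields precisely the claimed inequality.

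The main obstacle is the interlocking choice of the clipping rule, the coefficients $a_t$, and the hint schedule so that three things hold simultaneously: (i) $a_t\in[0,\gamma]$ and $h_t$ nondecreasing, so Protocol~\ref{proto:reg} is legitimately applicable; (ii) the clipped error $\sum_t \|g_t-\tilde g_t\|$ and the quantity $\sum_t a_t$ are both controlled by $O\!\big(\tfrac{G^2}{\gamma}\log(e+G/h_1)\big)$ via a clean telescoping/geometric estimate across the at most $O(\log(G/h_1))$ doubling epochs; and (iii) the gradient-variance term inside \eqref{eqn:regularizedgoal} genuinely equals $V = G^2 + \sum\|g_t\|^2$ rather than some inflated surrogate. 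Part (ii) is where the ``quadratic dependence on $\max_t\|g_t\|$'' that sidesteps the \cite{mhammedi2020lipschitz} lower bound actually enters — the bound $G^2/\gamma$ is \emph{not} linear in $G$, and getting exactly this shape from the doubling argument (as opposed to, say, $G^2/\gamma$ times a polynomial in the number of restarts) is the delicate quantitative step. Once these three are pinned down, the rest is bookkeeping: collect the $\epsilon$-dependent residuals from the base algorithm, expand the logarithmic arguments to the stated form using $h_T=G$ and crude upper bounds like $\sum a_t^2\le\gamma\sum a_t$, and invoke convexity $\ell_t(w_t)-\ell_t(w_\star)\le\langle g_t,w_t-w_\star\rangle$ to convert the linearized bound into the regret bound.
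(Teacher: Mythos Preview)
Your high-level outline is right: reduce to $\calW=\R$, clip gradients at the running hint $h_t$, take $\psi(w)=w^2$, feed the clipped gradients and some coefficients $a_t$ into Protocol~\ref{proto:reg}, and then bound the three pieces of the decomposition. But the crucial step---the actual choice of $a_t$ and the mechanism that produces the $\tfrac{G^2}{\gamma}\log(e+G/h_1)$ term---is missing, and your guesses for both are off in ways that matter.

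First, the paper's $a_t$ does \emph{not} depend on $|w_t|$. It is
\[
a_t \;=\; \gamma\cdot \frac{(h_{t+1}-h_t)/h_{t+1}}{1+\sum_{i=1}^t (h_{i+1}-h_i)/h_{i+1}},
\]
a purely data-driven quantity computable immediately after $g_t$ is revealed. The point of this particular formula is that $\sum_t a_t \le \gamma\log\!\bigl(1+\log(G/h_1)\bigr)$, i.e.\ doubly logarithmic---not $O(\gamma + G^2/(\gamma\cdot\text{stuff}))$ as you suggest. This is what keeps the $\psi(w_\star)\sqrt{\gamma^2+\gamma\sum_t a_t}$ term from the Protocol~\ref{proto:reg} guarantee at $\widetilde O(\gamma\|w_\star\|^2)$.

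Second, the $G^2/\gamma$ term does \emph{not} come from $\sum_t |g_t-\tilde g_t|$; that sum telescopes to at most $G$ and is paired with $|w_\star|$, yielding $G|w_\star|$. The $G^2/\gamma$ term instead comes from the residual $\sum_t\bigl[(h_{t+1}-h_t)|w_t| - a_t w_t^2\bigr]$, bounded pointwise by the Fenchel-type inequality
\[
(h_{t+1}-h_t)|w_t| - a_t w_t^2 \;\le\; \sup_X\,(h_{t+1}-h_t)X - a_t X^2 \;=\; \frac{(h_{t+1}-h_t)^2}{4a_t},
\]
and it is precisely the above choice of $a_t$ that makes $\sum_t \tfrac{(h_{t+1}-h_t)^2}{4a_t}\le \tfrac{G^2}{4\gamma}(1+\log(G/h_1))$. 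There is no doubling or restart epoch structure here; $h_t$ can jump by arbitrary amounts, and the argument is a direct application of $\sum_t p_t/\sum_{i\le t}p_i \le \log(\sum_t p_t/p_0)$ applied twice. This Fenchel-conjugate step, together with the specific $a_t$, is the ``main technical innovation'' the paper refers to, and your proposal does not contain it.

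A minor timing correction: $h_t$ must be available \emph{before} $g_t$ is revealed, so $h_t=\max(h_1,|g_1|,\dots,|g_{t-1}|)$, not $\max_{s\le t}|g_s|$.
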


\subsection{Proof Sketch of Theorem~\ref{thm:finalphalfqone}}\label{sec:overview_from_reg}

Let us suppose for now that we have access to an algorithm that achieves the bound (\ref{eqn:regularizedgoal}) under Protocol~\ref{proto:reg}. Let us call it $\reg$. In this section, we will detail how to use $\reg$ to achieve our desired goal (\ref{eqn:oursrough}) under Protocol~\ref{proto:classic} with $\calW=\R$: in this sketch, we treat all values as \emph{scalars}, and never vectors. Recall that it suffices to consider $\calW=\R$ to achieve the result in general. Given an output $w_t$ from $\reg$, we play $w_t$ and observe the gradient $g_t$. We will then produce a modified gradient $\tilde g_t$, a scalar $a_t$, and a magnitude hint $h_{t+1}$ to provide to $\reg$ such that $\tilde g_t$ and $a_t$ satisfy the constraints of Protocol~\ref{proto:reg}. We will set $\psi(w)= w^2$, and then by careful choice of $\tilde g_t$, $a_t$, and $h_{t+1}$, we will be able to establish Theorem~\ref{thm:finalphalfqone}.

There are two key steps in our reduction. The first step is now a standard trick originally used by \cite{cutkosky2019artificial, mhammedi2019lipschitz, mhammedi2020lipschitz} to reduce the original Protocol~\ref{proto:classic} to Protocol~\ref{proto:reg}. The idea is as follows: let us set $h_t = \max(h_1,|g_1|,\dots,|g_{t-1}|)$ for some given ``initial value'' $h_1\ge 0$. Notice that $h_t$ may be computed before $g_t$ is revealed and that the value $G$ specified in the theorem satisfies $G=h_{T+1}$. Then, upon receiving a gradient $g_t$, we replace $g_t$ with the ``clipped'' gradient $\tilde g_t =(1 \wedge \frac{h_t}{|g_t|})\cdot g_t$. The clipped gradient $\tilde g_t$ satisfies $|\tilde g_t|\le h_t$ by definition. We then pass $\tilde g_t$ in place of $g_t$ to an algorithm that interacts with Protocol~\ref{proto:reg}. It is then relatively straightforward to see that for all $w_\star\in \calW$:
\begin{align*}
    \sum_{t=1}^T  g_t( w_t - w_\star) &\le  \sum_{t=1}^T  \tilde g_t(w_t -w_\star) + \sum_{t=1}^T |\tilde g_t -g_t||w_\star| +\sum_{t=1}^T |\tilde g_t -g_t||w_t|,\\
    &\le  \sum_{t=1}^T  \tilde g_t(w_t -w_\star) +h_{T+1}|w_\star|+\sum_{t=1}^T(h_{t+1}-h_t)|w_t|.
\end{align*}
At this point, prior work \cite{cutkosky2019artificial,mhammedi2020lipschitz} observed that if we could constrain $|w_t|$ to have some chosen maximum value $D$, then the final summation above is at most $h_{T+1} D$. By carefully choosing $D$ in tandem with an algorithm that achieve (\ref{eqn:regularizedgoal}) in the case $\psi(w)=0$, one can achieve the previous ``compromise'' bound (\ref{eqn:currentbest}).

This is where our \emph{second} key step (which is our main technical innovation) comes in. Instead of explicitly enforcing $|w_t|\le D$, we will apply a ``soft constraint'' by adding a regularizer. Surprisingly, we will add a very tiny amount of regularization and yet still achieve meaningful regret bounds.

Recall that we are assuming access to an algorithm that achieves the bound (\ref{eqn:regularizedgoal}) when interacting with Protocol~\ref{proto:reg}. Let us set $\psi(w)=w^2$. Then, observe that for any choices of $a_1,\dots,a_T$:
\begin{align*}
    &\sum_{t=1}^T g_t( w_t - w_\star) \le \sum_{t=1}^T \left(\tilde g_t( w_t -w_\star) + a_t\psi(w_t) - a_t\psi(w_\star)\right) + \sum_{t=1}^T |\tilde g_t -g_t||w_\star| +  w_\star^2\sum_{t=1}^Ta_t \\
    &\qquad\qquad\qquad\qquad+\sum_{t=1}^T \left(|\tilde g_t -g_t||w_t|-a_t w_t^2\right),\\
    &\le  \underbrace{\sum_{t=1}^T \left( \tilde g_t(w_t -w_\star) + a_t\psi(w_t) - a_t\psi(w_\star)\right) }_{\text{controlled by (\ref{eqn:regularizedgoal})}}  +h_{T+1}|w_\star|+ \underbrace{w_\star^2\sum_{t=1}^Ta_t}_{\text{needs small $a_t$}}+\underbrace{\sum_{t=1}^T\left( (h_{t+1}-h_t)|w_t| - a_tw_t^2\right)}_{\text{needs big $a_t$}}.
\end{align*}
From the above decomposition, we see that to make the overall regret small, we would like to choose $a_t$ such that $\sum_{t=1}^T a_t$ is small, but also $a_t$ is large enough that $\sum_{t=1}^T\left((h_{t+1}-h_t)|w_t| - a_tw_t^2\right)$ is also small. It turns out that this is accomplished by the following choice for $a_t$:
\begin{align*}
    a_t = \gamma \cdot \frac{(h_{t+1} - h_t)/h_{t+1}}{1+\sum_{i=1}^t (h_{i+1}-h_i)/h_{i+1}}.
\end{align*}
Here, $\gamma$ is an arbitrary user-specified constant. Notice that the value of $h_{t+1}$ is available immediately after $g_t$ is revealed, so that it is possible to set this value of $a_t$. Moreover, it is clear that $a_t\in[0, \gamma]$ for all $t$.

Let us see how this value for $a_t$ satisfies our desired properties. First, recall the bound $\log(p+q)-\log(q) \ge \frac{p}{p+q}$ for any $p,q>0$, which implies $\sum_{t=1}^T \frac{p_t}{\sum_{i=0}^t p_i}\le \log\left(\sum_{t=1}^T p_t/p_0\right)$ for any sequence of positive numbers $p_0,\dots,p_T$. From this, we have:
\begin{align*}
    \sum_{t=1}^T a_t &= \gamma \sum_{t=1}^T \frac{(h_{t+1} - h_t)/h_{t+1}}{1+\sum_{i=1}^t (h_{i+1}-h_i)/h_{i+1}},\\
    &\le \gamma \log\left(1+ \sum_{t=1}^T (h_{t+1} - h_t)/h_{t+1}\right),\\
    &\le \gamma \log\left(1+\log\left(G/h_1\right)\right).
\end{align*}
Thus, $\sum_{t=1}^T a_t$ is in fact doubly logarithmic in the ratio between $h_1$ and $h_{T+1} = \max(h_1,\max_t |g_t|)=G$.

Next, let us check that $a_t$ is ``large enough'' to make $\sum_{t=1}^T(h_{t+1}-h_t)|w_t| - a_tw_t^2$ small. To this end, observe that:
\begin{align*}
(h_{t+1}-h_t)|w_t| - a_tw_t^2&\le \sup_{X} (h_{t+1} - h_t) X - a_t X^2,\\
&= \frac{(h_{t+1}-h_t)^2}{4a_t},\\
&= \frac{h_{t+1}(h_{t+1} - h_t)}{4\gamma}\left(1+ \sum_{i=1}^t (h_{i+1}-h_i)/h_{i+1}\right),\\
&\le  \frac{h_{T+1}(h_{t+1} - h_t)}{4\gamma}\left(1+ \sum_{i=1}^T (h_{i+1}-h_i)/h_{i+1}\right),\\
&= \frac{G(h_{t+1} - h_t)}{4\gamma}\left(1+ \log(G/h_1)\right),\\
\shortintertext{where we used that $G=h_{T+1}$. Thus, we have:}
\sum_{t=1}^T \left((h_{t+1}-h_t)|w_t| - a_tw_t^2 \right)&\le  \frac{G^2}{4\gamma}\left(1+ \log(G/h_1)\right).
\end{align*}
This shows that $a_t$ is large enough that it is able to counteract the effect of $\sum_{t=1}^T(h_{t+1}-h_t)|w_t|$ (which makes the regret large if $|w_t|$ is large). It is tempting to conclude that the regularizer is somehow ``implicitly constraining'' $w_t$ to be small enough that the regret is bounded. However, it is difficult to envision exactly what constraint is being enforced; notice that to make $\sum_{t=1}^T(h_{t+1}-h_t)|w_t|=\widetilde O(G^2/\gamma)$ by applying some constraint $|w_t|\le D$, we would need to set $D=\widetilde O(G/\gamma)$. However, such an aggresive constraint would surely prevent us from achieving low regret for even relatively moderate $\|w_\star\|\ge G/\gamma$. So, our regularization seems to be doing something more subtle than simply applying a global constraint to the $w_t$'s. Indeed, notice that in the case $|g_t|\le h_1$ for all $t$, we actually have $a_t=0$ and so no constraint effect at all is enforced!

The final step we need to check is bounding $\sum_{t=1}^T \tilde g_t( w_t -w_\star) + a_t\psi(w_t) - a_t\psi(w_\star)$. To this end, we provide in Section~\ref{sec:reg} an algorithm that achieves the following bound, which is slightly weaker than (\ref{eqn:regularizedgoal}):
\begin{align*}
    &\sum_{t=1}^T \left(\tilde g_t( w_t -w_\star)+ a_t\psi(w_t) - a_t\psi(w_\star) \right)\\ 
    &\qquad \le O\left[\epsilon h_T + |w_\star|\sqrt{V\log\left(e + \frac{|w_\star|\sqrt{V}\log^2(T)}{h_1 \epsilon}\right)} + |w_\star|h_T\log\left(e + \frac{|w_\star|\sqrt{V}\log^2(T)}{h_1 \epsilon}\right)\right.\\
    &\left. \qquad\qquad + \epsilon^2 \gamma + w_\star^2\sqrt{S\log\left(e + \frac{|w_\star|^2 \sqrt{S} \log^2(T)}{\gamma \epsilon^2}\right)} + \|w_\star\|^2 \gamma \log\left(e + \frac{|w_\star|^2 \sqrt{S} \log^2(T)}{\gamma \epsilon^2}\right)\right],
\end{align*}
where $S=\gamma^2+\gamma\sum_{t=1}^T a_t$. This bound is weaker than (\ref{eqn:regularizedgoal}) due to the presence of $S$ rather than $\gamma^2+\sum_{t=1}^T a_t^2$. Nevertheless, by our bound on $\sum_{t=1}^T a_t$, we have:
\begin{align*}
    S&\le \gamma^2 + \gamma^2 \log\left(1+\log\left(G/h_1\right)\right)
\end{align*}
so that combining all of the above calculations we establish Theorem~\ref{thm:finalphalfqone}.


Thus, it remains to establish how we can achieve (\ref{eqn:regularizedgoal}), or the slightly weaker (but sufficient) statement above. We accomplish this next in Section~\ref{sec:reg}.

\subsection{Regularized Online Learning via Epigraph Constraints}\label{sec:reg}

Recall that our approach to obtaining (\ref{eqn:regularizedgoal}) is to replace the regularization terms in the loss with constraints. Formally, consider the following protocol:




\vspace{1em}
\noindent \fbox{
  \begin{minipage}{\textwidth}
\begin{protocol}{Epigraph-based Regularized Online Learning for $\calW=\R$.}\label{proto:2d}

\textbf{Input: }Convex function $\psi:\R \to \R$.

For $t=1,\dots,T$:
  \begin{enumerate}
    \item Nature reveals magnitude hint $h_t \geq h_{t-1}$ to the learner.
    \item Learner outputs $(x_t,y_t)\in \R \times \R$ with $y_t\ge\psi(x_t)$.
    \item Nature reveals $\tilde g_t \in [-h_t, h_t]$ and $a_t \in [0, \gamma]$ to the learner.
    \item Learner suffers loss $\tilde g_t x_t + a_t y_t$.
  \end{enumerate}
The learner is evaluated with the \emph{linear regret} $\sum_{t=1}^T  g_t( x_t-w_\star) + a_t(y_t- \psi(w_\star))$. The goal is to obtain:
\begin{align}
    \sum_{t=1}^T  \left( \tilde g_t(x_t - w_\star)  + a_t (y_t - \psi(w_\star)) \right)\underbrace{\le}_{\text{goal}} \widetilde O\left(\|w_\star\|\sqrt{h_T^2+\sum_{t=1}^T \tilde g_t^2} + \psi(w_\star)\sqrt{\gamma^2+\sum_{t=1}^T a_t^2}\right).\label{eqn:2dgoal}
\end{align}
\end{protocol}
  \end{minipage}
}
\vspace{1em}

The key fact about this protocol is the observation that by setting $w_t=x_t$, the bound (\ref{eqn:2dgoal}) immediately implies (\ref{eqn:regularizedgoal}). To see this, recall that $\psi(w)\ge 0$,  $a_t\ge 0$, and $y_t\ge\psi(x_t)=\psi(w_t)$ so that:
\begin{align*}
    \sum_{t=1}^T \left(\langle g_t, w_t -w_\star\rangle + a_t\psi(w_t) -a_t\psi(w_\star)\right)&\le \sum_{t=1}^T\left( \langle g_t, x_t -w_\star\rangle + a_ty_t -a_t\psi(w_\star)\right).
\end{align*}
So, to achieve (\ref{eqn:regularizedgoal}) under Protocol~\ref{proto:reg}, it suffices to achieve the bound (\ref{eqn:2dgoal}) under Protocol~\ref{proto:2d}.

There is one tempting approach that \emph{almost, but not quite}, achieves this goal. One could employ the ``constraint-set reduction'' developed in \cite{cutkosky2018black} that converts an algorithm that operates on the ``unconstrained'' domain $\reals^d\times \R$ to one respecting the constraint $y\ge \psi(x)$. In particular, it is relatively straightforward to build an algorithm that achieves (\ref{eqn:2dgoal}) without requiring $y_t\ge \psi(x_t)$. This unconstrained setting can be handled by the classic ``coordinate-wise updates'' trick in which we run two instances of an algorithm achieving (\ref{eqn:regularizedgoal}) in the special case that $\psi(x)=0$, one of which will output $x_t$ and receive feedback $g_t$, and the other will output $y_t$ and receive feedback $a_t$. Then, by the individual regret bounds on both coordinates, we would have:
\begin{align*}
    \sum_{t=1}^T \left( g_t(x_t -w_\star) + a_t(y_t - \psi(w_\star))\right) &= \sum_{t=1}^T g_t(x_t -w_\star) + \sum_{t=1}^T a_t(y_t - \psi(w_\star)),\\
    &\le \widetilde O\left(\|w_\star\|\sqrt{h_T^2+\sum_{t=1} \tilde g_t^2} + \psi(w_\star)\sqrt{\gamma^2+\sum_{T=1}^T a_t^2}\right).
\end{align*}
Then, one might hope that applying the constraint-set reduction of \cite{cutkosky2018black} would allow us to apply the constraint $\calW$ without damaging the regret bound. Unfortunately, this reduction will modify the feedback $g_t$ and  $a_t$ in such a way that $\sum_{t=1}^T a_t^2$ could become much larger, which makes this approach untenable in general.

Fortunately, it turns out that our particular usage will enforce some favorable conditions on $a_t$ that make the above strategy viable. Specifically, the choices of $\tilde g_t,$ $h_t$ and $a_t$ described in Section~\ref{sec:overview_from_reg} satisfy the condition that $a_t=0$ unless $\|\tilde g_t\|=h_t$. By careful inspection of the constraint-set reduction, it is possible to show that the above strategy achieves a slightly weaker version of (\ref{eqn:2dgoal}):
\begin{align}
    \sum_{t=1}^T  \tilde g_t(x_t - w_\star)  + a_t (y_t - \psi(w_\star))\le \widetilde O\left(\|w_\star\|\sqrt{h_T^2+\sum_{t=1} \tilde g_t^2} + \psi(w_\star)\sqrt{\gamma^2+\gamma \sum_{T=1}^T a_t}\right).\label{eqn:efficient2dbound}
\end{align}
As detailed in Section~\ref{sec:overview_from_reg}, this weaker bound suffices for our eventual purposes. Nevertheless, for the reader interested in a fully general solution, in Appendix~\ref{sec:fullmatrix}, we provide a method for achieving (\ref{eqn:2dgoal}) without restrictions. We do not employ it in our main development because it involves solving a convex subproblem at each iteration and so may be less efficient in some settings. This technique does however involve a small improvement to so-called ``full-matrix'' regret bounds \cite{cutkosky2020better}, and so may be of some independent interest.

%



\section{Generalizations}\label{sec:generalizations}

In Theorem~\ref{thm:finalphalfqone}, we provide a bound that achieves the ``ideal bound'' of (\ref{eqn:classicgoal}) with an extra penalty term of roughly $G^2/\gamma + \gamma \|w_\star\|^2$. It turns out that this penalty is but one point on a frontier of potential choices that are all immediately accessible by simply changing $\psi(w)$ from $\|w\|^2$ to any other symmetric convex function. In particular, by setting $\psi(w)=\|w\|^{1+q}$ for any $q>0$, we have:
\begin{restatable}{Theorem}{thmfinalphalfqany}\label{thm:finalphalfqany}
There exists an online learning algorithm that uses $O(d)$ space  $O(d)$ time per update, takes as input positive scalar values $q$, $\gamma$, $h_1$, and $\epsilon$ and a symmetric convex function $\psi$ and ensures that for any sequence $g_1,g_2,\dots\subset \R^d$, the outputs $w_1,w_1,\dots\subset \R^d$ satisfy for all $w_\star$ and $T$:
\begin{align*}
    \sum_{t=1}^T &\langle g_t, w_t -w_\star\rangle\le O\left[ \epsilon G +\|w_\star||\sqrt{V\log\left(e+\frac{\|w_\star\|\sqrt{V} \log^2(T)}{h_1\epsilon}\right)}\right.\\
    &\qquad\left. + \|w_\star\|G\log\left(e+\frac{\|w_\star\|\sqrt{V} \log^2(T)}{h_1 \epsilon}\right)  \right.\\
    &\qquad \left.+\epsilon^{1+q} \gamma + \gamma \|w_\star\|^{1+q}  \log\left(e+\frac{\|w_\star\|^{1+q}}{\epsilon^{1+q}}\log\left(e+\frac{G}{h_1}\right)\right) +\frac{G^{1+1/q}}{\gamma^{1/q}}\log\left(1+\log\left(\frac{G}{h_1}\right)\right)^{1/q}\right],
\end{align*}
where $G=\max(h_1, \max_{t\in[T]} \|g_t\|)$ and $V=G^2 + \sum_{t=1}^T  \|g_t\|^2$.
\end{restatable}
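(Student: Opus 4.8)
The plan is to re-run the reduction chain behind Theorem~\ref{thm:finalphalfqone} verbatim, replacing the quadratic regularizer $\psi(w)=w^2$ by $\psi(w)=|w|^{1+q}$. Concretely: reduce the general domain to $\calW=\R$ via \cite{cutkosky2018black} (Section~\ref{sec:1dreduction}); apply the clipping reduction $h_t=\max(h_1,|g_1|,\dots,|g_{t-1}|)$, $\tilde g_t=(1\wedge h_t/|g_t|)\,g_t$ to pass from Protocol~\ref{proto:classic} to Protocol~\ref{proto:reg} with $\psi(w)=|w|^{1+q}$; and hand $\tilde g_t$, a coefficient $a_t\in[0,\gamma]$, and the hint $h_{t+1}$ to the epigraph-constraint algorithm of Section~\ref{sec:reg}, which achieves (\ref{eqn:efficient2dbound}) for an arbitrary convex $\psi$ --- nothing in that construction used the quadratic form, the only feature it exploited being $a_t=0$ whenever $|\tilde g_t|<h_t$, which still holds here. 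I would keep the $q=1$ coefficient
\[
a_t=\gamma\cdot\frac{(h_{t+1}-h_t)/h_{t+1}}{1+\sum_{i=1}^t (h_{i+1}-h_i)/h_{i+1}}\in[0,\gamma],
\]
so the bound $\sum_t a_t\le\gamma\log(1+\log(G/h_1))$ (and hence $S=\gamma^2+\gamma\sum_t a_t\le\gamma^2(1+\log(1+\log(G/h_1)))$) from the proof sketch is unchanged.

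With this in place, the four-way split of $\sum_t g_t(w_t-w_\star)$ from Section~\ref{sec:overview_from_reg} carries over with $\psi(w_t)=|w_t|^{1+q}$: the regularized-regret term is controlled by (\ref{eqn:efficient2dbound}); the term $G|w_\star|$ folds into the $\|w_\star\|G\log(\cdots)$ term; the term $|w_\star|^{1+q}\sum_t a_t\le\gamma|w_\star|^{1+q}\log(1+\log(G/h_1))$ folds into the $\gamma\|w_\star\|^{1+q}$ term; and the only genuinely new estimate is the ``needs big $a_t$'' term $\sum_t\big((h_{t+1}-h_t)|w_t|-a_t|w_t|^{1+q}\big)$. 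For this I would invoke the Young/Fenchel identity
\[
\sup_{X\ge 0}\big(cX-aX^{1+q}\big)=\frac{q}{(1+q)^{(q+1)/q}}\cdot\frac{c^{(q+1)/q}}{a^{1/q}},
\]
which reduces to $c^2/(4a)$ at $q=1$. With $c=h_{t+1}-h_t$ and the chosen $a_t$, the ratio $(h_{t+1}-h_t)^{(q+1)/q}/a_t^{1/q}$ collapses --- exactly as in the $q=1$ calculation --- to $\gamma^{-1/q}(h_{t+1}-h_t)\,h_{t+1}^{1/q}D_t^{1/q}$, where $D_t=1+\sum_{i\le t}(h_{i+1}-h_i)/h_{i+1}$. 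Bounding $h_{t+1}\le G$, $D_t\le 1+\log(G/h_1)$ and $\sum_t(h_{t+1}-h_t)\le G$ then gives $\sum_t\big((h_{t+1}-h_t)|w_t|-a_t|w_t|^{1+q}\big)=O\!\big(\gamma^{-1/q}G^{1+1/q}(\log(e+G/h_1))^{1/q}\big)$, which is the final $\frac{G^{1+1/q}}{\gamma^{1/q}}$ term (up to the exact shape of the logarithm, which is governed by the bound on $D_T$).

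It remains to expand (\ref{eqn:efficient2dbound}) with $\psi(w_\star)=|w_\star|^{1+q}$ and $S$ as above, running the coordinate-wise construction of Section~\ref{sec:reg} with internal scale parameter $\epsilon$ on the $x$-coordinate (yielding $\epsilon G$, $\|w_\star\|\sqrt{V\log(\cdots)}$ and $\|w_\star\|G\log(\cdots)$ as for $q=1$) and internal scale $\epsilon^{1+q}$ on the $y$/regularizer coordinate, chosen so its comparator scale matches $\psi(w_\star)=|w_\star|^{1+q}$; this produces the additive $\epsilon^{1+q}\gamma$ term and turns the relevant logarithmic arguments into $\|w_\star\|^{1+q}/\epsilon^{1+q}$, as in the statement. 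Adding the four pieces and simplifying nested logarithms gives the bound, and the $O(d)$ space/time follow since, after the $1$-d reduction, each of the $d$ coordinates just runs the Section~\ref{sec:reg} algorithm, whose per-step overhead is a projection onto the $2$-dimensional epigraph $\{(x,y):y\ge|x|^{1+q}\}$. The step I expect to need the most care --- as opposed to being a literal transcription of the $q=1$ proof --- is verifying that the Section~\ref{sec:reg} construction (coordinate-wise algorithm composed with the constraint-set reduction of \cite{cutkosky2018black}) is genuinely insensitive to the choice of symmetric convex $\psi$, so that (\ref{eqn:efficient2dbound}) and the $O(d)$ per-step cost survive even though $|x|^{1+q}$ is non-smooth for $q<1$; this needs no new idea, only a careful re-reading of that section.
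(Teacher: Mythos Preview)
Your proposal is correct and matches the paper's approach: the result is obtained by instantiating Theorem~\ref{thm:fullreductionhighd} with $\psi(x)=|x|^{1+q}$, using exactly the same $a_t$ and the Fenchel/Young computation you describe (cf.\ the special-case clause in the proof of Theorem~\ref{thm:fullreduction}), and setting $\epsilonpsi=\psi(\epsilon)=\epsilon^{1+q}$ in Corollary~\ref{cor:constrainpzero}. One cosmetic inaccuracy: the $O(d)$ cost comes from a \emph{single} $1$-d magnitude learner coupled with an $O(d)$-per-step direction learner (Algorithm~\ref{alg:1dreduction}), not from $d$ coordinate-wise instances.
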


Finally, it is also the case that the logarithmic terms in our bounds can be adjusted to remove the $T$ dependencies, at the cost of increasing the regret in the case $w_\star=0$. This is achieved simply by adjusting the logarithmic factors achieved by the algorithm for regularized online learning (Protocol~\ref{proto:reg}) in a manner similar to other recent works in unconstrained online optimization \cite{zhang2022pde, zhang2023improving, jacobsen2022parameter}. Formally, we can achieve:
\begin{restatable}{Theorem}{thmfinalpzeroqany}\label{thm:finalpzeroqany}
There is an online learning algorithm that requires $O(d)$ space and takes $O(d)$ time per update, takes as input positive scalar values $q$, $\gamma$, $h_1$, and $\epsilon$ and a symmetric convex function $\psi$ and ensures that for any sequence $g_1,g_2,\dots\subset \R^d$, the outputs $w_1,w_1,\dots\subset \R^d$ satisfy for all $w_\star$ and $T$:
\begin{align*}
    &\sum_{t=1}^T \langle g_t, w_t -w_\star\rangle\le O\left[ \epsilon \sqrt{V} +\|w_\star||\sqrt{V\log\left(e+\frac{\|w_\star\|}{ \epsilon}\right)}+ \|w_\star\|G\log\left(e+\frac{\|w_\star\|}{\epsilon}\right)  \right.\\
    &\qquad \left.+\epsilon^{1+q} \gamma\sqrt{\log \left(1+\log\left(\frac{G}{h_1}\right)\right)} + \gamma \|w_\star\|^{1+q}  \log\left(e+\frac{\|w_\star\|^{1+q}}{\epsilon^{1+q}}\log\left(e+\frac{G}{h_1}\right)\right) \right.\\
    &\qquad\left.+\frac{G^{1+1/q}}{\gamma^{1/q}}\log\left(1+\log\left(\frac{G}{h_1}\right)\right)^{1/q}\right]
\end{align*}
where $G=\max(h_1, \max_t \|g_t\|)$ and $V=G^2 + \sum_{t=1}^T  \|g_t\|^2$.
\end{restatable}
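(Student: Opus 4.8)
The plan is to reuse the pipeline behind Theorem~\ref{thm:finalphalfqany} verbatim and change only the online-learning subroutine that solves Protocol~\ref{proto:reg}. Recall the architecture: first I reduce the general $\calW\subseteq\R^d$ problem to $\calW=\R$ via the black-box reduction of \cite{cutkosky2018black} (Section~\ref{sec:1dreduction}); then, as in Section~\ref{sec:overview_from_reg}, I clip the gradients with $h_t=\max(h_1,|g_1|,\dots,|g_{t-1}|)$, set $\psi(w)=\|w\|^{1+q}$ and $a_t=\gamma\,\frac{(h_{t+1}-h_t)/h_{t+1}}{1+\sum_{i\le t}(h_{i+1}-h_i)/h_{i+1}}$, and pass $(\tilde g_t,a_t,h_{t+1})$ to an algorithm for Protocol~\ref{proto:reg} realised through Protocol~\ref{proto:2d} --- one copy of the $\psi=0$ base algorithm on the $x$-coordinate with hints $h_t$, one on the $y$-coordinate with the trivial hint $\gamma$, composed with the constraint-set reduction of \cite{cutkosky2018black} to enforce $y\ge\psi(x)$ exactly as in the derivation of (\ref{eqn:efficient2dbound}). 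Every step of this reduction that does not see the subroutine's own regret is untouched: $\sum_t|\tilde g_t-g_t||w_\star|\le G\|w_\star\|$, $\sum_t a_t\le\gamma\log(1+\log(G/h_1))$, and the Young-type bound $\sum_t\big((h_{t+1}-h_t)|w_t|-a_t|w_t|^{1+q}\big)=O\big(G^{1+1/q}\gamma^{-1/q}\log(1+\log(G/h_1))^{1/q}\big)$. So the $G\|w_\star\|$-, $\gamma\|w_\star\|^{1+q}$-, and $G^{1+1/q}$-type summands of the target bound are already in place, and only the subroutine-dependent summands need to change.

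Concretely, I would replace Algorithm~\ref{alg:base} by a variant sitting at a different point of the logarithmic Pareto frontier described in Section~\ref{sec:reg}: for a scale $\eta>0$ and magnitude hints $h_t$, it should guarantee
\begin{align*}
\sum_{t=1}^T \tilde g_t(w_t-w_\star)\le O\!\left(\eta\sqrt{h_T^2+\textstyle\sum_{t\le T}\tilde g_t^2}\;+\;|w_\star|\sqrt{\Big(h_T^2+\textstyle\sum_{t\le T}\tilde g_t^2\Big)\log\!\Big(e+\tfrac{|w_\star|}{\eta}\Big)}\;+\;|w_\star|\,h_T\log\!\Big(e+\tfrac{|w_\star|}{\eta}\Big)\right).
\end{align*}
Relative to the bound used for Theorems~\ref{thm:finalphalfqone} and \ref{thm:finalphalfqany}, the $w_\star=0$ penalty is inflated from $\eta h_T$ to $\eta\sqrt{h_T^2+\sum\tilde g_t^2}$, and in exchange the comparator logarithm loses both its $\log^2(T)$ factor and its $\sqrt V/h_1$ factor, collapsing to $\log(e+|w_\star|/\eta)$ --- the standard effect of scaling the coin-betting wealth/potential adaptively by $\sqrt{h_t^2+\sum_{i<t}\tilde g_i^2}$ rather than by a fixed, time-growing quantity, which also keeps the update in closed form and hence $O(d)$-time; this is precisely the mechanism of \cite{zhang2022pde,zhang2023improving,jacobsen2022parameter}. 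Instantiating $\eta=\epsilon$ on the $x$-coordinate turns the old $\epsilon h_T$ term into $\epsilon\sqrt{h_T^2+\sum\tilde g_t^2}=O(\epsilon\sqrt V)$ and every surviving $x$-logarithm into $\log(e+\|w_\star\|/\epsilon)$; instantiating $\eta=\epsilon^{1+q}$ on the $y$-coordinate --- whose ``hint'' is $\gamma$ and whose variation the constraint-set reduction only inflates to $\gamma\sum_t a_t$, using that $a_t=0$ unless $|\tilde g_t|=h_t$ --- turns the old $\epsilon^{1+q}\gamma$ penalty into $\epsilon^{1+q}\sqrt{\gamma^2+\gamma\sum_t a_t}=O\big(\epsilon^{1+q}\gamma\sqrt{\log(1+\log(G/h_1))}\big)$. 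Substituting these into the decomposition of Section~\ref{sec:overview_from_reg} and simplifying the logarithms as in the proof of Theorem~\ref{thm:finalphalfqany} then yields the claimed bound.

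The hard part will be the analogue of (\ref{eqn:efficient2dbound}) for the new subroutine: re-deriving the efficient two-coordinate bound when the base algorithm uses the $T$-free, $\sqrt{V_t}$-normalised potential instead of the one analysed in Section~\ref{sec:reg}. The constraint-set reduction of \cite{cutkosky2018black} perturbs the feedback handed to each coordinate, and I would need to verify that this perturbation neither reintroduces a $T$- or $\sqrt V$-dependence inside the comparator logarithm nor breaks the closed-form, $O(d)$-time update, and that the structural fact ``$a_t=0$ unless $|\tilde g_t|=h_t$'' still converts the $\sum_t a_t^2$ slack into the harmless $\gamma\sum_t a_t$. Granting this inspection and the modified base algorithm --- both of which mirror \cite{zhang2022pde,zhang2023improving,jacobsen2022parameter} and the analysis already carried out in Section~\ref{sec:reg} --- the remainder is exactly the bookkeeping of Theorems~\ref{thm:finalphalfqone} and \ref{thm:finalphalfqany}.
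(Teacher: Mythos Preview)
Your proposal is correct and follows exactly the paper's route: the paper obtains this theorem by running the same reduction pipeline (Algorithm~\ref{alg:fullyunconstrainedhighd}) with $\psi(w)=|w|^{1+q}$ but instantiating the base learner as Algorithm~\ref{alg:base} with the parameter $p=0$ instead of $p=1/2$, which by Theorem~\ref{thm:base} yields precisely the $\eta\sqrt{V}$-type penalty and $T$-free $\log(e+|w_\star|/\eta)$ logarithms you describe. The ``hard part'' you anticipate is in fact already done generically --- Theorem~\ref{thm:specialregret} and the constraint-set analysis of Section~\ref{sec:efficientreg} are parameterized over $p$, so the $p=0$ instantiation (Corollary~\ref{cor:constrainphalf}) plugs directly into Theorem~\ref{thm:fullreductionhighd} with no re-derivation of the two-coordinate bound required.
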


\section{Lower Bounds}\label{sec:lowerbounds}

In this section, we show that the results of Theorems~\ref{thm:finalphalfqany} and Theorem~\ref{thm:finalpzeroqany} are tight. In fact, we show a stronger result that generalizes our extra penalty term from $G^2/\gamma+\gamma\|w_\star\|^2$ to $\gamma \psi(\|w_\star\|) + \gamma \psi^\star(G/\gamma)$ for any symmetric convex function $\psi$, where $\psi^\star(x) = \sup_z xz-\psi(z)$ is the Fenchel conjugate of $\psi$. We also provide matching upper bounds (up to a logarithmic factor) in Theorem~\ref{thm:fullreductionhighd}.

\begin{restatable}{Theorem}{thmlowerbound}\label{thm:lowerbound}
Suppose $\psi:\R\to \R$ is  convex, symmetric, differentiable, non-negative, achieves its minimum at $\psi(0)=0$, and $\psi(x)$ is strictly increasing for non-negative $x$.  
Further suppose that for any $X,Y, Z>0$, there is some $\tau$ such that for all $T\ge \tau$,
\begin{align*}
    \exp(T) -1 &\ge X\sqrt{T} \nabla \psi^\star(YT)\\
    X\psi^\star(Y TZ) &\ge TZ
\end{align*}
where $\psi^\star(z)=\sup zx-\psi(x)$ is the Fenchel conjugate of $\psi$.
Let $h_1>0$, $\gamma>0$ and $\epsilon>0$ be given. 

For any online learning algorithm $\calA$ interacting with Protocol~\ref{proto:classic} with $\calW=\R$, there is a $T_0$ such that for any $T\ge T_0$, there is a sequence of gradients $g_1,\dots,g_T$ and a $w_\star$ such that the outputs $w_1,\dots,w_T$ of $\calA$ satisfy:
\begin{align*}
    \sum_{t=1}^T g_t(w_t-w_\star) \ge \epsilon  G+\frac{\gamma}{8}\psi^\star(G/\gamma)+\frac{\gamma}{4}\psi(w_\star)+\frac{G|w_\star|}{4}\sqrt{T \log\left(1+\frac{G|w_\star|\sqrt{T}}{h_1\epsilon}\right)},
\end{align*}
where $G=\max(h_1,g_1,\dots,g_T)$.
In particular, with $\psi(x) = x^{1+q}$ for any $q>0$, we can ensure:
\begin{align*}
    \sum_{t=1}^T g_t(w_t-w_\star) \ge \Omega\left[\epsilon  G+\frac{ G^{1+1/q}}{\gamma^{1/q}}+\gamma|w_\star|^{1+q}+G|w_\star|\sqrt{T \log\left(1+\frac{G|w_\star|\sqrt{T}}{h_1\epsilon}\right)}\right].
\end{align*}
\end{restatable}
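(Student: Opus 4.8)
### Proof Proposal for Theorem~\ref{thm:lowerbound}

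\textbf{Overall strategy.} The plan is to prove the lower bound by combining three essentially independent hard instances, each forcing one of the three nontrivial terms in the bound (the $\epsilon G$ term is typically handled by a degenerate instance where all gradients vanish or point in one direction). The standard template for unconstrained online learning lower bounds is a reduction to coin-betting / a probabilistic argument: one constructs a randomized gradient sequence, lower-bounds the expected regret against a cleverly chosen (random or fixed) comparator, and then derandomizes to extract a single bad sequence. I would first isolate the three phenomena: (i) the $\frac{G|w_\star|}{4}\sqrt{T\log(\cdots)}$ term, which is the classical ``$\sqrt{T\log}$'' parameter-free lower bound of \cite{mcmahan2012no, orabona2013dimension} adapted to the magnitude-hint setting with $h_1,\epsilon$; (ii) the $\frac{\gamma}{4}\psi(w_\star)$ term, which penalizes large comparators because of the implicit regularization; and (iii) the $\frac{\gamma}{8}\psi^\star(G/\gamma)$ term, which is the novel part and captures the cost of \emph{not} shrinking the iterates.

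\textbf{Key steps.} First I would establish a one-step or few-step ``regularization tradeoff'' lemma: using the hint/clipping structure of Protocol~\ref{proto:classic}, show that against an adversary that suddenly reveals a large gradient (magnitude $\approx G$), any algorithm either pays a lot on that round because its iterate $w_t$ is large, or has kept $w_t$ small and hence pays $\Omega(\gamma\psi^\star(G/\gamma))$ against a well-chosen comparator on the ``warm-up'' rounds where gradients were tiny (bounded by $h_1$). This is essentially a Fenchel--Young duality argument: $\sup_X (G - \gamma\,\text{something})X - \gamma\psi(X)$-type balancing, mirroring the upper-bound calculation $(h_{t+1}-h_t)|w_t| - a_t w_t^2 \le (h_{t+1}-h_t)^2/(4a_t)$ but run in reverse. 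Second, for the $\sqrt{T\log}$ term, I would invoke the known construction: i.i.d.\ Rademacher-scaled gradients of magnitude $G$ over $T$ rounds, with comparator $w_\star$ chosen proportional to the random walk position; the $\log(1 + G|w_\star|\sqrt{T}/(h_1\epsilon))$ factor arises from the constraint that an algorithm guaranteeing $\epsilon G$ regret at $w_\star=0$ cannot move too fast initially, exactly as in \cite{mcmahan2012no}. Third, the $\gamma\psi(w_\star)$ term follows from a simple instance: let the adversary play $a_t$-type behavior forcing the effective comparator penalty; concretely feed gradients so that the regularized regret against a large $w_\star$ is forced negative unless the learner suffers $\gamma\psi(w_\star)$. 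Finally, I would stitch: run these three phases sequentially (warm-up phase of length $o(T)$ to set up the $\psi^\star$ term, then the random-walk phase, with the $\psi(w_\star)$ penalty extracted from the comparator choice), verify the stated asymptotic conditions on $\psi^\star$ (the two displayed inequalities $\exp(T)-1 \ge X\sqrt{T}\nabla\psi^\star(YT)$ and $X\psi^\star(YTZ)\ge TZ$) are precisely what is needed to guarantee the phases don't interfere and that $T_0$ exists, and conclude by additivity of regret (a single comparator $w_\star$ can be chosen to be simultaneously near-worst for all phases since the phases can be arranged on disjoint ``coordinates'' of the gradient magnitudes or simply because regret is additive over rounds and we lower bound by the dominant phase up to constants). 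The specialization to $\psi(x)=x^{1+q}$ is then a direct computation: $\psi^\star(z) = c_q z^{(1+q)/q}$, so $\gamma\psi^\star(G/\gamma) = \Theta(G^{1+1/q}/\gamma^{1/q})$, and one checks the two asymptotic hypotheses hold since $\nabla\psi^\star$ grows polynomially while $\exp(T)$ dominates.

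\textbf{Main obstacle.} The hardest part will be the novel $\frac{\gamma}{8}\psi^\star(G/\gamma)$ term and, specifically, making the reduction honest about the \emph{information flow}: the algorithm sees the magnitude hints $h_t$ in advance, so the adversary cannot simply ``surprise'' it with a large gradient without having already raised $h_t$. The resolution is that raising $h_t$ is exactly what lets the adversary also raise $a_t$ (in the implicit regularized view), and the delicate point is to show that the algorithm faces a genuine dilemma \emph{at the time it must commit to $w_t$}, before knowing whether the large gradient will actually arrive with the sign that hurts. I expect this to require a careful two-round (or $O(1)$-round) adversary argument with a randomized sign, where the Fenchel conjugate $\psi^\star$ emerges as the value of the resulting minimax game; getting the constant $1/8$ (versus $1/4$ for the other terms) presumably reflects the factor-of-two loss from this randomization/derandomization step. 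The secondary technical nuisance is verifying that the abstract asymptotic hypotheses on $\psi$ are genuinely used — they guarantee that the ``warm-up'' phase needed to build up the $\psi^\star$ penalty is short enough ($o(T)$) that the random-walk phase still sees $\Omega(T)$ rounds, which is what the first inequality encodes, while the second inequality ensures the $\psi^\star$ term is not dominated by the $\epsilon G$ term for large $T$.
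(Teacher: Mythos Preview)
Your approach diverges substantially from the paper's, and in a way that creates a real gap. The paper does \emph{not} stitch three independent hard instances; instead it uses a single adaptive deterministic adversary with a threshold trigger. The adversary feeds the constant gradient $g_t = h_1$ every round while monitoring the iterate. If at some round $\tau+1$ the iterate drops below the moving threshold $-2\epsilon - \nabla\psi_\gamma^\star(2h_1\tau)$, the adversary fires one large gradient $g_{\tau+1} = -2\tau h_1$ and then stops. The analysis is a two-case split: if the trigger ever fires, take $w_\star = 0$ and $G = 2\tau h_1$, and the single large gradient plus the preceding drift yields the $\epsilon G + \tfrac{\gamma}{2}\psi^\star(G/\gamma)$ terms directly via the Fenchel identity $\nabla\psi_\gamma^\star(\theta)\cdot\theta - \psi_\gamma(\nabla\psi_\gamma^\star(\theta)) = \psi_\gamma^\star(\theta)$. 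If the trigger never fires, take $G = h_1$ and $w_\star = -2\nabla\psi_\gamma^\star(2Th_1)$; then $\sum g_t(w_t - w_\star) \ge 2Th_1 \nabla\psi_\gamma^\star(2Th_1) - \tfrac{1}{2}\psi_\gamma^\star(2Th_1) - 2\epsilon Th_1$, and the two asymptotic hypotheses are used precisely here: the first to rewrite $h_1 T$ as $G\sqrt{T\log(1 + G|w_\star|\sqrt{T}/(h_1\epsilon))}$ (since $|w_\star| = 2\nabla\psi_\gamma^\star(2Th_1)$), and the second to absorb the $-2\epsilon Th_1$ term into $\psi_\gamma^\star(2Th_1)$.

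The gap in your proposal is the stitching step. You need \emph{one} sequence and \emph{one} $w_\star$ that achieves all four terms simultaneously, but your three phases want incompatible comparators: the $\psi^\star$ phase wants $w_\star = 0$ (to punish a large iterate hit by a surprise gradient), the random-walk phase wants $w_\star$ proportional to the walk's endpoint, and the $\psi(w_\star)$ phase wants $|w_\star|$ large. In 1D there are no ``disjoint coordinates'' to exploit, and ``lower bound by the dominant phase'' gives you the maximum of the three terms, not their sum with the stated constants. The paper sidesteps this entirely: it never forces all terms to be simultaneously large. Instead, the case split means that in Case~1 the last two terms vanish (since $w_\star = 0$), and in Case~2 the $\psi^\star(G/\gamma)$ term is just $\psi^\star(h_1/\gamma)$ (since $G = h_1$), which is legitimately small but still correct. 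The formula holds in both cases, just with different $(G, w_\star)$.

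Two smaller corrections. First, your ``main obstacle'' about hints is a misreading: Theorem~\ref{thm:lowerbound} concerns Protocol~\ref{proto:classic}, which has no magnitude hints; the adversary \emph{can} surprise the algorithm with a large gradient, and that is exactly what the trigger does. Second, your reading of the asymptotic hypotheses is off. They are not about making a warm-up phase $o(T)$; they are used only in the never-triggered case to convert the raw quantity $Th_1\nabla\psi_\gamma^\star(2Th_1)$ into the desired $\sqrt{T\log}$ form and to ensure the $\psi^\star$ contribution dominates the $-2\epsilon Th_1$ slack.
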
 
The conditions on $\psi^\star$ in this bound are relatively mild. The first condition says that the gradient $\nabla \psi^\star$ should not grow exponentially fast. The second condition says that $\psi^\star$ should grow faster than some linear function. So, any polynomial of degree greater than 1 satisfies these conditions.

We note that this lower bound leaves something to be desired in terms  of the quantification of the terms. Here, the value of $G$ and $\|w_\star\|$ depends on the algorithm $\calA$. This is a critical factor in the proof; roughly  speaking, the proof operates by providing the algorithm with a constant gradient $g_t=h_1$ at every round. Then, if the iterates $w_t$ grow in some sense ``quickly'', we ``punish'' the algorithm with a very large negative gradient, which causes high regret if $w_\star=0$. Alternatively, if the iterates do not grow  quickly, then we show that the regret is large for some $w_\star\gg 1$. This approach is a common  idiom for lower bounds in the fully unconstrained setting \cite{cutkosky2017online, mhammedi2020lipschitz}.

However, a much better bound might be possible; ideally, it would hold that for \emph{any} $G$ and $\|w_\star\|$ and algorithm $\calA$,  we can find a sequence of gradients $g_t$ that enforces our desired regret. Indeed, when either $\|w_\star\|$ or $\max_t \|g_t\|$ is provided to the algorithm, the lower bounds available do take this form \cite{abernethy2008optimal, orabona2013dimension}. We leave as an open question whether it is possible to do so in our setting.
\section{Discussion}\label{sec:discussion}

We have provided a new online learning algorithm that achieves a near-optimal regret bound (\ref{eqn:oursrough}). Our algorithm is ``fully unconstrained'', or ``fully parameter-free'', in the sense that we achieve a near-optimal regret bound without requiring bounds on the gradients $g_t$ or the comparison point $w_\star$. Prior work in this setting \cite{cutkosky2017online,cutkosky2019artificial,mhammedi2020lipschitz, jacobsen2022parameter,zhang2023improving} achieve bounds that are technically incomparable, but may be aesthetically less desirable, as detailed in the discussion following (\ref{eqn:currentbest}). Nevertheless, ideally we would have a unified algorithm framework capturing both our old and new bounds. It is an open question whether more careful choice of regularization in our approach could achieve this goal. 

Our algorithm takes as input parameters $\epsilon$, $h_1$  and $\gamma$. All of these have a pleasingly small impact on the regret bound. $\epsilon$ and $h_1$ can be interpreted as very rough estimates of $\|w_\star\|$  and $G$. As these quantities go to zero, the regret bound increases only logarithmically. Moreover, these estimates can be too high by a factor of $\sqrt{T}$ while still maintaining $\tilde O(\|w_\star\|G\sqrt{T})$ regret. The quantity $\gamma$ represents an estimate of $G/\|w_\star\|$. As discussed in Section~\ref{sec:intro}, this value does not appear in any term that has a $T$-dependence in the regret bound and so also has a very mild impact on the regret.

While our bound has several intuitively desirable characteristics, it is missing one important property: our bound suffers from an issue highlighted by \cite{mhammedi2020lipschitz} called the ``range-ratio'' problem. That is, the bound depends on the ratio $G/h_1$, which could be very large if the losses are rescaled by some arbitrary large number without rescaling $h_1$. This issue is at the heart of how we are able to sidestep the lower-bound of \cite{mhammedi2020lipschitz}, which appears to apply to all algorithms that do not suffer from the range-ratio problem.

\subsection{Other forms of Unconstrained Online Learning}\label{sec:otherforms}
Our results focus on the case that we have no prior bounds on the value of $\|w_\star\|$ or $\|g_t\|$, and our bounds eventually depend on $\max_t \|g_t\|$. One might worry that this is too conservative in some settings. For example, it might be that $g_t$ is known to be a random variable with bounded mean $\|\E[g_t]\|\le G$ and variance $\text{Var}(g_t)\le \sigma^2$ for some known $G$ and $\sigma$. In this case, $\max_t \|g_t\|$ might become large even though intuitively our regret should still depend only on $G +\sigma$. This is the setting considered by several prior work on online learning with unconstrained domains \cite{jun2019parameter, van2019user, zhang2022parameter}. Under various assumptions, these results all achieve an in-expectation regret bound of $\E[\regret_T(w_\star)]\le \widetilde{O}(\|w_\star\| (G+\sigma)\sqrt{T})$. 

In fact, our results come close to this ideal even without knowledge of $G$. For example, \cite{jun2019parameter, van2019user} study the case of sub-exponential $g_t$ that satisfy $\sup_{\|a\|\le 1} \E[\exp(\beta\langle  g_t - \E[g_t], a\rangle)]\le \exp(\beta^2\sigma^2/2)$ for all $|\beta| \le 1/b$ for some $b>0$. In this case, for 1-dimensional $g_t$, we have $\E[\max_t g_t^2] \le \widetilde{O}( G^2 + \sigma^2)$, and so in expectation we achieve $\widetilde{O}(\|w_\star\| (G+\sigma)\sqrt{T} + G^2 + \sigma^2 + \|w_\star\|^2)$ (the extension from 1-d to arbitrary dimensions can then be achieved via the black-box reduction of \cite{cutkosky2018black}). However, in the case that $g_t$ has some heavy-tailed distribution such as studied by \cite{zhang2022parameter}, it is less clear that our bounds achieve the desired result out-of-the box. Discovering how to achieve this is an interesting direction for future study.

\subsection{Parameter-free Algorithms and Stochastic Convex Optimization}\label{sec:pf}
As discussed in the introduction, a common motivation for the study of online learning is its immediate application to stochastic convex optimization through various online-to-batch conversions. The classic conversion of \cite{cesa2004generalization}, as well as a few more modern results \cite{cutkosky2019anytime,kavis2019unixgrad,defazio2023and} all show that if $g_t$ is the output of a stochastic gradient oracle for a convex function $F$, then for any $w_\star\in \argmin F$:\footnote{The difference between these conversions lies in where the stochastic gradients $g_t$ are computed.}
\begin{align*}
\E\left[F\left(\tfrac{\sum_{t=1}^T w_t}{T}\right)- F(w_\star)\right]&\le \frac{\E[\regret_T(w_\star)]}{T}
\end{align*}
If $\|g_t\|\le G$ with probability 1 (for an unknown $G$), our Theorem~\ref{thm:finalphalfqone} immediately implies $\E\left[F\left(\tfrac{\sum_{t=1}^T w_t}{T}\right)- F(w_\star)\right]\le \widetilde{O}\left( \frac{\|w_\star\|G}{\sqrt{T}}+\frac{G^2/\gamma + \gamma \|w_\star\|^2}{T}\right)$. The first term is the optimal rate for stochastic convex optimization that can be achieved via SGD with learning rate $\eta = \frac{\|w_\star\|}{G\sqrt{T}}$ if $G$ and $\|w_\star\|$ are known ahead of time, and the second term is a lower-order ``penalty'' for not having up-front knowledge of these quantities.

Convergence results that match that of optimally tuned SGD are often called ``parameter-free'' (the parameter in question is the learning rate). As mentioned in the introduction, there has been a long line of works that attempt to achieve this goal by matching the regret bound (\ref{eqn:classicgoal}), which can then be applied to the stochastic setting via an online-to-batch conversion. More recent work on parameter-free optimization has considered the stochastic case \cite{carmon2022making, ivgi2023dog}, or deterministic case \cite{defazio2023learning} directly without passing through a general regret bound. Many of these algorithms have shown significant empirical promise, even for non-convex deep learning tasks \cite{orabona2017training, defazio2023learning, ivgi2023dog, mishchenko2023prodigy, cutkosky2023mechanic}. Almost all of these results require apriori knowledge of the value $G$\footnote{A few exceptions achieve the prior bound (\ref{eqn:currentbest}) \cite{ cutkosky2019artificial, mhammedi2020lipschitz, jacobsen2022parameter, zhang2024improving}.}

To place our results in this context, let us focus on the case of a known $G$ value. In this case, \cite{carmon2022making} show that by eschewing regret analysis and focusing specifically on the stochastic setting, it is possible to achieve a high-probability guarantee that improves upon the logarithmic factors achieved by our result, and so there seems to be something lost by focusing on regret bounds. However, in a surprising counterpoint, \cite{carmon2024price} shows that if one is interested in an \emph{in-expectation} result, then there is actually no way to improve upon the logarithmic factors achieved via online-to-batch conversion when applied to parameter-free regret bounds. Thus, our in-expectation stochastic convergence rate is optimal even up to logarithmic factors, while we also do not require prior knowledge of $G$.

Finally, let us evaluate the optimality of our bound in the stochastic setting while accounting for the fact that our methods do not get to know either $G$ or $\|w_\star\|$. Here, we can again make use of the lower bounds developed by \cite{carmon2024price}. Consider the class of stochastic convex optimization objectives with Lipschitz constant $G$ between $1$ and $L$ and $\|w_\star\|\in[1, R]$. The ``price of adaptivity'' as defined by \cite{carmon2024price} is the maximum over this class of the ratio between the convergence guarantee of an algorithm that does not know $\|w_\star\|$ and $G$ with respect to the minimax optimal convergence guarantee for an algorithm that does know these values (which is $RG/\sqrt{T}$). We achieve a price of adaptivity of $\widetilde O(1+\max(L,R)/\sqrt{T})$. The best-known lower bound for this class is $ \Omega(1+\min(L,R)/\sqrt{T})$ \cite{carmon2024price}. Thus, there is a gap here---although we provide matching lower bounds for the \emph{online} setting, it is possible that in the \emph{stochastic} setting, one can improve our bounds. That said, the stochastic lower bound is derived for algorithms that are given the ranges $[1,L]$ and $[1,R]$. Our algorithm does not use this information and it is also plausible that without such knowledge the lower bound itself would improve.


\printbibliography

\appendix
\clearpage 

\section{Proof of Lower Bound}\label{sec:proof_lowerbound}
We restate and prove Theorem \ref{thm:lowerbound}.
\thmlowerbound*

\begin{proof}
First, let us define $\psi_\gamma(x) = \gamma \psi(x)$. Let $\nabla \psi_\gamma(x)$ and $\nabla \psi_\gamma^\star(\theta)$ indicate the derivatives of $\psi_\gamma$ and $\psi_\gamma^\star$. The following properties are standard facts about the Fenchel conjugate (see e.g.~\cite{hiriart2004fundamentals}):
\begin{align*}
    \psi_\gamma^{\star}(\theta) &= \gamma \psi^\star(\theta/\gamma),\\
    \psi_\gamma^\star(0)&=0,\\
    \psi_\gamma^\star(x)&\ge 0\text{ for all }x,\\
    \psi_\gamma(x) &= \nabla \psi_\gamma(x) \cdot x - \psi_\gamma^\star(\nabla \psi_\gamma(x)),\\
    \psi_\gamma^\star(\theta) &= \nabla \psi_\gamma^\star(\theta) \cdot \theta -\psi_\gamma(\nabla \psi_\gamma^\star(\theta)).
\end{align*}
Moreover, $\nabla \psi_\gamma$ and $\nabla \psi_\gamma^\star$ are inverses of each other and are odd functions, and $\psi_\gamma^\star(x)$ is strictly increasing for non-negative $x$.

Next, observe that for any $X,Y, X',Y',Z'$, there is a $\tau$ such that for all $T\ge \tau$:
\begin{align}
    \exp(T) -1 &\ge X\sqrt{T} \nabla \psi^\star(YT),\label{eqn:hypothesis1}\\
    X'\psi^\star(Y' TZ') &\ge TZ'.\label{eqn:hypothesis2}
\end{align}
To see this, observe that by assumption, there is a $\tau_1$ such that (\ref{eqn:hypothesis1}) holds for all $T\ge \tau_1$, and also a $\tau_2$ such that (\ref{eqn:hypothesis2}) holds for all $T\ge \tau_2$, so we may take $\tau=\max(\tau_1,\tau_2)$ to achieve both simultaneously.

From this, we see that there is some $T_0$ such that for all $T\ge T_0$:
\begin{align}
    \exp\left(T\right)-1 &\ge \frac{2\sqrt{T}}{\epsilon} \nabla \psi^\star(2Th_1/\gamma)=  \frac{2\sqrt{T}}{ \epsilon} \nabla \psi_\gamma^\star(2Th_1)\label{eqn:Tone}\\
T h_1 &\le \frac{\gamma}{32\epsilon } \psi^\star\left(\frac{2T h_1}{\gamma}\right)=\frac{1}{32\epsilon} \psi_\gamma^\star(2Th_1)\label{eqn:Ttwo}
\end{align}
We now construct the algorithm-dependent sequence $g_1,g_2,\dots$ that satisfies the claim of the theorem.
 \begin{enumerate}
\item Define $g_0 \leftarrow 0$ and set $t\leftarrow 1$.
\item Algorithm $\cA$ outputs $w_t$.  \label{item:beginning}
\item If $w_t< -2\epsilon-\nabla \psi_\gamma^\star(2h_1(t-1))$, set $g_t \leftarrow   -2(t-1)  h_1$ and for all $k\geq 1$ set $g_{t+k}\leftarrow 0$.\label{item:condition}
\item Else $g_t \leftarrow h_1$.
\item Set $t\leftarrow t+1$ and go to \cref{item:beginning}.
\end{enumerate}
Suppose that the condition in \cref{item:condition} has not been triggered for the first $\tau$ iterations. Then, we have:
\begin{align*}
    \sum_{t=1}^\tau w_t \cdot g_t &\ge -2\epsilon \tau  h_1- \frac{1}{2}\sum_{t=1}^\tau \nabla \psi_\gamma^\star(2(t-1) h_1) \cdot 2h_1,\\
    &\ge -2\epsilon  \tau h_1-  \frac{1}{2}\sum_{t=1}^\tau ( \psi_\gamma^\star(2t h_1) - \psi_\gamma^\star(2(t-1)h_1)), \quad \text{(by convexity of $\psi_\gamma^\star$)}\\
    &= -2\epsilon  \tau h_1- \frac{1}{2}\psi_\gamma^\star(2\tau h_1)\numberthis. \label{eq:never}
\end{align*}
Now, suppose that the condition in \cref{item:condition} is triggered at some iteration $\tau+1\ge 1$. Then $G=2\tau h_1$ and with $w_\star =0$, we have:
\begin{align*}
\sum_{t=1}^{T} g_t\cdot(w_t - w_\star)&= \sum_{t=1}^{\tau+1} g_t\cdot w_t,\\
    &=g_{\tau+1} \cdot w_{\tau+1} + \sum_{t=1}^\tau w_t \cdot g_t,\\
    &\ge 4\epsilon  \tau h_1+\nabla \psi_\gamma^\star(2\tau h_1) \cdot 2\tau h_1 -2\epsilon \tau h_1- \frac{1}{2}\psi_\gamma^\star(2\tau h_1), \quad \text{(by (\ref{eq:never}))}\\
    &= 2\epsilon  \tau h_1+\nabla \psi_\gamma^\star(2\tau h_1) \cdot 2\tau h_1-\psi_\gamma( \nabla \psi_\gamma^\star(2\tau h_1)) + \psi_\gamma( \nabla \psi_\gamma^\star(2\tau h_1))- \frac{1}{2}\psi_\gamma^\star(2\tau h_1),\\
    &= 2\epsilon \tau h_1+\psi_\gamma^\star(2\tau  h_1) + \psi_\gamma( \nabla \psi_\gamma^\star(2\tau h_1))- \frac{1}{2}\psi_\gamma^\star(2\tau h_1),\\
    &\ge 2\epsilon\tau h_1 +\frac{1}{2}\psi_\gamma^\star(2\tau h_1),\numberthis\label{eqn:intermediate}\\
    &= \epsilon G +\frac{1}{2}\psi_\gamma^\star(G).
\end{align*}
Therefore, overall we have for $w_\star =0$:
\begin{align*}
\sum_{t=1}^{T} g_t\cdot(w_t - w_\star)&\ge \epsilon G + \frac{\gamma}{2}\psi^\star(G/\gamma) + \gamma \psi(|w_\star|) + |w_\star|G\sqrt{T\log\left(1+\frac{|w_\star|G\sqrt{T}}{h_1\epsilon}\right)}.
\end{align*}

Alternatively, suppose the condition in \cref{item:condition} is never triggered. In this case, let us set $w_\star = -2\nabla \psi_\gamma^\star(2Th_1)$ . Then, $G=h_1$ and by (\ref{eq:never}) we have:
\begin{align*}
    \sum_{t=1}^T g_t\cdot(w_t - w_\star) 
    &\geq  \nabla \psi_\gamma^\star(2Th_1) \cdot 2T h_1 - \frac{1}{2}\psi_\gamma^\star(2T h_1) - 2\epsilon  Th_1.
\end{align*}
Using that $\nabla \psi^\star_\gamma(x) \cdot x \geq \psi^\star_\gamma(x)$ by convexity of $\psi^\star_\gamma$ and $\phi^\star_\gamma(0)=0$, the right-hand side of the previous display can be bounded below by ():
\begin{align*}
     \nabla \psi_\gamma^\star(2T h_1) \cdot 2 T h_1 - \frac{1}{2}\psi_\gamma^\star(2T h_1) - 2\epsilon T h_1&\ge \frac{1}{2}\psi_\gamma^\star(2T h_1),\\
    &\ge \frac{1}{4} \psi_\gamma^\star(G) + \frac{1}{4} \psi_\gamma^\star(2Th_1)- 2\epsilon T h_1.\\
    \intertext{Applying \cref{eqn:Ttwo}: }
    &\ge \frac{1}{4}\psi_\gamma^\star(G) + 8\epsilon Th_1 -  2\epsilon T h_1,\\
    &\ge \frac{\gamma}{4}\psi^\star(G/\gamma) + 6\epsilon T h_1\numberthis.\label{eqn:finalone}
\end{align*}
Further, since $\psi_\gamma^{\star\star}=\psi_\gamma$, we have:
\begin{align*}
    \nabla \psi_\gamma^\star(2T h_1) \cdot 2T h_1- \frac{1}{2}\psi_\gamma^\star(2T h_1)&= \nabla \psi_\gamma^\star(2T h_1) \cdot T h_1  + \frac{1}{2}\left(\nabla \psi_\gamma^\star(2T h_1) \cdot 2 T h_1- \psi_\gamma^\star(2T h_1)\right),\\
    &=\nabla \psi_\gamma^\star(2T h_1) \cdot T h_1+\frac{1}{2}\psi_\gamma(\nabla \psi_\gamma^\star(2T h_1)),\\
    &=\nabla \psi_\gamma^\star(2Th_1) \cdot Th_1+\frac{1}{2}\psi_\gamma({-}w_\star),\\
    &=\frac{1}{2}|w_\star| \cdot Th_1+\frac{\gamma}{2}\psi(w_\star).
\end{align*}
Finally, let us bound $\frac{1}{2}|w_\star| \cdot  g_{1:T}$ using our choice of $w_\star$. By definition, we have:
\begin{align*}
    h_1T &=  h_1\sqrt{T \cdot T},\\
    &=h_1\sqrt{T \log(1+(\exp(T)-1))},\\
    \intertext{applying \cref{eqn:Tone}:}
    &\ge h_1\sqrt{T \log\left(1+\frac{2\sqrt{T}\nabla\psi_\gamma^\star(2Th_1)}{\epsilon}\right)},\\
    &=h_1\sqrt{T \log\left(1+\frac{|w_\star|\sqrt{T}}{\epsilon}\right)},\\
    &=G\sqrt{T \log\left(1+\frac{G|w_\star|\sqrt{T}}{h_1\epsilon}\right)}.\numberthis \label{eqn:finaltwo}
\end{align*}
Therefore, combining \cref{eqn:finalone} with \cref{eqn:finaltwo}:
\begin{align*}
     &\nabla \psi_\gamma^\star(2Th_1) \cdot 2T h_1 - \frac{1}{2}\psi_\gamma^\star(2T h_1) - 2\epsilon  Th_1 \\
     &\ge \frac{1}{2}\left(\frac{\gamma}{4}\psi^\star(G/\gamma) + 6\epsilon T h_1\right) + \frac{1}{2}\left(\frac{\gamma}{2}\psi(w_\star)+\frac{G|w_\star|}{2}\sqrt{T \log\left(1+\frac{G|w_\star|\sqrt{T}}{h_1\epsilon}\right)} - 4\epsilon  Th_1\right),\\
     &\ge 3\epsilon  Th_1+\frac{\gamma}{8}\psi^\star(G/\gamma)+\frac{\gamma}{4}\psi(w_\star)+\frac{G|w_\star|}{4}\sqrt{T \log\left(1+\frac{G|w_\star|\sqrt{T}}{h_1\epsilon}\right)},\\
     &\ge \epsilon  G+\frac{\gamma}{8}\psi^\star(G/\gamma)+\frac{\gamma}{4}\psi(w_\star)+\frac{G|w_\star|}{4}\sqrt{T \log\left(1+\frac{G|w_\star|\sqrt{T}}{h_1\epsilon}\right)}.\\
\end{align*}
\end{proof}

\section{Reduction to $\calW=\R$}\label{sec:1dreduction}
As a first step in our algorithm design, we observe that the application of some known reductions from \cite{cutkosky2018black} can significantly simplify our task. \cite{cutkosky2018black} show that to build an algorithm whose regret bound depends on $g_t$ only through the norms $\|g_t\|$, it suffices to consider exclusively the case $\calW=\R$. We provide the formal reduction in Algorithm~\ref{alg:1dreduction}, which ensures the following regret bound.

\begin{Theorem}[\cite{cutkosky2018black}]\label{thm:1dreduction}
Algorithm~\ref{alg:1dreduction} ensures that $|g^{1d}_t| \le 2\|g_t\|$, and also for all $w_\star\in\calW$:
\begin{align*}
    \sum_{t=1}^T \langle g_t, w_t - w_\star\rangle \le 4\|w_\star\|\sqrt{2\sum_{t=1}^T \|g_t\|^2}+\sum_{t=1}^T g^{1d}_t(w^{1d}_t - \|w_\star\|).
\end{align*}
\end{Theorem}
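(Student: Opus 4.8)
The plan is to read off Algorithm~\ref{alg:1dreduction} as the composition of the two black-box reductions of \cite{cutkosky2018black}: an outer \emph{constraint-set} layer (needed only when $\calW\subsetneq\R^d$) wrapped around a \emph{dimension} layer that splits each iterate into a direction in the unit ball and a signed scalar magnitude. Concretely, the dimension layer runs an online learner $\mathcal{A}^{\mathrm{dir}}$ over the Euclidean unit ball $B=\{v\in\R^d:\|v\|\le1\}$ alongside the given one-dimensional black box $\mathcal{A}^{1d}$ over $\calW=\R$; at round $t$ it forms $w_t=\Pi_{\calW}(w^{1d}_t v_t)$ from the current outputs $v_t\in B$ and $w^{1d}_t\in\R$. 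On receiving $g_t$, the constraint layer produces a surrogate $\hat g_t$ with $\|\hat g_t\|\le\|g_t\|$ and $\langle g_t,w_t-w_\star\rangle\le\langle\hat g_t,\,w^{1d}_t v_t-w_\star\rangle$ for every $w_\star\in\calW$ (this is exactly the constraint reduction, and is an identity with $\hat g_t=g_t$, $w_t=w^{1d}_t v_t$ whenever no projection occurs); it then feeds $\hat g_t$ to $\mathcal{A}^{\mathrm{dir}}$ and $g^{1d}_t:=\langle\hat g_t,v_t\rangle$ to $\mathcal{A}^{1d}$. Since $\|v_t\|\le1$, this immediately gives $|g^{1d}_t|\le\|\hat g_t\|\le\|g_t\|\le 2\|g_t\|$, the first claim (the factor $2$ is deliberate slack).

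For the regret bound, fix $w_\star\in\calW$; after invoking the constraint-layer inequality it suffices to bound $\sum_t\langle\hat g_t,\,w^{1d}_t v_t-w_\star\rangle$. If $w_\star=0$ this equals $\sum_t w^{1d}_t\langle\hat g_t,v_t\rangle=\sum_t g^{1d}_t w^{1d}_t$ and we are done. Otherwise write $w_\star=\|w_\star\|u_\star$ with $u_\star=w_\star/\|w_\star\|\in B$, and add and subtract $\|w_\star\|\langle\hat g_t,v_t\rangle$ to obtain the pointwise identity $\langle\hat g_t,\,w^{1d}_t v_t-w_\star\rangle=g^{1d}_t(w^{1d}_t-\|w_\star\|)+\|w_\star\|\langle\hat g_t,v_t-u_\star\rangle$. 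Summing over $t$,
\[
\sum_{t=1}^T\langle\hat g_t,\,w^{1d}_t v_t-w_\star\rangle=\sum_{t=1}^T g^{1d}_t(w^{1d}_t-\|w_\star\|)+\|w_\star\|\sum_{t=1}^T\langle\hat g_t,v_t-u_\star\rangle .
\]
The first sum is precisely the 1-D regret term appearing in the statement, so all that remains is to show that the direction learner has regret $\sum_t\langle\hat g_t,v_t-u_\star\rangle\le 4\sqrt{2\sum_t\|\hat g_t\|^2}\le 4\sqrt{2\sum_t\|g_t\|^2}$ against the unit-vector comparator $u_\star$.

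I would instantiate $\mathcal{A}^{\mathrm{dir}}$ as adaptive projected gradient descent on $B$: $v_1=0$ and $v_{t+1}=\Pi_B(v_t-\eta_t\hat g_t)$ with $\eta_t=c_0/\sqrt{\sum_{i\le t}\|\hat g_i\|^2}$. The standard AdaGrad-style computation — using that $B$ has diameter $2$, that $\|u_\star\|\le1$, and the elementary inequality $\sum_t a_t/\sqrt{\sum_{i\le t}a_i}\le 2\sqrt{\sum_t a_t}$ — bounds its regret by $c\sqrt{\sum_t\|\hat g_t\|^2}$, and the learning-rate constant of \cite{cutkosky2018black} makes $c=4\sqrt2$ (the bound is far from tight, but the exact constant is harmless later). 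Substituting into the display yields the second claim. The only step needing real care is the constraint layer: one must check that a \emph{single} surrogate $\hat g_t$, fed to both inner learners, simultaneously preserves the comparison inequality for all $w_\star\in\calW$ while keeping $\|\hat g_t\|\le\|g_t\|$ — this is the content of the constraint-set reduction of \cite{cutkosky2018black}, and when $\calW=\R^d$ it is vacuous, so the whole argument collapses to the two-line algebra above.
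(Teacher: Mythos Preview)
Your decomposition is correct and matches the paper's treatment --- the paper gives no proof at all, simply citing \cite{cutkosky2018black} Theorems 2 and 3, which are exactly the constraint-set and direction/magnitude reductions you sketch. The algebraic split $\langle\hat g_t,\,w^{1d}_t v_t-w_\star\rangle=g^{1d}_t(w^{1d}_t-\|w_\star\|)+\|w_\star\|\langle\hat g_t,v_t-u_\star\rangle$ is the right step, and the direction learner is indeed adaptive gradient descent on the unit ball.

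There is one factual error, though. You assert that the constraint-set surrogate satisfies $\|\hat g_t\|\le\|g_t\|$ and describe the factor $2$ in $|g^{1d}_t|\le 2\|g_t\|$ as ``deliberate slack''. It is not. In Algorithm~\ref{alg:1dreduction} the surrogate is explicitly
\[
g^{\text{unconstrained}}_t \;=\; g_t + \|g_t\|\,\frac{w_t-\hat w_t}{\|w_t-\hat w_t\|},
\]
which adds a correction of norm exactly $\|g_t\|$ whenever projection occurs, so $\|g^{\text{unconstrained}}_t\|\le 2\|g_t\|$ with equality possible. Thus $|g^{1d}_t|=|\langle g^{\text{unconstrained}}_t,w^{\text{direction}}_t\rangle|\le\|g^{\text{unconstrained}}_t\|\le 2\|g_t\|$, and the factor $2$ is essential. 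The same factor is what converts the direction learner's regret $2\sqrt{2\sum_t\|g^{\text{unconstrained}}_t\|^2}$ into $4\sqrt{2\sum_t\|g_t\|^2}$ in the final bound. Your stated constants are correct because you ultimately deferred to the cited result, but the intermediate claim about the surrogate norm is wrong and should be fixed.
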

\begin{algorithm}[H]
   \caption{Reduction From General $\calW$ to $\R$}
   \label{alg:1dreduction}
  \begin{algorithmic}
      \STATE{\bfseries Input: } Convex domain $\calW\subseteq \reals^d$, online learning algorithm $\Aoned$ with domain $\R$.
      \STATE Initialize $w^{{\text{direction}}}_1 =0\in \reals^d$
      \FOR{$t=1\dots T$}
      \STATE Receive $w^{\text{magnitude}}_t\in \R$ from $\Aoned$.
      \STATE Set $\hat w_t = w^{\text{magnitude}}_t \cdot w^{\text{{\text{direction}}}}_t\in \reals^d$.
      \STATE Set $w_t = \Pi_{\cal W} \hat w= \argmin_{w\in \calW} \|w-\hat w\|$.
      \STATE Output $w_t$, receive feedback $g_t$.
      \STATE Set $g^{\text{unconstrained}}_t = g_t + \|g_t\| \frac{w_t -\hat w_t}{\|w_t-\hat w_t\|}$.
      \STATE Set $w^{{\text{direction}}}_{t+1} = \Pi_{\|w\|\le 1} w^{{\text{direction}}}_t - \frac{g^{\text{unconstrained}}_t}{\sqrt{2\sum_{i=1}^t (g^{\text{unconstrained}}_i)^2}}$.
      \STATE Set $g^{1d}_t = \langle g^{\text{unconstrained}}_t,w^{{\text{direction}}}_t\rangle\in \R$.
      \STATE Send $g^{1d}_t$ to $\Aoned$ as the $t$th feedback.
      \ENDFOR
   \end{algorithmic}
\end{algorithm}
From Theorem \ref{thm:1dreduction}, it is clear that to achieve low regret on $\calW$, we need only bound $\sum_{t=1}^T g^{1d}_t(w^{1d}_t - \|w_\star\|)$, which is exactly the regret of a 1-dimensional learner. So, our final results will be established by considering the case of $\calW=\R$, although we will define many intermediate problems for general $\calW$ as they may have other applications for which the general setting is of interest.

\section{An Efficient Algorithm for Protocol~\ref{proto:reg} With Restricted (But Sufficient) Assumptions}\label{sec:efficientreg}

In this section, we describe our algorithm for Protocol~\ref{proto:reg} in the special case that $\calW=\R$ and $a_t=0$ whenever $|\tilde g_t|\ne h_t$, where $\tilde g_t =(1 \wedge \frac{h_t}{|g_t|})\cdot g_t$. Our algorithm is in fact a reduction to the special case that $a_t=0$ for all $t$. This is an important special case that has actually also been previously considered in the literature (see e.g. the discussion in Section~\ref{sec:hintsandregularization}), so we provide it as a separate Protocol below:

 \noindent \fbox{
  \begin{minipage}{\textwidth}
 \begin{protocol}{Online Learning with Magnitude Hints}\label{proto:hint}.

 \textbf{Input: } Convex domain $\calW$ (recall that we focus on $\calW=\R$). 

 For $t=1,\dots,T$:
  \begin{enumerate}
    \item Nature reveals magnitude hint $h_t \geq h_{t-1}$ to the learner.
    \item Learner outputs $w_t\in \calW$.
    \item Nature reveals loss scalar $g_t$ with $\|g_t\|\le h_t$ to the learner.
    \item Learner suffers loss $\langle g_t, w_t\rangle$.
  \end{enumerate}
 The learner is evaluated with the \emph{regret} $\sum_{t=1}^T  g_t( w_t-w_\star)$. The goal is to obtain:
\begin{align}
    \sum_{t=1}^T \langle g_t, w_t - w_\star \rangle  \underbrace{\le}_{\text{goal}}  \widetilde O\left(\|w_\star\|\sqrt{h_T^2+\sum_{t=1}^T \|g_t\|^2}\right).\label{eqn:magnitudegoal}
 \end{align}
 \end{protocol}
  \end{minipage}
 }

\vspace{1em}

In Section~\ref{sec:hints}, we provide an explicit algorithm (Algorithm~\ref{alg:base}) for Protocol~\ref{proto:hint} that suffices for our purposes and achieves the bound (\ref{eqn:magnitudegoal}). In the rest of this section, we take the existence of such an algorithm as given, and use it to build our method for Protocol~\ref{proto:reg}.

Our algorithm for Protocol~\ref{proto:reg} is given in Algorithm~\ref{alg:specialcomposite}. The full regret bound is provided by Theorem~\ref{thm:specialregret}. However, before providing the general bound, which is somewhat technical, we provide two more interpretable corollaries in order to provide a preview of what the method is capable of.

\begin{restatable}{Corollary}{corconstrainpzero}\label{cor:constrainpzero}
     For any $\epsilon$ with $\psi(\epsilon)>0$,   there exists an algorithm for Protocol~\ref{proto:reg} such that for all $t$, the outputs $x_1,\dots,x_T$ satisfy:
    \begin{align*}
      &  \sum_{t=1}^T g_t(x_t-x_\star) + a_t(\psi(x_t)-\psi(x_\star)) \\
        &\le O\left[\epsilon h_T + \psi(\epsilon)\gamma + |x_\star|\sqrt{V_g\log\left(e+\frac{|x_\star|\sqrt{V_g}\log^2(T)}{h_1\epsilon}\right)} + |x_\star|h_T\log\left(e+\frac{|x_\star|\sqrt{V_g}\log^2(T)}{h_1\epsilon}\right) \right.\\
        &\left.\quad +\psi(x_\star)\sqrt{S_a\log\left(e+\frac{\psi(x_\star)\sqrt{S_a}\log^2(T)}{\gamma\psi(\epsilon)}\right)} + \psi(x_\star)\gamma\log\left(e+\frac{\psi(x_\star)\sqrt{S_a}\log^2(T)}{\gamma\psi(\epsilon)}\right)\right]
    \end{align*}
    Where $V_g=h_T^2+\sum_{t=1}^T g_t^2$ and $S_a=\gamma^2+\gamma\sum_{t=1}^T a_t$.
\end{restatable}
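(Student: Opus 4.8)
The plan is to realize the bound through the reduction chain outlined in Section~\ref{sec:reg}. The first step is the epigraph trick: run an algorithm for the epigraph-constrained Protocol~\ref{proto:2d} with regularizer $\psi$, and whenever it outputs a pair $(x_t,y_t)$ with $y_t \ge \psi(x_t)$, play $w_t = x_t$ in Protocol~\ref{proto:reg}. Since $\psi \ge 0$, $a_t \ge 0$, and $y_t \ge \psi(x_t) = \psi(w_t)$, we have $a_t\psi(w_t) \le a_t y_t$, so the regularized regret against any $x_\star$ is at most $\sum_t \tilde g_t(x_t - x_\star) + a_t(y_t - \psi(x_\star))$, i.e.\ the linear regret of Protocol~\ref{proto:2d} against the point $(x_\star,\psi(x_\star))$. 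Thus it suffices to produce, for Protocol~\ref{proto:2d}, a bound of the stated shape.

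To solve Protocol~\ref{proto:2d} I would first build an \emph{unconstrained} two-dimensional learner by the coordinate-wise trick and then impose the epigraph constraint with the constraint-set reduction of \cite{cutkosky2018black}. For the coordinate learners, instantiate a Protocol~\ref{proto:hint} algorithm $\calA_x$ with hint sequence $h_t$ (legitimate since $|\tilde g_t| \le h_t$) and scale parameter $\epsilon$, and a second one $\calA_y$ with constant hint sequence $\gamma$ (legitimate since $a_t \in [0,\gamma]$) and scale parameter $\psi(\epsilon)$, which is positive by hypothesis; let $\calA_x$ emit $\hat x_t$ and $\calA_y$ emit $\hat y_t$. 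Feeding $(\hat x_t,\hat y_t)$ into the constraint-set reduction for $C = \{(x,y) : y \ge \psi(x)\}$ produces the projection $(x_t,y_t) = \Pi_C(\hat x_t,\hat y_t)$, which automatically satisfies $y_t \ge \psi(x_t)$, and returns a surrogate gradient $(\hat g^x_t,\hat g^y_t)$ with $\sum_t \tilde g_t(x_t - u_1) + a_t(y_t - u_2) \le \sum_t \hat g^x_t(\hat x_t - u_1) + \hat g^y_t(\hat y_t - u_2)$ for every $(u_1,u_2)\in C$; feeding $\hat g^x_t$ to $\calA_x$ and $\hat g^y_t$ to $\calA_y$ and taking $(u_1,u_2) = (x_\star,\psi(x_\star))$, the right-hand side splits coordinatewise and each piece is controlled by (\ref{eqn:magnitudegoal}). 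With scales $\epsilon$ and $\psi(\epsilon)$ this yields the additive terms $\epsilon h_T$ and $\gamma\psi(\epsilon)$, the $x$-coordinate main and lower-order terms governed by $V_g = h_T^2 + \sum_t \tilde g_t^2$ and comparator $|x_\star|$, and the $y$-coordinate main and lower-order terms governed by $\gamma^2 + \sum_t(\hat g^y_t)^2$ and comparator $\psi(x_\star)$; using the \emph{explicit} form of (\ref{eqn:magnitudegoal}) supplied by Algorithm~\ref{alg:base} produces the precise logarithmic arguments $\log(e + |x_\star|\sqrt{V_g}\log^2(T)/(h_1\epsilon))$ and its $y$-coordinate analogue.

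The main obstacle is that the constraint-set reduction can inflate the surrogate gradients: generically $|\hat g^x_t|$ and $|\hat g^y_t|$ are only bounded by the norm of the \emph{joint} loss gradient $\|(\tilde g_t,a_t)\|$, which is $O(h_t)$, so a crude application would replace the benign $\gamma^2 + \sum_t a_t^2$ on the $y$-coordinate by the useless $\sum_t h_t^2$ (and would likewise pollute $V_g$). This is exactly where the standing assumption of Section~\ref{sec:efficientreg} — that $a_t = 0$ unless $|\tilde g_t| = h_t$ — is needed: a careful inspection of the projection step shows that at the ``easy'' rounds ($a_t = 0$) the $y$-component of the surrogate correction vanishes, while at rounds with $a_t \ne 0$ the correction stays controlled, so that altogether $\sum_t(\hat g^y_t)^2 = O(\gamma^2 + \gamma\sum_t a_t) = O(S_a)$ and $\sum_t(\hat g^x_t)^2 = O(V_g)$; this is the only place the proof genuinely diverges from the classical combination of magnitude-hints learning and constraint-set reduction, and it is the source of the weakening from $\sum_t a_t^2$ to $\gamma\sum_t a_t$. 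Substituting these two bounds into the per-coordinate guarantees and adding gives the corollary; the remaining steps — verifying the surrogate-gradient bookkeeping of \cite{cutkosky2018black}, checking hint monotonicity and the scale-parameter bounds, and collecting constants — are routine. (Equivalently, the statement is the specialization of the general guarantee Theorem~\ref{thm:specialregret} for Algorithm~\ref{alg:specialcomposite} obtained from this instantiation.)
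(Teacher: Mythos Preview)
Your overall plan---epigraph reduction, two copies of a Protocol~\ref{proto:hint} learner (Algorithm~\ref{alg:base} with $p=1/2$) on the $x$ and $y$ coordinates, then the constraint-set reduction of \cite{cutkosky2018black}---is exactly the instantiation the paper uses (Theorem~\ref{thm:specialregret} specialized as stated). But the mechanism you describe for controlling the $y$-coordinate is wrong, and this is the heart of the argument. You claim that when $a_t=0$ the $y$-component $\delta^y_t$ of the projection correction vanishes, so that $\sum_t (\hat g^y_t)^2 = O(S_a)$ directly. That is not true: $\delta^y_t$ is determined by the geometry of the projection of $(\hat x_t,\hat y_t)$ onto the epigraph and has nothing to do with the value of $a_t$; whenever $\hat y_t<\psi(\hat x_t)$ one gets a strictly negative $\delta^y_t$ regardless of $a_t$ (Lemma~\ref{lem:negdelta}). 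In particular the paper never proves $\sum_t(a_t+\delta^y_t)^2 \le O(S_a)$, and your proposed shortcut would not go through.

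The paper's actual argument is more indirect. The special assumption ``$a_t=0$ unless $|\tilde g_t|=h_t$'' is used on the \emph{$x$}-coordinate, not the $y$-coordinate: it is what allows Lemma~\ref{lem:deltamag} to conclude $|\delta^x_t|\le |g_t|\sqrt{2}$ rather than merely $|\delta^x_t|\le h_t\sqrt{2}$, and hence $\sum_t(\hat g^x_t)^2=O(V_g)$. For the $y$-coordinate the two ingredients are instead (i) $\delta^y_t\le 0$ (Lemma~\ref{lem:negdelta}) and (ii) $|\delta^y_t|\le \gamma\sqrt{2}$. Rather than bounding $V_a\coloneqq \gamma^2+\sum_t(a_t+\delta^y_t)^2$ by $S_a$, the proof introduces a free comparator $y_\star\ge 0$, splits the $y$-regret so that a term $y_\star\sum_t(a_t+\delta^y_t)$ appears, and uses (i)--(ii) to show this term is at most $-\tfrac{y_\star}{2\gamma}V_a + \tfrac{3y_\star}{2\gamma}S_a$. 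The resulting expression is then bounded by taking $\sup$ over $V_a$ and $\inf$ over $y_\star$ (Lemma~\ref{thm:finalsupremum}); this minimax step is what trades $V_a$ for $S_a$ and produces the stated bound. Without this ``extra comparator'' trick, your sketch does not close.
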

\begin{proof}
    Apply Algorithm~\ref{alg:specialcomposite} with the $\base$ set to Algorithm~\ref{alg:base} using $p=1/2$. Then, in the notation of Theorem~\ref{thm:specialregret}, the regret bound of Theorem~\ref{thm:base} shows that $A,B,C$ are all $O(1)$ while $D$ is $O(\log^2(T))$ and $p=1/2$. Set $\epsilonx=\epsilon$ and $\epsilonpsi=\psi(\epsilon)$. The result immediately follows.
\end{proof}

\begin{restatable}{Corollary}{corconstrainphalf}\label{cor:constrainphalf}
    For any $\epsilon$ with $\psi(\epsilon)>0$,  there exists an algorithm for Protocol~\ref{proto:reg} such that for all $t$, the outputs $x_1,\dots,x_T$ satisfy:
    \begin{align*}
        & \sum_{t=1}^T g_t(x_t-x_\star) + a_t(\psi(x_t)-\psi(x_\star))\nn\\ & \le O\left[\epsilon \sqrt{V_g} + \psi(\epsilon)\sqrt{S_a} + |x_\star|\sqrt{V_g\log\left(e+\frac{|x_\star|}{\epsilon}\right)} + |x_\star|h_T\log\left(e+\frac{|x_\star|}{\epsilon}\right) \right.\\ &\left.\quad +\psi(x_\star)\sqrt{S_a\log\left(e+\frac{\psi(x_\star)}{\psi(\epsilon)}\right)} + \psi(x_\star)\gamma\log\left(e+\frac{\psi(x_\star)}{\psi(\epsilon)}\right)\right],
    \end{align*}
    where $V_g=h_T^2+\sum_{t=1}^T g_t^2$ and $S_a=\gamma^2+\gamma\sum_{t=1}^T a_t$. 
\end{restatable}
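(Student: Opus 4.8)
The plan is to derive Corollary~\ref{cor:constrainphalf} in exactly the same way as Corollary~\ref{cor:constrainpzero}: the algorithm for Protocol~\ref{proto:reg} is Algorithm~\ref{alg:specialcomposite} run with its $\base$ subroutine set to a copy of Algorithm~\ref{alg:base}, and the bound follows by instantiating the general guarantee of Theorem~\ref{thm:specialregret} with the regret parameters of Algorithm~\ref{alg:base} provided by Theorem~\ref{thm:base}. The only change relative to Corollary~\ref{cor:constrainpzero} is which point of the logarithmic-factor Pareto frontier of Algorithm~\ref{alg:base} we select: whereas Corollary~\ref{cor:constrainpzero} used the setting $p=1/2$ — for which Theorem~\ref{thm:base} gives $A,B,C=O(1)$, $D=O(\log^2 T)$, and a leading ``bet-size'' penalty of order $\epsilon h_T$ — here we use the opposite endpoint, $p=0$, for which Theorem~\ref{thm:base} gives $A,B,C,D$ all $O(1)$ and a leading penalty of order $\epsilon\sqrt{V_g}$.

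Concretely, the steps are: (i) invoke Theorem~\ref{thm:base} with $p=0$ to read off that, in the notation of Theorem~\ref{thm:specialregret}, the base learner contributes $A=B=C=D=O(1)$ and exponent $p=0$; (ii) substitute these values into the regret bound of Theorem~\ref{thm:specialregret}; (iii) set the two free scale parameters to $\epsilonx=\epsilon$ and $\epsilonpsi=\psi(\epsilon)$, which is legitimate because $\psi(\epsilon)>0$ by hypothesis; and (iv) simplify. In step~(iv) the $x$-part of the bound collapses to $\epsilon\sqrt{V_g} + |x_\star|\sqrt{V_g\log(e+|x_\star|/\epsilon)} + |x_\star|h_T\log(e+|x_\star|/\epsilon)$: taking $p=0$ replaces the $\epsilon h_T$ penalty by $\epsilon\sqrt{V_g}$, removes the $\log^2 T$ factors, and — because $\epsilon$ is now effectively measured in units of $\sqrt{V_g}$ — cancels the $\sqrt{V_g}$ and $h_1$ that would otherwise sit inside the logarithm, leaving only $\log(e+|x_\star|/\epsilon)$. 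Symmetrically, with $\epsilonpsi=\psi(\epsilon)$ the $\psi$-part collapses to $\psi(\epsilon)\sqrt{S_a} + \psi(x_\star)\sqrt{S_a\log(e+\psi(x_\star)/\psi(\epsilon))} + \psi(x_\star)\gamma\log(e+\psi(x_\star)/\psi(\epsilon))$, with $V_g=h_T^2+\sum_t g_t^2$ and $S_a=\gamma^2+\gamma\sum_t a_t$ as in the statement. Adding the two parts yields the claim.

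I do not expect a conceptual obstacle: all of the real content sits in Theorems~\ref{thm:specialregret} and~\ref{thm:base}, and this statement is merely their composition under one parameter choice, just as Corollary~\ref{cor:constrainpzero} is. The only place requiring genuine care is the bookkeeping in step~(iv): one must verify that the $p=0$ tuning of Algorithm~\ref{alg:base} drives \emph{every} logarithm appearing in Theorem~\ref{thm:specialregret}'s bound down to $O(1)$, not merely the leading term, and that after the substitution $\epsilonx=\epsilon$, $\epsilonpsi=\psi(\epsilon)$ no residual $\sqrt{V_g}$, $h_1$, $\gamma$, or $\log T$ survives inside any logarithm — which is precisely what produces the clean $\log(e+|x_\star|/\epsilon)$ and $\log(e+\psi(x_\star)/\psi(\epsilon))$ arguments. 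This is a routine substitution of the same difficulty as the one-line proof of Corollary~\ref{cor:constrainpzero}.
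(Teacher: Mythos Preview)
Your proposal is correct and matches the paper's own proof essentially line for line: the paper's argument is precisely ``Apply Algorithm~\ref{alg:specialcomposite} with $\base$ set to Algorithm~\ref{alg:base} using $p=0$; then Theorem~\ref{thm:base} gives $A,B,C,D$ all $O(1)$; set $\epsilonx=\epsilon$, $\epsilonpsi=\psi(\epsilon)$; the result immediately follows.'' Your step~(iv) bookkeeping (that $p=0$ kills the $V_g^p/h_1^{2p}$ and $S_a^p/\gamma^{2p}$ factors inside the logarithms and turns the leading penalties into $\epsilon\sqrt{V_g}$ and $\psi(\epsilon)\sqrt{S_a}$) is exactly the simplification the paper leaves implicit.
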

\begin{proof}
    Apply Algorithm~\ref{alg:specialcomposite} with the $\base$ set to Algorithm~\ref{alg:base} using $p=0$. Then, in the notation of Theorem~\ref{thm:specialregret}, the regret bound of Theorem~\ref{thm:base} shows that $A,B,C$ and $D$ are all $O(1)$ while $p=0$. Set $\epsilonx=\epsilon$ and $\epsilonpsi=\psi(\epsilon)$. The result immediately follows.
\end{proof}

\begin{algorithm}[t]
   \caption{Algorithm for Protocol~\ref{proto:reg} (\reg)}
   \label{alg:specialcomposite}
  \begin{algorithmic}
      \STATE{\bfseries Input: } Initial online learning algorithm $\base$ for Procotol~\ref{proto:hint} with domain $\R$ taking initialization parameter $\epsilon_{\base}$. Non-negative convex function $\psi$. Parameters $\gamma>0$,  $\epsilonx>0$ and $\epsilonpsi>0$
      \STATE Initialize two  copies of $\base$: $\base_x$ with $\epsilon_{\base}=\epsilonx$ and $\base_y$ with $\epsilon_{\base}=\epsilonpsi$
      \FOR{$t=1\dots T$}
      \STATE Receive $h_t\ge h_{t-1}\in \R$
      \STATE Send $3h_t$ to $\base_x$ as the $t$th magnitude hint.
      \STATE Send $3\gamma $ to $\base_y$ as the $t$th magnitude hint.
      \STATE Get $\hat x_t\in \R$ from $\base_x$
      \STATE Get $\hat y_t\in \R$ from $\base_y$.
      \STATE Define the norm $\|(x,y)\|^2_t = h_t^2 x^2 + \gamma^2 y^2$, with dual norm $\|(g,a)\|^2_{\star,t} = \frac{g^2}{h_t^2} + \frac{a^2}{\gamma^2}$.
      \STATE Define $S_t(\hat x,\hat y) = \inf_{ \hat y \ge \psi(x)} \|(x,y) - (\hat x, \hat y)\|_t$. 
      \STATE Compute $ x_t,  y_t = \argmin_{ y\ge \psi(x)}  \|x_t , y_t) - (\hat x,\hat y)\|_t$.
      \STATE Receive feedback $g_t\in [-h_t, h_t]$, $a_t\in[0,\gamma]$, such that $a_t=0$ unless $|g_t|=h_t$.
      \STATE Compute $(\delta^x_t,\delta_t^y) = \|g_t\|_{\star,t} \nabla S_t(\hat x_t, \hat y_t)$
      \STATE Send $g_t+\delta^x_t$ to $\base_x$ as $t$th feedback.
      \STATE Send $a_t+\delta^y_t$ to $\base_y$ as $t$th  feedback.
      \ENDFOR
   \end{algorithmic}
\end{algorithm}

\begin{Lemma}\label{lem:negdelta}
Suppose $\psi:\R\to\R$ is a convex function that achieves its minimum at 0. Let $h>0$ and $\gamma>0$ be given and define the  norm $\|(x,y)\| = h^2 x^2 + \gamma^2 y^2$ and the distance function $S(\hat x,\hat y) = \inf_{  y \ge \psi(x)} \|(x,y) - (\hat x, \hat y)\|$. For any $(\hat x,\hat y)$, let $(\delta^x,\delta^y)$ be an arbitrary subgradient of $S$ at $(\hat x,\hat y)$. Then, $\delta^y \le 0$.
\end{Lemma}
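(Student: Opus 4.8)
The statement is a geometric fact about the projection onto the epigraph $E = \{(x,y) : y \ge \psi(x)\}$, which is a convex set since $\psi$ is convex. The plan is to exhibit an explicit competitor direction showing that the directional derivative of $S$ in the direction $(0,1)$ (increasing $\hat y$) is nonpositive; since $\delta^y$ is the $y$-component of a subgradient, this gives $\delta^y = \langle (\delta^x,\delta^y),(0,1)\rangle \le$ the directional derivative $\le 0$. Concretely, I would argue as follows. Write $d(\hat x,\hat y) = S(\hat x,\hat y)$ for the distance (in the fixed norm $\|\cdot\|$) from $(\hat x,\hat y)$ to $E$. For $s > 0$, consider the point $(\hat x, \hat y + s)$. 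Because $E$ is ``upward closed'' in the $y$-coordinate ($(x,y)\in E$ and $y' \ge y$ imply $(x,y')\in E$), moving the query point \emph{up} can only bring it closer to $E$ (or leave it at distance $0$). Precisely, if $(x^*,y^*)$ is the projection of $(\hat x,\hat y)$ onto $E$, then $(x^*, y^* + s) \in E$, and $\|(\hat x,\hat y+s) - (x^*,y^*+s)\| = \|(\hat x,\hat y)-(x^*,y^*)\| = d(\hat x,\hat y)$ since the norm is translation invariant. Hence $d(\hat x, \hat y + s) \le d(\hat x,\hat y)$ for all $s>0$, i.e. $d$ is nonincreasing in $\hat y$.

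From monotonicity of $d$ in its second argument, the one-sided directional derivative $\lim_{s\downarrow 0}\frac{d(\hat x,\hat y+s)-d(\hat x,\hat y)}{s} \le 0$. Since $S = d$ is convex (it is the distance function to a convex set under a norm, hence convex) and finite everywhere, every subgradient $(\delta^x,\delta^y)$ at $(\hat x,\hat y)$ satisfies $S(\hat x,\hat y + s) \ge S(\hat x,\hat y) + s\,\delta^y$ for all $s$; taking $s>0$, dividing by $s$, and letting $s \downarrow 0$ (or just rearranging and using $S(\hat x, \hat y+s) \le S(\hat x,\hat y)$) yields $\delta^y \le \frac{S(\hat x,\hat y+s) - S(\hat x,\hat y)}{s} \le 0$. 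That completes the argument.

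I expect the only thing requiring a little care is the convexity of $S$ and the legitimacy of the subgradient inequality: one should note that $\|\cdot\|$ here really denotes a genuine norm (so $S$ is convex as an infimal projection of a norm over a convex set) — indeed, despite the notation $\|(x,y)\| = h^2x^2 + \gamma^2 y^2$ in the lemma statement, the relevant object is the Euclidean-type norm $\sqrt{h^2 x^2 + \gamma^2 y^2}$ used in Algorithm~\ref{alg:specialcomposite}, and the conclusion $\delta^y \le 0$ is insensitive to which of these monotone transformations one uses. The genuinely load-bearing step is the observation that $E$ is closed under increasing the $y$-coordinate, which is exactly the statement that $y \ge \psi(x)$; everything else is soft convex analysis. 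No assumption beyond convexity of $\psi$ (not even $\psi(0)=0$, nor that the minimum is at $0$) is actually needed for this particular conclusion, though the lemma states it under those hypotheses.
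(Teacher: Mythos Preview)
Your proof is correct and takes a genuinely different, more economical route than the paper. The paper argues by cases: when $(\hat x,\hat y)$ already lies in the epigraph it uses the subgradient inequality as you do, but when $\hat y<\psi(\hat x)$ it invokes the explicit subgradient formula from \cite{cutkosky2018black} (Theorem~4), reducing the claim to the geometric statement that the projection $(x,y)$ satisfies $y\ge \hat y$, which it then establishes via a further case split using the intermediate value theorem and the hypothesis that $\psi$ is minimized at $0$. Your argument sidesteps all of this by observing directly that the epigraph is upward-closed in $y$, so translating both the query point and any feasible competitor upward by $s>0$ preserves feasibility and distance; hence $S$ is nonincreasing in $\hat y$, and the subgradient inequality finishes. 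This is strictly simpler, does not need the explicit projection formula, and, as you note, does not actually use the assumption that $\psi$ attains its minimum at $0$. The paper's approach buys an explicit identification of the projection (which is reused elsewhere), but for the bare conclusion $\delta^y\le 0$ your monotonicity argument is the cleaner one.
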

\begin{proof}
Throughout this proof, we will assume $\hat x>0$. The proof is completely symmetric in the sign of $\hat x$.

First, we dispense with the case in which there is no projection: suppose $\hat y\ge \psi(\hat x)$. Then we must have $\hat y = y$ and $\hat x=x$ and $S(\hat x, \hat y)=0$. Further, for any $\tilde y> \hat y$, $S(\hat x, \tilde y)=0$. However, if $\delta_y >0$, then by definition of subgradient, we must have $0=S(\hat x, \tilde y)\ge S(\hat x,\hat y) + \delta_y (\tilde y-\hat y)>0$, which cannot be. Therefore $\delta_y\le 0$. So, it remains to consider the case $\hat y < \psi(\hat x)$.

Define  $(x,  y) = \argmin_{  y \ge \psi( x)} \|(x,y) - (\hat x, \hat y)\|$. Further, by \cite[Theorem 4]{cutkosky2018black}, we have:
\begin{align*}
(\delta^x,\delta^y) = \left(\frac{h^2(\hat x-x)}{\sqrt{h^2(x-\hat x)^2 + \gamma^2(\hat y-y)^2}}, \frac{\gamma^2(\hat y-y)}{\sqrt{h^2(x-\hat x)^2 + \gamma^2(\hat y-y)^2}}\right).  
\end{align*}
Therefore, it suffices to  show that $\hat y\le    y$.

To start, consider the case $\psi(0)> \hat y$. Then, we have $\hat y < \psi(0) \le \psi(x) \le y$ as desired. So, in the following we consider the remaining case $\psi(0)\le \hat y < \psi(\hat x)$.

Observe that since $\psi$ is convex, it must be continuous. 
Therefore, by intermediate value theorem there must be some $\tilde x\ge 0$  with $\psi(\tilde x) = \hat y$. Further, we have $\psi(\tilde x) = \hat y < \psi(\hat x)$, so that $\tilde x < \hat x$.

Now, by convexity, if $ x \ge \tilde x$, we must $\psi(x)\ge \psi(\tilde x)$ because $\psi$ must be non-decreasing for positive $x$  since it achieves its minimum at $0$. Therefore, $ y \ge \psi( x) \ge \psi(\tilde x) =\hat y$ and so we are done. So, let us suppose $ x < \tilde x$.

Further, suppose that $\hat y >  y$. Then, observe that:
\begin{align*}
    h^2(\tilde x - \hat x)^2 + \gamma^2(\max( y ,\psi(\tilde x)) - \hat y)^2 < h^2( x - \hat x)^2 + \gamma^2( y - \hat y)^2,
\end{align*}
so that the point $(\tilde x , \max( y, \psi(\tilde x)))$ would contradict the optimality  of $( x, y)$. Thus, it also cannot be that $\hat y >  y$ and so we are done.
\end{proof}

\begin{Lemma}\label{lem:deltamag}
Let $h>0$ and $\gamma>0$ be given and define the norm $\|(x,y)\| = h^2 x^2 + \gamma^2 y^2$ with corresponding dual norm $\|\cdot\|_\star$. Let $(g,a)$ be any point satisfying $|g|\le h$, $a\in[0,\gamma]$, and $a=0$ unless $|g|=h$. Let $(\delta^x,\delta^y)$ be any points satisfying $\|(\delta^x,\delta^y)\|_\star = \|(g,a)\|_\star$. Then,
\begin{align*}
    |\delta^x| &\le |g|\sqrt{2},\\
    |\delta^y| &\le \gamma \sqrt{2}.
\end{align*}
\end{Lemma}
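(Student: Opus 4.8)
The plan is to argue directly from the single scalar identity that the hypothesis $\|(\delta^x,\delta^y)\|_\star = \|(g,a)\|_\star$ encodes. Writing out the (squared) dual norm associated with $\|(x,y)\|^2 = h^2x^2+\gamma^2y^2$, this equality is exactly
\[
\frac{(\delta^x)^2}{h^2} + \frac{(\delta^y)^2}{\gamma^2} \;=\; \frac{g^2}{h^2} + \frac{a^2}{\gamma^2}.
\]
Each of the two claimed bounds then follows by discarding the non-negative ``other'' term on the left and invoking the structural hypotheses on $(g,a)$; no more than elementary algebra is involved.

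For the bound on $|\delta^x|$: drop $(\delta^y)^2/\gamma^2 \ge 0$ and multiply through by $h^2$ to get $(\delta^x)^2 \le g^2 + h^2 a^2/\gamma^2$. Split into cases on whether $a=0$. If $a = 0$, this reads $(\delta^x)^2 \le g^2$. If $a\ne 0$, the hypothesis ``$a=0$ unless $|g|=h$'' forces $|g| = h$, so $h^2 a^2/\gamma^2 = g^2 a^2/\gamma^2 \le g^2$ using $a\le\gamma$, giving $(\delta^x)^2 \le 2g^2$. In either case $|\delta^x| \le |g|\sqrt{2}$.

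For the bound on $|\delta^y|$: drop $(\delta^x)^2/h^2 \ge 0$ and multiply through by $\gamma^2$ to get $(\delta^y)^2 \le \gamma^2 g^2/h^2 + a^2$. Now use $|g|\le h$ and $a\le \gamma$ to bound the right side by $\gamma^2 + \gamma^2 = 2\gamma^2$, so $|\delta^y| \le \gamma\sqrt{2}$.

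There is no real obstacle here; the only point requiring care is that the first bound genuinely needs the hypothesis ``$a=0$ unless $|g|=h$'' — without it one only obtains $(\delta^x)^2 \le g^2 + h^2$, which fails to scale with $|g|$ and would be useless for the downstream second-order regret analysis that feeds $g_t + \delta^x_t$ into $\base_x$.
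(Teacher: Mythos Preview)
Your proof is correct and follows essentially the same approach as the paper's: write out the dual-norm equality $\frac{(\delta^x)^2}{h^2}+\frac{(\delta^y)^2}{\gamma^2}=\frac{g^2}{h^2}+\frac{a^2}{\gamma^2}$, bound $|\delta^y|$ directly from $|g|\le h$ and $a\le\gamma$, and handle $|\delta^x|$ by the case split on whether $a=0$ (equivalently $|g|<h$) versus $|g|=h$. The only cosmetic difference is that the paper first observes the right-hand side is at most $2$ and reads off $|\delta^y|\le\gamma\sqrt{2}$ immediately, whereas you drop the $(\delta^x)^2/h^2$ term first; these are the same argument.
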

\begin{proof}
    The dual norm $\|\cdot\|_\star$ is $\|(g,a)\|_\star = \frac{g^2}{h^2} + \frac{a^2}{\gamma^2}$. So, we have:
    \begin{align*}
        \frac{(\delta^x)^2}{h^2} + \frac{(\delta^y)^2}{\gamma^2}= \frac{g^2}{h^2} + \frac{a^2}{\gamma^2}\le 2.
    \end{align*}
    This immediately implies $|\delta^y|\le \gamma \sqrt{2}$. We also have
    \begin{align*}
        (\delta^x)^2 \le g^2 + \frac{h^2 a^2}{\gamma^2}.
    \end{align*}
    Now, since $a=0$ unless  $g^2=h^2$, this yields either $|\delta^x|\le |g|$ if $|g|<h$ or $|\delta^x|\le \sqrt{g^2+h^2a^2/\gamma^2}=h\sqrt{2}$ if $|g|=h$, so either way $|\delta^x|\le |g|\sqrt{2}$.
\end{proof}

\begin{restatable}{Theorem}{thmspecialregret}\label{thm:specialregret}
Let $A, B, C D, \epsilon>0$, and $p\geq 1$, be given. Suppose that for any sequence $z_1,\dots,z_T$ and magnitude hints $m_1\le \dots\le m_T$ satisfying $|z_t|\le m_t$, $\base$ outputs $w_1,\dots,w_T$ and guarantees regret:
\begin{align*}
    \sum_{t=1}^T z_t(w_t - u)\le \epsilon_{\base} C m_T^{2p} Z^{1/2-p} + A|u|\sqrt{Z\log\left(e+\frac{D|u|Z^{p}}{m_1^{2p}\epsilon_{\base}}\right)} + B|u|h_T\log\left(e+\frac{D|u|Z^{p}}{m_1^{2p}\epsilon_{\base}}\right)
\end{align*}
for any $u\in\mathbb{R}$, where $Z=m_T^2+\sum_{t=1}^T z_t^2$.

Let $\epsilonx$, $\epsilonpsi$ and $\gamma$ be given non-negative inputs to Algorithm~\ref{alg:specialcomposite}. Then, for any $T$, with $V_g= h_T^2+\sum_{t=1}^T g_t^2$ and $S_a=\gamma^2+\gamma\sum_{t=1}^T a_t$, Algorithm~\ref{alg:specialcomposite}'s output sequence $\hat x_1, \dots,\hat x_T$ guarantees:
\begin{align*}
    &\sum_{t=1}^T g_t(\hat x_t - x_\star) + a_t(\psi(\hat x_t) - \psi(x_\star))\le \\
    &\qquad \calCx \epsilonx h_T^{2p} V_g^{1/2-p} + \calAx|x_\star|\sqrt{V_g \log\left(e+\frac{\calDx|x_\star|V_g^p}{\epsilonx h_1^{2p}}\right)}+\calBx |x_\star| \log\left(e+\frac{\calDx|x_\star|V_g^p}{\epsilonx h_1^{2p}}\right)\\
    &\qquad\quad+\calCpsi \epsilonpsi \gamma^{2p} S_a^{1/2-p} + \calApsi\psi(x_\star) \sqrt{S_a  \log\left(e+\frac{\calDpsi\psi(x_\star) S_a^p}{\epsilonpsi\gamma^{2p}}\right)}+\calBpsi \psi(x_\star)\log\left(e+\frac{\calDpsi\psi(x_\star) S_a^p}{\epsilonpsi\gamma^{2p}}\right),
\end{align*}
for any $x_\star \in \reals$, where the constants in the above expression are given by:
\begin{align*}
    \calAx &= 3A,\\
    \calBx &= 3B,\\
    \calCx &= 3C,\\
    \calDx &= D,\\
    \calApsi &= \frac{1}{2} + 144A^2,\\
    \calBpsi &= 144A^2 + 24 B,\\
    \calCpsi&=3C\left[ (144A^2 + 24 B)\log\left(e + 12CD(1152 pA^2  + 48 p b)^p\right) + \frac{1}{2} + \frac{(2p+1)(2-4p)^{\frac{1-2p}{1+2p}}}{2}\right], \\
    \calDpsi&=4D\left[1152 A^2 p + 48 p B\right]^p.
\end{align*}
\end{restatable}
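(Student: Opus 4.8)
The plan is to analyze Algorithm~\ref{alg:specialcomposite} as a ``coordinate-wise + epigraph projection'' construction, and to track how the projection perturbs the feedback to each of the two \base\ copies. First I would set up the basic regret decomposition for Protocol~\ref{proto:reg}. Fix a comparator $x_\star$ and let $(x_\star, \psi(x_\star))$ be its image in the epigraph. Using the ``constraint-set reduction'' guarantee of \cite{cutkosky2018black} (their Theorem~4, as quoted inside Lemma~\ref{lem:negdelta}), the iterates $(x_t,y_t)$ are the projection of $(\hat x_t,\hat y_t)$ onto the epigraph $\{y\ge\psi(x)\}$ in the $\|\cdot\|_t$ norm, and the surrogate feedback $(g_t+\delta^x_t, a_t+\delta^y_t)$ sent to the two copies is chosen so that, for any point $(u,v)$ in the epigraph,
\begin{align*}
g_t(x_t-u) + a_t(y_t-v) \le (g_t+\delta^x_t)(\hat x_t - u) + (a_t+\delta^y_t)(\hat y_t - v).
\end{align*}
Taking $(u,v) = (x_\star,\psi(x_\star))$ and summing over $t$, and using $y_t\ge \psi(x_t)$ together with $a_t\ge 0$, reduces the target quantity $\sum_t g_t(\hat x_t - x_\star) + a_t(\psi(\hat x_t)-\psi(x_\star))$ to a sum of the two individual regrets of $\base_x$ (against comparator $x_\star$, with feedback $g_t+\delta^x_t$ and hints $3h_t$) and $\base_y$ (against comparator $\psi(x_\star)$, with feedback $a_t+\delta^y_t$ and hints $3\gamma$). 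Here I also need $\psi(\hat x_t)\le y_t$: this follows because $y_t\ge\psi(x_t)$ and the projection cannot decrease $y$ below $\psi$ of the projected $x$, but I should double-check the direction — actually what is genuinely needed is that $(\hat x_t,\psi(\hat x_t))$ compares correctly, and the cleanest route is to just bound $g_t(\hat x_t - x_\star)+a_t(\psi(\hat x_t)-\psi(x_\star))$ via the displayed inequality after observing $a_t(\psi(\hat x_t)-y_t)\le 0$ is the wrong sign; instead I apply the inequality with $(u,v)=(x_\star,\psi(x_\star))$ directly to $g_t\hat x_t + a_t\psi(\hat x_t)$ using convexity of $\psi$ and that $(\hat x_t,\psi(\hat x_t))$ lies in the epigraph — so the relevant comparison is between the played surrogate and $\psi(\hat x_t)$, handled by Lemma~\ref{lem:negdelta}'s sign information on $\delta^y$. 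This bookkeeping is the first place to be careful.

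Second, I would feed the hypothesized $\base$ regret bound into each coordinate. For $\base_x$: the feedback magnitudes satisfy $|g_t+\delta^x_t|\le |g_t| + |\delta^x_t| \le |g_t|(1+\sqrt 2) \le 3h_t$ by Lemma~\ref{lem:deltamag}, so $3h_t$ is a valid hint sequence, and the effective variance $Z_x = (3h_T)^2 + \sum_t (g_t+\delta^x_t)^2 \le 9h_T^2 + 3\sum_t(g_t^2 + (\delta_t^x)^2) \le 9 V_g$ (again via Lemma~\ref{lem:deltamag}, $(\delta^x_t)^2\le 2g_t^2$); this yields the $\calAx,\calBx,\calCx,\calDx$ terms after absorbing the constant-factor inflation of $Z_x$ versus $V_g$ and of $m_T^{2p}$ versus $h_T^{2p}$ into the stated constants $3A,3B,3C,D$ (the logarithm only shifts by a constant, absorbed by the $e+\cdots$ form). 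For $\base_y$ the hint is the constant $3\gamma$ and $|a_t+\delta^y_t|\le \gamma + \gamma\sqrt 2 \le 3\gamma$ by Lemma~\ref{lem:deltamag}, with $Z_y = 9\gamma^2 + \sum_t(a_t+\delta^y_t)^2$.

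Third — and this is the main obstacle — I must bound $Z_y$ by something of the form $O(S_a) = O(\gamma^2 + \gamma\sum_t a_t)$, i.e.\ I need $\sum_t (\delta^y_t)^2 = O(\gamma\sum_t a_t) + O(\gamma^2)$ rather than the naive $O(\gamma^2 T)$. The point is that $\delta^y_t$ is only nonzero when a projection actually occurs, and Lemma~\ref{lem:negdelta} gives $\delta^y_t\le 0$, so $(\delta^y_t)^2 = -\gamma\cdot(a_t+\delta^y_t - a_t)\cdot(\delta^y_t/\gamma)\le \gamma|\delta^y_t|$-type estimates must be combined with the structural fact that $a_t=0$ unless $|g_t|=h_t$ and, more importantly, that the total ``amount'' of downward projection is controlled. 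I expect to show $\sum_t (\delta^y_t)^2 \lesssim \gamma \sum_t a_t$ by a potential/telescoping argument: when $a_t>0$ the iterate $\hat y_t$ is being pulled up toward $\psi$, and the coefficient $a_t$ is exactly large enough (this is where the peculiar choice of $a_t$ from Section~\ref{sec:overview_from_reg} matters, though the theorem as stated only assumes $a=0$ unless $|g|=h$, so the bound must come purely from that plus the geometry of $S_t$). Concretely I would relate $(\delta^y_t)^2/\gamma$ to the decrease in some energy like $\sum_i (\hat y_i - \psi(\hat x_i))_+$ or use that the surrogate $a_t+\delta^y_t \ge 0$ forces $|\delta^y_t|\le a_t$, which would instantly give $\sum_t(\delta^y_t)^2 \le \gamma\sum_t|\delta^y_t|\le \gamma\sum_t a_t$. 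I believe $-\delta^y_t \le a_t$ does hold — it is essentially the statement that the played point $(x_t,y_t)$ would otherwise generate positive regret against itself — and verifying this cleanly from Lemma~\ref{lem:negdelta} and the constraint-set reduction is the crux. Granting it, $Z_y \le 9\gamma^2 + 3\cdot 9\gamma^2 + 3\gamma\sum a_t = O(S_a)$.

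Finally, I would assemble the pieces: plug $Z_y = O(S_a)$ into the $\base$ bound for $\base_y$ with comparator $\psi(x_\star)$ and hint $3\gamma$, which produces a term of the shape $\epsilonpsi\gamma^{2p}S_a^{1/2-p} + \psi(x_\star)\sqrt{S_a\log(\cdots)} + \psi(x_\star)\gamma\log(\cdots)$. The only subtlety is that the $\base_y$ regret is not directly $\psi(x_\star)\sqrt{S_a\log(\cdots)}$ but rather involves $\sqrt{S_a}$ multiplied by a logarithm whose argument is $\psi(x_\star)S_a^p/(\epsilonpsi\gamma^{2p})$ up to constants; matching this to the stated $\calApsi,\calBpsi,\calCpsi,\calDpsi$ requires one more manipulation. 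Specifically, the $\calCpsi$ and $\calDpsi$ constants carry an extra $A^2$ and a $\log(\cdots)$ factor, which strongly suggests that the $\base_x$ contribution must be re-expanded: the $\base_x$ bound produces a term $\calAx |x_\star|\sqrt{V_g\log(\cdots)}$ but there is an additional cross-term because $V_g$ itself is defined with $g_t^2$ while $Z_x$ involved $(g_t+\delta_t^x)^2$, and — more to the point — the $y$-coordinate's $\base_y$ bound, when the hint $3\gamma$ and variance $S_a$ are plugged in, still has a residual dependence that must be bounded using the $\base_x$ regret via an AM-GM step, producing the $144A^2$-type constants. So the last step is: take the raw sum of the two $\base$ bounds, use AM-GM / Young's inequality of the form $|x_\star|\sqrt{V_g\log(\cdots)} \le \tfrac12\psi(x_\star)\sqrt{S_a} + (\text{const})\cdot(\cdots)$ where applicable, collect all constants, and verify they match the displayed closed forms for $\calApsi$ through $\calDpsi$. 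I expect that step to be a long but mechanical constant-chase; the genuine mathematical content is entirely in the $Z_y = O(S_a)$ bound of the third paragraph.
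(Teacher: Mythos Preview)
Your handling of the $\base_x$ coordinate is essentially correct and matches the paper. The gap is entirely in the $\base_y$ analysis, and it is a real one.

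Your crux is the claim $|\delta^y_t|\le a_t$, from which you would get $\sum_t(\delta^y_t)^2\le\gamma\sum_t a_t$ and hence $Z_y=O(S_a)$. This claim is false. Recall $(\delta^x_t,\delta^y_t)=\|(g_t,a_t)\|_{\star,t}\nabla S_t(\hat x_t,\hat y_t)$, and $\nabla S_t$ depends only on whether $(\hat x_t,\hat y_t)$ lies outside the epigraph, not on $(g_t,a_t)$. Take any round with $|g_t|<h_t$ (so $a_t=0$) but $\hat y_t<\psi(\hat x_t)$: then $\|(g_t,a_t)\|_{\star,t}=|g_t|/h_t>0$ and the $y$-component of $\nabla S_t$ is strictly negative, so $\delta^y_t<0=a_t$. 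More generally nothing prevents $\sum_t(\delta^y_t)^2$ from being of order $\gamma^2 T$ while $S_a=\gamma^2$, so a direct bound $Z_y=O(S_a)$ cannot hold under the theorem's hypotheses.

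The paper never bounds $V_a\coloneqq\gamma^2+\sum_t(a_t+\delta^y_t)^2$ at all. Instead it introduces an auxiliary comparator $y_\star\ge 0$, writes
\[
\sum_t(a_t+\delta^y_t)(\hat y_t-\psi(x_\star))
=\tfrac12\sum_t(a_t+\delta^y_t)(\hat y_t-2\psi(x_\star))
+\tfrac12\sum_t(a_t+\delta^y_t)(\hat y_t-2y_\star)
+y_\star\sum_t(a_t+\delta^y_t),
\]
applies the $\base$ bound to each of the first two pieces (so the bound is in terms of $V_a$, not $S_a$), and then uses the algebraic identity $\sum_t(a_t+\delta^y_t)=\tfrac{1}{2\gamma}\sum_t[(a_t+\delta^y_t+\gamma)^2-\gamma^2]+\tfrac{\gamma}{2}-\tfrac{1}{2\gamma}V_a$ together with $\delta^y_t\le 0$ and $|\delta^y_t|\le\gamma\sqrt2$ (Lemmas~\ref{lem:negdelta} and~\ref{lem:deltamag}) to show $(a_t+\delta^y_t+\gamma)^2-\gamma^2\le 3\gamma a_t$. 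This yields
\[
y_\star\sum_t(a_t+\delta^y_t)\le -\tfrac{y_\star}{2\gamma}V_a+\tfrac{3y_\star}{2\gamma}S_a,
\]
i.e.\ a \emph{negative} $V_a$-term. The proof then takes $\inf_{y_\star}\sup_{V_a}$ of the resulting expression and invokes the minimax Lemma~\ref{thm:finalsupremum}; the constants $\calApsi=\tfrac12+144A^2$, $\calBpsi=144A^2+24B$, $\calCpsi$, $\calDpsi$ are exactly the output of that lemma, not the result of an AM--GM step between the two coordinates as you guessed. So the missing idea is: do not try to control $V_a$, but instead manufacture a negative $V_a$-term from the linear sum $y_\star\sum_t(a_t+\delta^y_t)$ using the sign of $\delta^y_t$, and then optimize it away.
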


\begin{proof}
First, observe that since $(x_t, y_t)$ is the result of a projection to the domain $y\ge \psi(x)$, it must hold that $y_t \ge \psi( x_t)$ for all $t$. Thus, since $a_t>0$ and $\psi$ is non-negative, we have for any $x_\star$:
\begin{align*}
    g_t (x_t - x_\star) + a_t (\psi(x) - \psi(x_\star)) \le g_t ( x_t - x_\star) + a_t (y_t - x_\star).
\end{align*}
Therefore, it suffices to bound $\sum_{t=1}^T g_t (x_t - x_\star) + a_t ( y_t - x_\star)$, which we will now accomplish.

By \cite[Theorem 3]{cutkosky2018black}, we have for any $x_\star\in \reals$
\begin{align*}
\sum_{t=1}^T g_t(x_t - x_\star) + a_t( y_t - \psi(x_\star)) \le \sum_{t=1}^T (g_t + \delta^x_t)(\hat x_t - x_\star) + \sum_{t=1}^T(a_t + \delta^y_t)(\hat y_t - \psi(x_\star)),
\end{align*}
and also $\|(\delta^x_t, \delta^y_t)\|_{t,\star} = \|(g_t, a_t)\|_{t,\star}$ by \cite[Proposition 1]{cutkosky2018black}.
Therefore, by Lemma~\ref{lem:deltamag}, we have $|g_t+\delta^x_t|\le 3|g_t|\le 3h_t$. Defining $V_g=h_T^2 + \sum_{t=1}^T (g_t + \delta^x_t)^2$, and by the guarantee of $\base$, we have for any $x_\star$:
\begin{align*}
    &\sum_{t=1}^T (g_t+\delta^x_t)(\hat x_t-x_\star)\\
    &\le C (3h_T)^{2p} \left[9h_T^2 + \sum_{t=1}^T (g_t+\delta^x_t)^2\right]^{1/2-p} \epsilonx\\
    &\qquad+ A|x_\star|\sqrt{\left[9h_T^2 + \sum_{t=1}^T (g_t+\delta^x_t)^2\right]\log\left(e+\frac{D|x_\star|\left[9h_T^2 + \sum_{t=1}^T (g_t+\delta^x_t)^2\right]^{p}}{(3h_1)^{2p} \epsilonx}\right)}\\
    &\qquad+ 3Bh_T |x_\star|\log\left(e+\frac{D|x_\star|\left[9h_T^2 + \sum_{t=1}^T (g_t+\delta^x_t)^2\right]^{p}}{(3h_1)^p \epsilonx}\right)\\
    &\le 3C h_T^{2p} V_g^{1/2-p} \epsilonx + 3A|x_\star|\sqrt{V_g\log\left(e+\frac{D|x_\star|V_g^{p}}{h_1^{2p} \epsilonx}\right)}+ 3Bh_T |x_\star|\log\left(e+\frac{D|x_\star|V_g^{p}}{h_1^{2p} \epsilonx}\right)
\end{align*}

Next, observe that by Lemma~\ref{lem:deltamag}, $|a_t + \delta^y_t|\le 3\gamma$. We also have for any $x_\star\in \reals$:
\begin{align*}
    \sum_{t=1}^T (a_t + \delta^y_t)(\hat y_t - \psi(x_\star))&=\frac{1}{2}\sum_{t=1}^T (a_t + \delta^y_t)(y_t - 2\psi(x_\star)) + \frac{1}{2}\sum_{t=1}^T (a_t + \delta^y_t)y_t\\
    &=\frac{1}{2}\sum_{t=1}^T (a_t + \delta^y_t)(y_t - 2\psi(x_\star)) + \frac{1}{2}\sum_{t=1}^T (a_t + \delta^y_t)(y_t-2y_\star)+  y_\star\sum_{t=1}^T (a_t + \delta^y_t)
\end{align*}
Now, define $V_a = \gamma^2+\sum_{t=1}^T (a_t + \delta^y_t)^2$. By the guarantee of $\base$ applied twice, we have that for any $x_\star\in \reals$ ($\psi(x_\star)\geq 0$ below represents the comparator for the regret of $\base$):
\begin{align*}
   &  \sum_{t=1}^T (a_t + \delta^y_t)(\hat y_t - \psi(x_\star))\\
   &\le C(3\gamma)^{2p}\left[9\gamma^2+\sum_{t=1}^T(a_t+\delta^y_t)^2\right]^{1/2-p}\epsilonpsi \\
    &\qquad+ A\psi(x_\star)\sqrt{\left(9\gamma^2+\sum_{t=1}^T (a_t+\delta^y_t)^2\right)\log\left(e + \frac{2D\psi(x_\star)\left[9\gamma^2+\sum_{t=1}^T (a_t + \delta^y_t)^2\right]^{p}}{3^{2p} \gamma^{2p}\epsilonpsi}\right)} \\
    &\qquad+ 3\gamma B\psi(x_\star)\log\left(e + \frac{2D\psi(x_\star)\left[9\gamma^2+\sum_{t=1}^T (a_t + \delta^y_t)^2\right]^{p}}{3^{2p} \gamma^{2p}\epsilonpsi}\right)\\
    &\qquad+Ay_\star\sqrt{\left(9\gamma^2+\sum_{t=1}^T (a_t+\delta^y_t)^2\right)\log\left(e + \frac{2Dy_\star\left[9\gamma^2+\sum_{t=1}^T (a_t + \delta^y_t)^2\right]^{p}}{3^{2p}\gamma^{2p} \epsilonpsi}\right)} \\
    &\qquad+ 3\gamma B|y_\star|\log\left(e + \frac{2Dy_\star\left[9\gamma^2+\sum_{t=1}^T (a_t + \delta^y_t)^2\right]^{p}}{3^{2p} \gamma^{2p}\epsilonpsi}\right)\\
    &\qquad+y_\star\sum_{t=1}^T (a_t + \delta^y_t)\\
    &\le 3C\gamma^{2p}V_a^{1/2-p}\epsilonpsi+ 3A\psi(x_\star)\sqrt{V_a\log\left(e + \frac{2D\psi(x_\star)V_a^{p}}{\gamma^{2p}\epsilonpsi}\right)} + 3B\psi(x_\star)\log\left(e + \frac{2D\psi(x_\star)V_a^{p}}{ \gamma^{2p}\epsilonpsi}\right)\\
    &\qquad+3Ay_\star\sqrt{V_a\log\left(e + \frac{2Dy_\star V_a^{p}}{ \epsilonpsi}\right)}+ 3\gamma By_\star\log\left(e + \frac{2Dy_\star V_a^{p}}{ \gamma^{2p}\epsilonpsi}\right)\\
    &\qquad+y_\star\sum_{t=1}^T (a_t + \delta^y_t)\\
    &\le 3C\gamma^{2p}V_a^{1/2-p}\epsilonpsi+ 3A(\psi(x_\star)+y_\star)\sqrt{V_a\log\left(e + \frac{2D(\psi(x_\star)+y_\star)V_a^{p}}{\gamma^{2p}\epsilonpsi}\right)}\\
    &\qquad+ 3B(\psi(x_\star)+y_\star)\log\left(e + \frac{2D\psi(x_\star)V_a^{p}}{ \gamma^{2p}\epsilonpsi}\right)\\
    &\qquad+y_\star\sum_{t=1}^T (a_t + \delta^y_t)
\end{align*}

Now, we observe:
\begin{align*}
    \sum_{t=1}^T (a_t + \delta^y_t) &= \frac{1}{2\gamma}\sum_{t=1}^T\left[(a_t +\delta^y_t+\gamma)^2 - \gamma^2\right] - \frac{1}{2\gamma}\sum_{t=1}^T (a_t +\delta^y_t)^2,\\ &
    = \frac{1}{2\gamma}\sum_{t=1}^T\left[(a_t +\delta^y_t+\gamma)^2 - \gamma^2\right] +\frac{\gamma}{2}- \frac{1}{2\gamma}V_a.
\end{align*}

Next, we bound $(a_t +\delta^y_t+\gamma)^2 - \gamma^2$:
\begin{align*}
    (a_t + \delta^y_t+\gamma)^2 -\gamma&=a_t^2 + (\delta^y_t)^2 +2a_t\delta^y_t +2\gamma a_t + 2\gamma \delta^y_t,\\
    \intertext{using $\delta^y_t\le 0$ (from Lemma~\ref{lem:negdelta}) amd $|\delta^y_t|\le \gamma \sqrt{2}\le 2\gamma$ (from Lemma~\ref{lem:deltamag}):}
    &\le a_t^2+2a_t\delta^y_t + 2\gamma a_t\\
    \intertext{using $0\le a_t \le \gamma $:}
    &\le 3\gamma a_t,
\end{align*}
so that we have (recalling that $S_a= \gamma^2+\gamma\sum_{t=1}^T a_t$:
\begin{align*}
    y_\star \sum_{t=1}^T (a_t+ \delta^y_t)&\le -\frac{y_\star}{2\gamma}V_a +\frac{3y_\star}{2\gamma} S_a.
\end{align*}
So, overall we have:
\begin{align*}
   & \sum_{t=1}^T (a_t + \delta^y_t)(\hat y_t - \psi(x_\star))\\ 
   &\le 3C\gamma^{2p}V_a^{1/2-p}\epsilonpsi+ 3A(\psi(x_\star)+y_\star)\sqrt{V_a\log\left(e + \frac{2D(\psi(x_\star)+y_\star)V_a^{p}}{\gamma^{2p}\epsilonpsi}\right)}\\
    &\qquad+ 3B(\psi(x_\star)+y_\star)\log\left(e + \frac{2D\psi(x_\star)V_a^{p}}{ \gamma^{2p}\epsilonpsi}\right)-\frac{y_\star}{2\gamma}V_a +\frac{3y_\star}{2\gamma} S_a.
\end{align*}
Since the above holds for \emph{any} $y_\star\ge 0$,  we may write:
\begin{align*}
    &\sum_{t=1}^T (a_t + \delta^y_t)(\hat y_t - \psi(x_\star))\\ &\le \inf_{y_\star} \sup_{V_a} \left[3C\gamma^{2p}V_a^{1/2-p}\epsilonpsi+ 3A(\psi(x_\star)+y_\star)\sqrt{V_a\log\left(e + \frac{2D(\psi(x_\star)+y_\star)V_a^{p}}{\gamma^{2p}\epsilonpsi}\right)}\right.\\
    &\left.\qquad+ 3B(\psi(x_\star)+y_\star)\log\left(e + \frac{2D\psi(x_\star)V_a^{p}}{ \gamma^{2p}\epsilonpsi}\right)-\frac{y_\star}{2\gamma}V_a +\frac{3y_\star}{2\gamma} S_a\right]
\end{align*}
Now, applying Lemma~\ref{thm:finalsupremum} to bound the minimax expression above,  we have:
\begin{align*}
    &\sum_{t=1}^T (a_t + \delta^y_t)(\hat y_t - \psi(x_\star))\le \left(\frac{1}{2} + 144A^2\right) \psi(x_\star) \sqrt{S_a \log\left(e+\frac{4D\psi(x_\star)S_a^p}{\epsilonpsi\gamma^{2p}}\left[1152 A^2 p + 48 p B\right]^p\right)}\\
    &\qquad + \gamma \psi(x_\star) (144A^2 + 24 B)\log\left(e+\frac{4D\psi(x_\star)S_a^p}{\epsilonpsi\gamma^{2p}}\left[1152 A^2 p + 48 p B\right]^p\right)\\
    &\qquad + 3C\gamma^{2p} S^{1/2-p}\epsilonpsi \left[ (144A^2 + 24 B)\log\left(e + 12CD(1152 pA^2  + 48 p b)^p\right) + \frac{1}{2} + \frac{(2p+1)(2-4p)^{\frac{1-2p}{1+2p}}}{2}\right] 
\end{align*}
So, overall we achieve:
\begin{align*}
    &\sum_{t=1}^T g_t( x_t - x_\star) + a_t( y_t - \psi(x_\star))\\
    & \le 3C h_T^{2p} V_g^{1/2-p} \epsilonx + 3A|x_\star|\sqrt{V_g\log\left(e+\frac{D|x_\star|V_g^{p}}{h_1^{2p} \epsilonx}\right)}+ 3Bh_T |x_\star|\log\left(e+\frac{D|x_\star|V_g^{p}}{h_1^{2p} \epsilonx}\right)\\
    &\qquad +\left(\frac{1}{2} + 144A^2\right) \psi(x_\star) \sqrt{S_a \log\left(e+\frac{4D\psi(x_\star)S_a^p}{\epsilonpsi\gamma^{2p}}\left[1152 A^2 p + 48 p B\right]^p\right)}\\
    &\qquad + \gamma \psi(x_\star) (144A^2 + 24 B)\log\left(e+\frac{4D\psi(x_\star)S_a^p}{\epsilonpsi\gamma^{2p}}\left[1152 A^2 p + 48 p B\right]^p\right)\\
    &\qquad + 3C\gamma^{2p} S^{1/2-p}\epsilonpsi \left[ (144A^2 + 24 B)\log\left(e + 12CD(1152 pA^2  + 48 p b)^p\right) + \frac{1}{2} + \frac{(2p+1)(2-4p)^{\frac{1-2p}{1+2p}}}{2}\right]. 
\end{align*}
So, with:
\begin{align*}
    \calAx &= 3A,\\
    \calBx &= 3B,\\
    \calCx &= 3C,\\
    \calDx &= D,\\
    \calApsi &= \frac{1}{2} + 144A^2,\\
    \calBpsi &= 144A^2 + 24 B,\\
    \calCpsi&=3C\left[ (144A^2 + 24 B)\log\left(e + 12CD(1152 pA^2  + 48 p b)^p\right) + \frac{1}{2} + \frac{(2p+1)(2-4p)^{\frac{1-2p}{1+2p}}}{2}\right], \\
    \calDpsi&=4D\left[1152 A^2 p + 48 p B\right]^p,
\end{align*}
we have
\begin{align*}
    &\sum_{t=1}^T g_t(x_t - x_\star) + a_t( y_t - \psi(x_\star)) \\
    &\leq \calCx \epsilonx h_T^{2p} V_g^{1/2-p} + \calAx|x_\star|\sqrt{V_g \log\left(e+\frac{\calDx|x_\star|V_g^p}{\epsilonx h_1^{2p}}\right)}+\calBx |x_\star| \log\left(e+\frac{\calDx|x_\star|V_g^p}{\epsilonx h_1^{2p}}\right),\\
    &\qquad\quad+\calCpsi \epsilonpsi \gamma^{2p} S_a^{1/2-p} + \calApsi\psi(x_\star) \sqrt{S_a  \log\left(e+\frac{\calDpsi\psi(x_\star) S_a^p}{\epsilonpsi\gamma^{2p}}\right)}\\
    & \qquad +\calBpsi \psi(x_\star)\log\left(e+\frac{\calDpsi\psi(x_\star) S_a^p}{\epsilonpsi\gamma^{2p}}\right).
\end{align*}
from which the conclusion follows.

\end{proof}

\section{A Parameter-Free Algorithm With Optimal Log Factors for Protocol~\ref{proto:hint}}\label{sec:hints}

In this section we quote an algorithm that obtains a performance guarantee suitable for use as $\base$ in Theorem~\ref{thm:specialregret}. We emphasize that the development in this section is only a very mild improvement (affecting only logarithmic factors) on previous work: our key contribution is how to use this algorithm to obtain better adaptivity to unknown Lipschitz constants.

In fact, algorithms satisfying the requirements of Theorem~\ref{thm:specialregret} up to logarithmic factors have been described by several previous authors: see \cite{cutkosky2019artificial, mhammedi2020lipschitz, jacobsen2022parameter, zhang2023improving}. Here, we provide a slightly improved analysis of the algorithm of \cite{jacobsen2022parameter} which achieves tighter (and in fact optimal) logarithmic terms.

\begin{algorithm}[t]
   \caption{1-Dimensional Learner for Protocol~\ref{proto:hint} (\base)}
   \label{alg:base}
  \begin{algorithmic}
      \STATE{\bfseries Input: } $\epsilon>0$, $p \in [0, 1/2]$
      \STATE Initialize $h_0=0$, $k=3$
      \IF{$p=1/2$}
      \STATE Define constant $c=3$
      \ELSE
      \STATE Define constant $c=1$
      \ENDIF
      \FOR{$t=1\dots T$}
      \STATE Receive $h_t\ge h_{t-1}\in \R$
      \STATE Define $V_t = h_t^2 + \sum_{i=1}^{t-1} g_i^2$
      \IF{$p=1/2$}
      \STATE Set $\alpha_t = \frac{\epsilon}{\sqrt{c+\sum_{i=1}^{t-1} g_i^2/h_i^2} \log^2\left(c+\sum_{i=1}^{t-1} g_i^2/h_i^2\right)}$
      \ELSE
      \STATE Define $\alpha_t = \frac{\epsilon}{\left(c+\sum_{i=1}^{t-1} g_i^2/h_i^2\right)^p}$
      \ENDIF
      \STATE Define $\Theta_t  = \left\{\begin{array}{lr}\frac{\left(\sum_{i=1}^{t-1} g_i\right)^2}{4k^2 V_t} & \text{ if }\left|\sum_{i=1}^{t-1} g_i\right| \le \frac{2k V_t}{h_t}\\
      \frac{\left|\sum_{i=1}^{t-1} g_i\right|}{k h_t} - \frac{V_t}{h_t^2}&\text{ otherwise}\end{array}\right.$
      \STATE Output $w_t = -\text{sign}\left(\sum_{i=1}^{t-1} g_i\right)\alpha_t \left(\exp(\Theta_t) - 1\right)$
      \STATE Receive $g_t$ with $|g_t|\le h_t$.
      \ENDFOR
   \end{algorithmic}
\end{algorithm}'

\begin{Theorem}\label{thm:base}
Suppose $g_1,\dots,g_T$ is any sequence of real numbers and $0<h_1\le \dots\le h_T$ is another sequence of real numbers satisfying $|g_t|\le h_t$.
Then, if $p=1/2$, Algorithm~\ref{alg:base} guarantees for all $u$
\begin{align*}
\sum_{t=1}^T g_t(w_t - u)&\le 8h_T \epsilon + 6|u| \sqrt{\left(h_T^2+\sum_{t=1}^T g_t^2\right)\log\left(\frac{|u|\sqrt{3+\sum_{t=1}^T g_t^2/h_t^2}\log^2\left(3+\sum_{t=1}^T g_t^2/h_t^2\right)}{\epsilon} + 1\right)}\\
&\qquad+ 6|u|h_T \log\left(\frac{|u|\sqrt{3+\sum_{t=1}^T g_t^2/h_t^2}\log^2\left(3+\sum_{t=1}^T g_t^2/h_t^2\right)}{\epsilon} + 1\right),
\end{align*}
while if $p<1/2$ Algorithm~\ref{alg:base} guarantees instead:
\begin{align*}
\sum_{t=1}^T g_t(w_t - u)&\le \frac{4h_T^{2p} \epsilon \left(\sum_{t=1}^T g_t^2\right)^{1/2-p}}{1-2p}\\
&\qquad+ 6|u| \sqrt{\left(h_T^2+\sum_{t=1}^T g_t^2\right)\log\left(\frac{|u|\left(1+\sum_{t=1}^T g_t^2/h_t^2\right)^p}{\epsilon} + 1\right)}\\
&\qquad+ 6|u|h_T \log\left(\frac{|u|\left(1+\sum_{t=1}^T g_t^2/h_t^2\right)^p}{\epsilon} + 1\right).
\end{align*}
\end{Theorem}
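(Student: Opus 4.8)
The plan is to run the standard ``potential function'' (coin-betting/wealth) argument, specialized to the exponential potential that underlies Algorithm~\ref{alg:base}. Write $S_t = \sum_{i=1}^t g_i$ (so $S_0 = 0$). For $t = 0, 1, \dots, T$ define the potential
\[
\Psi_t(S) \;=\; \alpha_{t+1}\int_0^{|S|}\!\bigl(\exp(\Theta_{t+1}(r)) - 1\bigr)\,dr ,
\]
where $\Theta_{t+1},\alpha_{t+1}$ are the round-$(t{+}1)$ quantities of the algorithm, with the convention that the ``round-$(T{+}1)$'' variance proxy is $\bar V \coloneqq h_T^2 + \sum_{t=1}^T g_t^2$ (the $V$ of the theorem); note that this proxy preemptively includes $g_t^2$. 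One checks that each $\Theta_t$ is $C^1$, convex and non-negative: the quadratic and linear pieces agree in value ($V_t/h_t^2$) and in slope ($1/(kh_t)$) at the break point $|S| = 2kV_t/h_t$, and $\Theta_t(0) = \Theta_t'(0) = 0$. Hence each $\Psi_t$ is convex, even, non-negative, with $\Psi_0(0)=0$, and (since $\Psi_{t-1}$ uses only the round-$t$ data available at output time) the output rule of the algorithm is exactly $w_t = -\nabla\Psi_{t-1}(S_{t-1})$.

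The crux is the \emph{one-step inequality}: for every $t$ and every $g_t$ with $|g_t|\le h_t$,
\[
g_t w_t + \Psi_t(S_t) \;\le\; \Psi_{t-1}(S_{t-1}) + \xi_t
\]
for an explicit non-negative error $\xi_t$. Using $w_t = -\nabla\Psi_{t-1}(S_{t-1})$ and Taylor's theorem for the convex $\Psi_t$, the left-hand side is at most $\Psi_t(S_{t-1}) + g_t\bigl(\nabla\Psi_t(S_{t-1}) - \nabla\Psi_{t-1}(S_{t-1})\bigr) + \tfrac12 g_t^2 \sup_{\zeta\in[S_{t-1},S_t]}\Psi_t''(\zeta)$, so it suffices to control the drift of $\Psi$ and $\nabla\Psi$ caused by $(\alpha_t,V_t)\to(\alpha_{t+1},V_{t+1})$ together with the curvature term. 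The whole design of $\Theta_t$ is built to make this go through: because $V_{t+1}\ge h_{t+1}^2\ge h_t^2\ge g_t^2$, the exponent moves by only an $O(1)$ additive amount over a step of length $|g_t|\le h_t$, so the curvature term is comparable to $\alpha_{t+1}\exp(\Theta_{t+1}(S_{t-1}))$; the decrease of the exponent caused by $V_t\to V_{t+1}$ (which includes $g_t^2$) lowers $\Psi$ by a comparable amount and cancels it, leaving $\xi_t$ governed only by the decrease $\alpha_t-\alpha_{t+1}$; and the linear cap on $\Theta_t$ keeps $\sup\Psi_t''$ from growing super-linearly in $|S|$ when $S_{t-1}$ is large. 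I expect the book-keeping at the break point --- where $S_{t-1}$ and $S_t$ can lie in different pieces of $\Theta$, and where the quadratic curvature $1/(2k^2V_t)$ must be reconciled with the linear-regime curvature $\exp(\cdot)/(kh_t)$ --- to be the main obstacle; the choices $k=3$ and $c=3$ (for $p=1/2$) supply exactly the slack needed, essentially as in the analysis of \cite{jacobsen2022parameter}.

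Granting the one-step inequality, the rest is routine. Summing over $t=1,\dots,T$ and using $\Psi_0(0)=0$ telescopes to $\sum_{t=1}^T g_t w_t \le -\Psi_T(S_T) + \sum_{t=1}^T \xi_t$, hence for any $u$,
\[
\sum_{t=1}^T g_t(w_t - u) \;=\; \sum_{t=1}^T g_t w_t - u S_T \;\le\; \Psi_T^\star(|u|) + \sum_{t=1}^T \xi_t ,
\]
by Fenchel--Young and evenness of $\Psi_T$ ($-\Psi_T(S_T) - uS_T \le \sup_S(-uS - \Psi_T(S)) = \Psi_T^\star(-u) = \Psi_T^\star(|u|)$). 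To bound $\Psi_T^\star$ I would split $\Psi_T$ into its quadratic-regime envelope (essentially $\alpha\exp(S^2/(4k^2\bar V))$, so $\beta^2 = 4k^2\bar V$) and its linear-regime envelope (essentially $\alpha\exp(|S|/(kh_T))$, so $c = kh_T$), and use the elementary conjugate estimates $\bigl(\alpha(e^{S^2/\beta^2}-1)\bigr)^\star(u) = O\!\bigl(|u|\beta\sqrt{\log(1+|u|\beta/\alpha)}\bigr)$ and $\bigl(\alpha e^{|S|/c}\bigr)^\star(u) = O\!\bigl(|u|c\log(1+|u|c/\alpha)\bigr)$. With $k=3$ this produces the two comparator-dependent terms $6|u|\sqrt{(h_T^2+\sum_{t=1}^T g_t^2)\log(\cdots)}$ and $6|u| h_T\log(\cdots)$; the precise log-arguments then follow by substituting the explicit $\alpha_t$ (for $p=1/2$ this is where the $\log^2(3+\sum_t g_t^2/h_t^2)$ enters, inherited verbatim from the $\log^2$ in $\alpha_t$) and simplifying using $\bar V \le h_T^2(1+\sum_t g_t^2/h_t^2)$.

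It remains to bound $\sum_{t=1}^T \xi_t$, which from the one-step analysis is of order $(\alpha_t-\alpha_{t+1})$ times a bounded exponential factor, so the sum telescopes up to lower-order terms. When $p=1/2$, the choice $\alpha_t \propto \epsilon/\bigl(\sqrt{c+\sum_{i<t} g_i^2/h_i^2}\,\log^2(c+\sum_{i<t} g_i^2/h_i^2)\bigr)$ turns this into a convergent series of the type $\sum_v 1/(v\log^2 v)$, giving $\sum_t \xi_t \le 8\epsilon h_T$. When $p<1/2$, the choice $\alpha_t \propto \epsilon/(c+\sum_{i<t} g_i^2/h_i^2)^p$ gives, by comparison with $\int x^{-p}\,dx$, a bound of order $\epsilon\, h_T^{2p}\bigl(\sum_t g_t^2\bigr)^{1/2-p}/(1-2p)$ (the divergence of that integral as $p\to 1/2^-$ is exactly the $1/(1-2p)$, and is what forces the $\log^2$ regularization in the borderline case). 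Combining this with the conjugate bound of the previous paragraph yields the two displayed inequalities, one for $p=1/2$ and one for $p<1/2$.
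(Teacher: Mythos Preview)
Your potential-function route is a legitimate dual to the paper's: the paper views Algorithm~\ref{alg:base} as FTRL with regularizer
\[
\psi_t(w)=k\int_0^{|w|}\min_{\eta\le 1/h_t}\Bigl[\tfrac{\log(x/\alpha_t+1)}{\eta}+\eta V_t\Bigr]\,dx,
\]
and your $\Psi_{t-1}$ is precisely $\psi_t^\star$, so your Fenchel--Young step recovers $\Psi_T^\star=\psi_{T+1}$ and your conjugate estimates match the paper's bound on $\psi_{T+1}(u)$ term for term.

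Where the argument breaks is the claimed form of the residual $\xi_t$. You assert that the $V_t\to V_{t+1}$ drift cancels the curvature, leaving $\xi_t$ of order $(\alpha_t-\alpha_{t+1})$ times a \emph{bounded} factor. Neither half holds. In the quadratic regime with $|S_{t-1}|$ small (take $S_{t-1}=0$), the $V$-drift of $\Psi$ vanishes while the curvature $\tfrac12 g_t^2\Psi_t''(\zeta)$ is of order $\alpha_t g_t^2 h_t/V_{t+1}\asymp \alpha_t g_t^2/\sqrt{V_t}$, so a nonzero residual survives even when $\alpha$ is constant. And when $\alpha$ does move, the ``$\alpha$-part'' of the drift is $(\alpha_{t+1}-\alpha_t)\int_0^{|S_{t-1}|}(e^{\Theta_t}-1)$, which is exponential in $|S_{t-1}|$, not bounded; a genuine telescoping $(\alpha_t-\alpha_{t+1})\cdot C$ would yield $O(\epsilon)$, not the $8\epsilon h_T$ you quote, so the stated mechanism cannot produce the stated bound.

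The correct one-step residual is $\xi_t\le \frac{2\alpha_t g_t^2}{\sqrt{V_t}}$; this is the dual of the paper's stability inequality (Lemma~\ref{lem:potential}), which carries out on the primal side the $\Psi'',\Psi'''$ comparison you gesture at. Summing this, the $p<1/2$ case does go by your integral comparison (Lemma~\ref{lem:psumbound}), but the $p=1/2$ bound $\sum_t \frac{2\alpha_t g_t^2}{\sqrt{V_t}}\le 8\epsilon h_T$ (Lemma~\ref{lem:logsumbound}) is \emph{not} a straight $\sum 1/(v\log^2 v)$ estimate: because $V_t$ mixes $h_t^2$ with $\sum g_i^2$ while $\alpha_t$ is built from $\sum g_i^2/h_i^2$, one partitions $[1,T]$ into epochs where $h_t$ at most doubles, applies the $\sum 1/(v\log^2 v)$ bound within each epoch, and then sums the epochs geometrically; that geometric sum is where the factor $h_T$ enters. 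This epoch argument is precisely the refinement over \cite{jacobsen2022parameter} that the paper contributes here, and your sketch omits it.
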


Notice that the term $\log^2\left(3+\sum_{t=1}^T g_t^2/h_t^2\right)\le \log^2(3+T)$, and so we upper bound this term with a constant for the purposes of use in Theorem~\ref{thm:specialregret}. Further, the term $\sum_{t=1}^T g_t^2/h_t^2\le \sum_{t=1}^T g_t^2/h_1^2$, and so the logarithmic terms always fit into the framework of Theorem~\ref{thm:specialregret}.
\begin{proof}
Observe that Algorithm~\ref{alg:base} is an instance of FTRL with regularizer:
\begin{align*}
    \psi_t(w) = k\int_0^{|w|} \min_{\eta \le 1/h_t}\left[\frac{\log(x/\alpha_t + 1)}{\eta} + \eta V_t\right] \ dx.
\end{align*}
That is,
\begin{align*}
    w_t = \argmin_{w} \psi_{t+1}(w) + \sum_{i=1}^{t-1} g_iw.
\end{align*}
In the ``centered mirror descent'' framework of \cite{jacobsen2022parameter} (their Algorithm 1), this corresponds to setting $\varphi(w) =0$. Further, \cite{jacobsen2022parameter} provides an analysis of this update for the particular family of regularizer functions $\psi_t$ we consider above in their Theorem 6. Although formally speaking, their Theorem 6 specifies a particular equation for $\alpha_t$, inspection of the proof shows that most of their argument applies so long as $\alpha_t$ is non-increasing. We reproduce this verification in Lemma~\ref{lem:potential}, which yields:
\begin{align*}
\sum_{t=1}^T g_t(w_t - u)&\le \psi_T(u) + \sum_{t=1}^T \frac{2\alpha_t}{\sqrt{V_t}}.
\end{align*}
Next, define $h_{T+1}=0$ and $g_{T+1}=0$ in order to define $\alpha_{T+1}$ and $\psi_{T+1}\ge \psi_T$. So, we can replace $\psi_T(u)$ with $\psi_{T+1}(u)$ in the above expression. Next, to bound $\psi_{T+1}(u)$, we observe that:
\begin{align*}
    \psi_{T+1}(u) &= k\int_0^{|u|}\min_{\eta \le 1/h_{T+1}}\left[\frac{\log(x/\alpha_{T+1} + 1)}{\eta} + \eta V_{T+1}\right] \ dx,\\
    &\le k|u| \min_{\eta \le 1/h_{T+1}}\left[\frac{\log(u/\alpha_{T+1} + 1)}{\eta} + \eta V_{T+1}\right].
\end{align*}
Now, notice that if the minimizing $\eta$ of $\min_{\eta \le 1/h_{T+1}}\left[\frac{\log(u/\alpha_{T+1} + 1)}{\eta} + \eta V_{T+1}\right]$ occurs on the boundary $\eta=1/h_{T+1}$, then it must be that  $\frac{\log(u/\alpha_{T+1} + 1)}{\eta} > \eta V_{T+1}$, since $\frac{\log(u/\alpha_{T+1} + 1)}{\eta}$ is decreasing in $\eta$ and $\eta V_{T+1}$ is increasing in $\eta$. Thus in this case $\min_{\eta \le 1/h_{T+1}}\left[\frac{\log(u/\alpha_{T+1} + 1)}{\eta} + \eta V_{T+1}\right]\le 2h_T \log(u/\alpha_{T+1} + 1)$. Alternatively, when the minimizing $\eta$ is not on the boundary we have $ \min_{\eta \le 1/h_{T+1}}\left[\frac{\log(u/\alpha_{T+1} + 1)}{\eta} + \eta V_{T+1}\right]=2\sqrt{V_{T+1}\log(u/\alpha_{T+1} + 1)}$. So, in general we have:
\begin{align*}
    \psi_{T+1}(u) &\le 2k|u| \sqrt{V_{T+1}\log(|u|/\alpha_{T+1} + 1)} + 2k|u|h_T \log(|u|/\alpha_{T+1} + 1).
\end{align*}
So far this analysis is identical to that of \cite{jacobsen2022parameter}, and has been agnostic to the value of $\alpha_t$, so long as $\alpha_t$ is non-increasing. Now, however, we come to the place at which we diverge in analysis: our choice of $\alpha_t$ is slightly larger and so results in better logarithmic factors in $\psi$. The trade-off is that we need to provide a fresh analysis of $\sum_{t=1}^T \frac{2\alpha_t g_t^2}{\sqrt{V_t}}$ to show that this term is still controlled. We accomplish this in Lemma~\ref{lem:logsumbound} (for $p=1/2$) and Lemma~\ref{lem:psumbound} (for $p<1/2$). For $p=1/2$, we then obtain:
\begin{align*}
\sum_{t=1}^T g_t(w_t - u)&\le 8h_T \epsilon + 2k|u| \sqrt{\left(h_T^2+\sum_{t=1}^T g_t^2\right)\log\left(\frac{|u|\sqrt{3+\sum_{t=1}^T g_t^2/h_t^2}\log^2\left(3+\sum_{t=1}^T g_t^2/h_t^2\right)}{\epsilon} + 1\right)}\\
&\qquad+ 2k|u|h_T \log\left(\frac{|u|\sqrt{3+\sum_{t=1}^T g_t^2/h_t^2}\log^2\left(3+\sum_{t=1}^T g_t^2/h_t^2\right)}{\epsilon} + 1\right),
\end{align*}
while for $p<1/2$ we obtain:
\begin{align*}
\sum_{t=1}^T g_t(w_t - u)&\le \frac{4h_T^{2p} \epsilon \left(\sum_{t=1}^T g_t^2\right)^{1/2-p}}{1-2p}\\
&\qquad+ 2k|u| \sqrt{\left(h_T^2+\sum_{t=1}^T g_t^2\right)\log\left(\frac{|u|\left(1+\sum_{t=1}^T g_t^2/h_t^2\right)^p}{\epsilon} + 1\right)}\\
&\qquad+ 2k|u|h_T \log\left(\frac{|u|\left(1+\sum_{t=1}^T g_t^2/h_t^2\right)^p}{\epsilon} + 1\right).
\end{align*}
The conclusion now follows by substituting in $k=3$.
\end{proof}

The following technical Lemma is lifted almost entirely from \cite{jacobsen2022parameter}. Unfortunately, this result was not explicitly declared as a separate Lemma  in the prior literature and is instead merely a subset of the proof of a larger Theorem (specifically, Theorem 6 of \cite{jacobsen2022parameter}). So, we include the argument here for completeness. The steps are nearly identical to the prior literature, with only very mild improvement to some constants.
\begin{Lemma}\label{lem:potential}
Let $g_1,\dots,g_T$ be an arbitrary sequence of scalars. Suppose $0<h_1\le\dots\le h_T$ is non-decreasing sequence with $|g_t|\le h_t$ for all $t$, and let $\alpha_1\ge \dots\ge \alpha_T,$ be a non-increasing sequence. Let $k\ge 3$. Set $V_t = h_t^2 + \sum_{i=1}^{t-1} g_i^2$ and define
\begin{align*}
    \psi_t(w) &= k\int_0^{|w|} \min_{\eta \le 1/h_t}\left[\frac{\log(x/\alpha_t + 1)}{\eta} + \eta V_t\right] \ dx\\
    w_t &= \argmin_{w} \psi_t(w) + \sum_{i=1}^{t-1} g_i w.
\end{align*}
Then for all $u\in \R$:
\begin{align*}
    \sum_{t=1}^T g_t(w_t -u)&\le \psi_T(u) + \sum_{t=1}^T \frac{2\alpha_t  g_t^2}{\sqrt{V_t}}.
\end{align*}
\end{Lemma}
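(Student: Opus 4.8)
The plan is to run the standard FTRL analysis with the time-varying regularizers $\psi_t$---this is the ``centered mirror descent'' update of \cite{jacobsen2022parameter} with $\varphi\equiv 0$---and to reduce the bound to a single one-step ``stability'' inequality, which is then verified by the case analysis underlying the definition of $\Theta_t$. The argument re-derives the relevant part of the proof of Theorem~6 of \cite{jacobsen2022parameter}.

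\textbf{Step 1 (reduction via conjugates).} First I would record the structural facts: each $\psi_t$ is strictly convex, even, increasing in $|w|$, with $\psi_t(0)=0$, $\psi_t'(0^+)=0$, and superlinear growth; moreover $t\mapsto\psi_t$ is pointwise non-decreasing, since $V_{t+1}-V_t=h_{t+1}^2-h_t^2+g_t^2\ge 0$, the feasible set $\{\eta\le 1/h_t\}$ shrinks as $h_t$ grows, and $\alpha_t$ is non-increasing so $\log(x/\alpha_t+1)$ grows---each effect only raising the integrand. Hence $\psi_{t+1}^\star\le\psi_t^\star$. With $G_{t-1}:=\sum_{i<t}g_i$, the update is $w_t=\argmin_w[\psi_t(w)+G_{t-1}w]=\nabla\psi_t^\star(-G_{t-1})$, and $w_1=0$, $\psi_1^\star(0)=0$. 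Extending by a virtual $\psi_{T+1}\ge\psi_T$ (the regularizer the algorithm would form next) and applying Fenchel--Young, $-G_Tu\le\psi_{T+1}(u)+\psi_{T+1}^\star(-G_T)$, so
\begin{align*}
\sum_{t=1}^T g_t(w_t-u)\ \le\ \psi_{T+1}(u)+\Big(\psi_{T+1}^\star(-G_T)+\sum_{t=1}^T g_tw_t\Big) ,
\end{align*}
and it suffices to bound the parenthesized ``potential plus stability'' quantity by $\sum_{t=1}^T 2\alpha_tg_t^2/\sqrt{V_t}$. (This gives the lemma with $\psi_{T+1}(u)$ in place of $\psi_T(u)$; as the remark following the statement notes, the two are interchangeable for the intended application.)

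\textbf{Step 2 (induction on the potential).} I would prove by induction on $t$ that
\begin{align*}
\psi_{t+1}^\star(-G_t)+\sum_{i=1}^t g_iw_i\ \le\ \sum_{i=1}^t \frac{2\alpha_ig_i^2}{\sqrt{V_i}} ,
\end{align*}
the base case $t=0$ being $\psi_1^\star(0)=0\le 0$. The inductive step reduces to the one-step inequality $\psi_{t+1}^\star(-G_t)+g_tw_t\le\psi_t^\star(-G_{t-1})+2\alpha_tg_t^2/\sqrt{V_t}$, equivalently $D_{\psi_t^\star}(-G_t,-G_{t-1})-\big(\psi_t^\star(-G_t)-\psi_{t+1}^\star(-G_t)\big)\le 2\alpha_tg_t^2/\sqrt{V_t}$, where the subtracted regularizer increment is non-negative and is not discarded. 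To prove it I would split on the branch of $w_t$---whether $|G_{t-1}|\le 2kV_t/h_t$, where $\nabla\psi_t^\star(\theta)=\alpha_t(\exp(\theta^2/4k^2V_t)-1)$, or $|G_{t-1}|>2kV_t/h_t$, where $\nabla\psi_t^\star(\theta)=\alpha_t(\exp(|\theta|/kh_t-V_t/h_t^2)-1)$ so that $(\psi_t^\star)''$ at $-G_{t-1}$ equals $(|w_t|+\alpha_t)/kh_t$---and on the sign of $g_t$ relative to $G_{t-1}$, using: $|g_t|\le h_t$ (so the argument moves by at most $1/k\le 1/3$ in the exponential branch); the constraint $\eta\le 1/h_t$ (which caps $(\psi_t^\star)''$); the increase $V_{t+1}=V_t+g_t^2$ (which is what forces $\psi_{t+1}^\star(-G_t)$ down when $|g_t|$ is large); $\alpha_{t+1}\le\alpha_t$; and $k\ge 3$ for the constants. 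These are precisely the computations in the proof of Theorem~6 of \cite{jacobsen2022parameter}, and since $\alpha_t$ enters only through the increasing function $\log(\cdot/\alpha_t+1)$ and through the monotonicity of $\psi_t$ in $t$, replacing their explicit $\alpha_t$ by an arbitrary non-increasing sequence leaves the argument intact. Evaluating the induction at $t=T$ and combining with Step~1 gives the claim.

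\textbf{Main obstacle.} The crux is the one-step inequality in the exponential branch. There $|w_t|=\alpha_t(\exp\Theta_t-1)$ is exponentially large, so the Bregman term $D_{\psi_t^\star}(-G_t,-G_{t-1})$ is by itself of order $g_t^2|w_t|/h_t$---far larger than $2\alpha_tg_t^2/\sqrt{V_t}$; the inequality survives only because the gradient's direct gain $-g_tw_t$ together with the regularizer increment $\psi_t^\star(-G_t)-\psi_{t+1}^\star(-G_t)$, driven by $V_{t+1}=V_t+g_t^2$, cancel the leading exponential term, and this balance is exactly what the magnitude-hint structure and the $\eta\le 1/h_t$ cap guarantee (a naive per-round Bregman bound is genuinely false here). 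Landing this cancellation at the stated constant with $k=3$ is the delicate, computation-heavy step, and is the part taken essentially verbatim from \cite{jacobsen2022parameter}.
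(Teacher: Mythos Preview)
Your reduction to a one-step inequality is correct and, once you unwind the Fenchel identities, is exactly the same inequality the paper isolates: with $\psi_{T+1}=\psi_T$ and $w_1=0$, the paper's primal ``stability'' term $g_t(w_t-w_{t+1})-D_{\psi_t}(w_{t+1}|w_t)-\Delta_t(w_{t+1})$ equals your dual quantity $g_tw_t+\psi_{t+1}^\star(-G_t)-\psi_t^\star(-G_{t-1})$. So through Step~1 and the inductive setup you are on the same track as the paper (though note the paper sets $\psi_{T+1}=\psi_T$ from the start, which is what yields exactly $\psi_T(u)$ rather than the larger $\psi_{T+1}(u)$ you obtain).

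The gap is in how you propose to discharge the one-step inequality. You say you would ``split on the branch of $w_t$'' and on the sign of $g_t$, and that ``these are precisely the computations in the proof of Theorem~6 of \cite{jacobsen2022parameter}.'' They are not. The paper explicitly reproduces that argument here, and it does \emph{not} proceed by branch-wise case analysis on the conjugate. Instead it works entirely in the primal: it computes $\Psi_t',\Psi_t'',\Psi_t'''$, shows the self-bounding inequality $-\Psi_t'''(x)/\Psi_t''(x)^2\le \tfrac12 Z_t'(x)$ for $x>\alpha_t(e-1)$ with $Z_t(x)=\int_0^x\min(\sqrt{F_t/V_t},1/h_t)$, invokes a general stability lemma (Lemma~2 of \cite{jacobsen2022parameter}) to obtain $g_t(w_t-w_{t+1})-D_{\psi_t}(w_{t+1}|w_t)-Z_t(|w_{t+1}|)g_t^2\le 2\alpha_t g_t^2/\sqrt{V_t}$, and finally verifies $\Psi_{t+1}'(x)-\Psi_t'(x)\ge Z_t'(x)g_t^2$, using $V_{t+1}=V_t+g_t^2+(h_{t+1}^2-h_t^2)$ (not just $V_t+g_t^2$) and $\alpha_{t+1}\le\alpha_t$. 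The two branches of $\Theta_t$ never enter as separate cases; they are absorbed into the single third-derivative bound and the $\min$ inside $Z_t'$.

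So your appeal to the reference does not cover the work you need. The dual case analysis you sketch may well go through, but you have not done it, and the ``main obstacle'' you correctly identify---that in the linear-exponent branch the Bregman term alone is of order $g_t^2|w_t|/h_t$ and must be cancelled exactly against the regularizer increment---is precisely what the paper's $\Psi_{t+1}'-\Psi_t'\ge Z_t' g_t^2$ computation handles, and what your proposal leaves open.
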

\begin{proof}
Define $\psi_{T+1}=\psi_T$ and let $D_{f}(a|b)$ indicate the Bregman divergence $D_{f}(a|b) = f(a) - f(b) - f'(b)(a-b)$. Define $\Delta_t(w) = D_{\psi_{t+1}}(w|w_1)$. Then, by \cite{jacobsen2022parameter} Lemma 1,  we have:
\begin{align*}
    \sum_{t=1}^T g_t(w_t -u)&\le \psi_T(u) + \sum_{t=1}^T g_t(w_t - w_{t+1}) - D_{\psi_t}(w_{t+1}|w_t) - \Delta_t(w_{t+1})
\end{align*}
So, it suffices to establish that:
\begin{align}
     g_t(w_t - w_{t+1}) - D_{\psi_t}(w_{t+1}|w_t) - \Delta_t(w_{t+1})&\le \frac{2\alpha_t g_t}{\sqrt{V_t}}\label{eqn:stability}
\end{align}
Following the notation  and argument of \cite{jacobsen2022parameter}, define $F_t(w) = \log(w/\alpha_t+1)$ and
\begin{align*}
    \Psi_t(x) = k\int_0^x \min_{\eta \le 1/h_t}\left[\frac{F_t(x)}{\eta} + \eta V_t\right]\ dx
\end{align*}
Then we have $\psi(w) = \Psi_t(\|w\|)$ and elementary calculation yields:
\begin{align*}
    \Psi_t'(x) &= \left\{\begin{array}{lr}
    2k\sqrt{V_tF_t(x)}&\text{ if }h_t\sqrt{F_t(x)}\le \sqrt{V_t}\\
    kh_t F_t(x) + \frac{kV_t}{h_t}&\text{ otherwise}
    \end{array}\right.\\
    \Psi_t''(x) &= \left\{\begin{array}{lr}
    \frac{k\sqrt{V_t}}{(x+\alpha_t)\sqrt{F_t(x)}}&\text{ if }h_t\sqrt{F_t(x)}\le \sqrt{V_t}\\
    \frac{kh_t}{x+\alpha_t}&\text{ otherwise}
    \end{array}\right.\\
    \Psi_t'''(x) &= \left\{\begin{array}{lr}
    \frac{-k\sqrt{V_t}(1+2F_t(x))}{2(x+\alpha_t)^2F_t(x)^{3/2}}&\text{ if }h_t\sqrt{F_t(x)}\le \sqrt{V_t}\\
    \frac{-kh_t}{(x+\alpha_t)^2}&\text{ otherwise}
    \end{array}\right.
\end{align*}
Therefore, $\Psi_t(x)\ge0$, $\Psi'_t(x)\ge 0$, $\Psi''_t(x)\ge 0$ and $\Psi'''_t(x)\le 0$. Further, if we define $x_0 = \alpha_t (e-1)$, then for any $> x_0$ we have $\sqrt{F_t(x)}\ge \frac{1}{\sqrt{F_t(x)}}$ and:
\begin{align*}
    -\frac{\Psi_t'''(x)}{\Psi_t''(x)^2}&\le \left\{\begin{array}{lr}
    \frac{1}{2k\sqrt{V_t}}\left(\frac{1}{\sqrt{F_t(x)}}+2\sqrt{F_t(x)}\right)&\text{ if }h_t\sqrt{F_t(x)}\le \sqrt{V_t}\\
    \frac{1}{kh_t}&\text{ otherwise}
    \end{array}\right.\\
    &\le \left\{\begin{array}{lr}
    \frac{3\sqrt{F_t(x)}}{2k\sqrt{V_t}}&\text{ if }h_t\sqrt{F_t(x)}\le \sqrt{V_t}\\
    \frac{1}{kh_t}&\text{ otherwise}
    \end{array}\right.\\
    \intertext{using $k\ge 3$:}
    &\le \frac{1}{2}\min\left(\sqrt{\frac{F_t(x)}{V_t}},\frac{1}{h_t}\right)
\end{align*}
Now, if we define $Z_t(x) = \int_0^{x} \min\left(\sqrt{\frac{F_t(\overline{x})}{V_t}},\frac{1}{h_t}\right)\ d\overline{x}$, then we have 
\begin{align*}
    -\frac{\Psi_t'''(x)}{\Psi_t''(x)^2}&\le \frac{1}{2}Z_t'(x)
\end{align*}
Clearly $Z_t$ is convex, $1/h_t$ Lipschitz, and achieves its minimum value of $0$ at $0$. Therefore, by \cite{jacobsen2022parameter} Lemma 2, we have:
\begin{align*}
    g_t(w_t - w_{t+1}) - D_{\psi_t}(w_{t+1}|w_t) - Z_t(|w_{t+1}|) g_t^2&\le \frac{2g_t^2}{\Psi''(x_0)},\\
    &\le \frac{2g_t^2(x_0+\alpha_t)}{k\sqrt{V_t}},\\
    &=\frac{2g_t^2 \alpha_t e}{k\sqrt{V_t}},\\
    &\le \frac{2g_t^2 \alpha_t }{\sqrt{V_t}}.
\end{align*}
So, now if we could show that $\Delta_t(w) \ge Z_t(|w|)g_t^2$, this would establish (\ref{eqn:stability}). In turn, since  $\Delta_t(w) = \Psi_{t+1}(|w|) - \Psi_t(|w|)$, it suffices to establish:
\begin{align*}
    \Psi'_{t+1}(x) - \Psi'_t(x) &\ge Z'(x) g_t^2 = g_t^2 \min\left(\sqrt{\frac{F_t(x)}{V_t}},\frac{1}{h_t}\right).
\end{align*}
To this end, we compute:
\begin{align*}
    \Psi'_{t+1}(x) - \Psi'_t(x) &= k \min_{\eta \le 1/h_{t+1}}\left[ \frac{F_{t+1}(x)}{\eta} + \eta V_{t+1}\right]- k \min_{\eta \le 1/h_{t}}\left[ \frac{F_{t}(x)}{\eta} + \eta V_{t}\right],\\
    &\ge k \min_{\eta \le 1/h_t}\left[ \frac{F_{t+1}(x)}{\eta} + \eta V_{t+1}\right]- k \min_{\eta \le 1/h_{t}}\left[ \frac{F_{t}(x)}{\eta} + \eta V_{t}\right].
\end{align*}
Next, let us define $\delta_m = h_{t+1}^2 - h_t^2$ so that $V_{t+1} = V_t + \delta_m + g_t^2$. Then we have $\frac{F_{t+1}(x)}{\eta} + \eta V_{t+1} \ge \min_{\eta'}\left[\frac{F_{t+1}(x)}{\eta'} + \eta'V_t\right] + \eta(\delta_m + g_t^2)$. Armed with this calculation, we proceed:
\begin{align*}
    \Psi'_{t+1}(x) - \Psi'_t(x) &\ge k(\delta_m + g_t^2)\min\left[\sqrt{\frac{F_{t+1}(x)}{V_{t+1}}},\frac{1}{h_t}\right] + k \min_{\eta \le 1/h_t}\left[ \frac{F_{t+1}(x)}{\eta} + \eta V_t\right]- k \min_{\eta \le 1/h_{t}}\left[ \frac{F_{t}(x)}{\eta} + \eta V_{t}\right],\\
    \intertext{now, since $\alpha_t\ge\alpha_{t+1}$, we have $F_{t+1}\ge F_t$ so that:}
    &\ge k(\delta_m + g_t^2)\min\left[\sqrt{\frac{F_{t+1}(x)}{V_{t+1}}},\frac{1}{h_t}\right].
\end{align*}
Next, observe that
\begin{align*}
    \frac{d}{d\delta_m} \frac{\delta_m + g_t^2}{\sqrt{V_t + \delta_m +g_t^2}}&=\frac{\delta_m^2 + 2V_t + g_t^2}{2(V_t + \delta_m+g_t^2)^{3/2}}\ge 0.
\end{align*}
Therefore
\begin{align*}
    \frac{\delta_m + g_t^2}{\sqrt{V_{t+1}}}&=\frac{\delta_m + g_t^2}{\sqrt{V_t + \delta_m +g_t^2}},\\
    &\ge \frac{g_t^2}{\sqrt{V_t + g_t^2}},\\
    &\ge \frac{g_t^2}{\sqrt{V_t}} \sqrt{\frac{V_t}{V_t+g_t^2}},\\
    &\ge \frac{g_t^2}{\sqrt{V_t}}\sqrt{\frac{h_t^2}{h_t^2 +g_t^2}},\\
    &\ge \frac{g_t^2}{\sqrt{2V_t}}.
\end{align*}
This implies that
\begin{align*}
    (\delta_m + g_t^2)\sqrt{\frac{F_t(x)}{V_{t+1}}}&\ge \frac{g_t^2}{\sqrt{2}}\sqrt{\frac{F_t(x)}{V_t}}.
\end{align*}
So, altogether we have:
\begin{align*}
    \Psi'_{t+1}(x) - \Psi'_t(x) &\ge \frac{kg_t^2}{\sqrt{2}}\min\left[\sqrt{\frac{F_{t+1}(x)}{V_t}},\frac{1}{h_t}\right],\\
    &\ge g_t^2\min\left[\sqrt{\frac{F_{t+1}(x)}{V_t}},\frac{1}{h_t}\right],\\
    &=Z_t'(x)g_t^2,
\end{align*}
as desired.
\end{proof}

\section{Fully Unconstrained Learning via Regularization}\label{sec:viaregularization}
In this section, we provide a formal description of how to achieve a fully unconstrained bound via application of some peculiar regularization terms, as sketched in Section~\ref{sec:overview_from_reg}.


The goal is to ensure regret given by (\ref{eqn:regularizedgoal}), restated below:
\begin{align}
    \sum_{t=1}^T g_t(w_t - u) + a_t(\psi(w_t) - \psi(u))&\le \tilde O\left( |u|\sqrt{h_T^2+\sum_{t=1}^T g_t^2} + \psi(u) \sqrt{\gamma^2+\sum_{t=1}^T a_t^2}\right).\tag{\ref{eqn:regularizedgoal}}
\end{align}
In Section~\ref{sec:fullmatrix}, we will see how to obtain the bound (\ref{eqn:regularizedgoal}) via a general technique for obtaining constrained ``full-matrix'' regret bounds (which is of independent interest). However, this approach comes with a mild computational overhead. To counteract this, in Section~\ref{sec:hints}, we provide an alternative approach that has the same computational complexity as gradient descent, but achieves the slightly weaker bound:
\begin{align}
    \sum_{t=1}^T g_t(w_t - u) + a_t(\psi(w_t) - \psi(u))&\le \tilde O\left( |u|\sqrt{h_T^2+\sum_{t=1}^T g_t^2} + \psi(u) \sqrt{\gamma^2+\gamma\sum_{t=1}^T a_t}\right).\label{eqn:relaxedregularizedgoal}
\end{align}
Fortunately, (\ref{eqn:relaxedregularizedgoal}) will also be sufficient for our purposes.

Armed with an algorithm that achieves (\ref{eqn:relaxedregularizedgoal}), we are ready to describe our approach for fully unconstrained learning.


\begin{restatable}{Corollary}{corfinalpzeroqone}\label{cor:finalpzeroqone}
There exists an online learning algorithm that requires $O(d)$ space and takes $O(d)$ time per update, takes as input scalar values $\gamma$, $h_1$, and $\epsilon$ and ensures that for any sequence $g_1,g_2,\dots\subset \R^d$, the outputs $w_1,w_1,\dots\subset \R^d$ satisfy for all $w_\star$ and $T$:
\begin{align*}
    \sum_{t=1}^T &\langle g_t, w_t -w_\star\rangle\le O\left[ \epsilon\sqrt{V}+\|w_\star\|\sqrt{V\log\left(e+\frac{\|w_\star\|}{ \epsilon}\right)}+ \|w_\star\|G\log\left(e+\frac{\|w_\star\|}{\epsilon}\right)  \right.\\
    & \left.+ \epsilon^2 \gamma \sqrt{\log\left(e+\log\left(e+\frac{G}{h_1}\right)\right)} +\frac{G^2}{\gamma}\log\left( e+\frac{G}{h_1}\right) + \gamma w_\star^2  \log\left(e+\frac{\|w_\star\|^2}{\epsilon^2}\log\left(e+\frac{G}{h_1}\right)\right) \right].
\end{align*}
where $G=h_1+\max_t \|g_t\|$ and $V=G^2+\sum_{t=1}^T \|g_t\|^2$.
\end{restatable}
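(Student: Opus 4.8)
The plan is to follow the reduction sketched in Section~\ref{sec:overview_from_reg} essentially verbatim, but to instantiate the regularized-online-learning subroutine with the $p=0$ variant of Algorithm~\ref{alg:base} --- equivalently, with the bound of Corollary~\ref{cor:constrainphalf} --- rather than the $p=1/2$ variant used to prove Theorem~\ref{thm:finalphalfqone}. It is precisely this single substitution that turns the $\epsilon G$ term into $\epsilon\sqrt{V}$ and strips the $\log^2(T)$ factors out of the inner logarithms, at the price of the mild doubly-logarithmic factor on the $\epsilon^2\gamma$ term. By Theorem~\ref{thm:1dreduction} it suffices to handle $\calW=\R$: Algorithm~\ref{alg:1dreduction} reduces the general case to the scalar case at the cost of an additive $4\|w_\star\|\sqrt{2\sum_t\|g_t\|^2}=O(\|w_\star\|\sqrt{V})$, while scaling the scalar feedback magnitudes by at most $2$, so $G$ and $V$ change only by constant factors. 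Hence I work with scalars below.

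In the scalar case I would set $h_t=\max(h_1,|g_1|,\dots,|g_{t-1}|)$ (available before $g_t$, with $G=h_{T+1}$), clip to $\tilde g_t=(1\wedge h_t/|g_t|)g_t$ so that $|\tilde g_t|\le h_t$, take $\psi(w)=w^2$, and choose
\[
a_t \;=\; \gamma\cdot\frac{(h_{t+1}-h_t)/h_{t+1}}{1+\sum_{i=1}^t(h_{i+1}-h_i)/h_{i+1}}\ \in\ [0,\gamma],
\]
exactly as in Section~\ref{sec:overview_from_reg}. The crucial structural point is that $a_t\neq 0$ forces $h_{t+1}>h_t$, hence $|g_t|>h_t$ and $|\tilde g_t|=h_t$; thus the side condition ``$a_t=0$ unless $|\tilde g_t|=h_t$'' required by Algorithm~\ref{alg:specialcomposite} is met, and I may feed $(h_t,\tilde g_t,a_t)$ to the algorithm of Corollary~\ref{cor:constrainphalf} (Algorithm~\ref{alg:specialcomposite} with $\base=$ Algorithm~\ref{alg:base}, $p=0$, $\epsilonx=\epsilon$, $\epsilonpsi=\epsilon^2$). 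Playing its output $w_t$ and using $\psi\ge0$, $a_t\ge0$ together with $|\tilde g_t-g_t||w_t|\le(h_{t+1}-h_t)|w_t|$ gives the decomposition of Section~\ref{sec:overview_from_reg}:
\[
\sum_{t=1}^T g_t(w_t-w_\star)\le \underbrace{\sum_{t=1}^T\big(\tilde g_t(w_t-w_\star)+a_tw_t^2-a_tw_\star^2\big)}_{\text{bounded by Corollary~\ref{cor:constrainphalf}}}+\,G|w_\star|+w_\star^2\sum_{t=1}^Ta_t+\sum_{t=1}^T\big((h_{t+1}-h_t)|w_t|-a_tw_t^2\big).
\]

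Then I bound the four pieces exactly as in the proof sketch of Theorem~\ref{thm:finalphalfqone}. (i) The first sum is controlled by Corollary~\ref{cor:constrainphalf} with $V_g=h_T^2+\sum_t\tilde g_t^2\le V$ and $S_a=\gamma^2+\gamma\sum_ta_t$; because $p=0$ makes the constants $\calAx,\dots,\calDpsi$ all $O(1)$, this produces precisely the $\epsilon\sqrt{V}$, $\|w_\star\|\sqrt{V\log(e+\|w_\star\|/\epsilon)}$, $\|w_\star\|G\log(e+\|w_\star\|/\epsilon)$, $\epsilon^2\sqrt{S_a}$, and $w_\star^2\big(\sqrt{S_a\log(\cdots)}+\gamma\log(\cdots)\big)$ terms. (ii) Using $\sum_t p_t/\sum_{i\le t}p_i\le\log(\sum_t p_t/p_0)$ gives $\sum_t a_t\le\gamma\log(1+\log(G/h_1))$, hence $S_a\le\gamma^2(1+\log(1+\log(G/h_1)))$ and $\epsilon^2\sqrt{S_a}=O\big(\epsilon^2\gamma\sqrt{\log(e+\log(e+G/h_1))}\big)$. (iii) The per-round estimate $(h_{t+1}-h_t)|w_t|-a_tw_t^2\le (h_{t+1}-h_t)^2/(4a_t)$ telescopes to $\sum_t(\cdots)\le\frac{G^2}{4\gamma}(1+\log(G/h_1))$, giving the $\frac{G^2}{\gamma}\log(e+G/h_1)$ term. (iv) $G|w_\star|$ and $w_\star^2\sum_t a_t$ are absorbed into the already-present $\|w_\star\|G\log(e+\|w_\star\|/\epsilon)$ and $\gamma w_\star^2\log(\cdots)$ terms. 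Finally one folds the nested logarithms from $S_a$ and from $\calDpsi$ into the three log-expressions in the statement --- using $\sqrt{ab}\le\tfrac12(a+b)$ and the fact that, for $x,y\ge1$, $\log(e+xy)$ exceeds $\tfrac12(\log(e+x)+\log(e+y))$ up to an additive constant, so that $\sqrt{S_a\log(e+w_\star^2/\epsilon^2)}$ merges into $\log\!\big(e+\tfrac{w_\star^2}{\epsilon^2}\log(e+G/h_1)\big)$ --- and lifts the scalar bound to $\R^d$ via Theorem~\ref{thm:1dreduction}.

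The main obstacle is not conceptual but is entirely bookkeeping: one must (a) confirm the ``$a_t=0$ unless $|\tilde g_t|=h_t$'' side condition so that the $O(d)$-time Algorithm~\ref{alg:specialcomposite} applies (rather than the heavier full-matrix routine of Appendix~\ref{sec:fullmatrix}), and (b) check that every triple-nested logarithm produced by $S_a$, by $\calDpsi$, and by the telescoping bounds really does collapse into the three log-factors displayed in the statement, with the claimed $O(\cdot)$ constants. Everything else is identical to the derivation of Theorem~\ref{thm:finalphalfqone} with $p=1/2$ replaced by $p=0$.
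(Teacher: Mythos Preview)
Your proposal is correct and follows essentially the same approach as the paper: instantiate Algorithm~\ref{alg:fullyunconstrainedhighd} with $q=1$ (i.e.\ $\psi(w)=w^2$) and $\reg$ set to Algorithm~\ref{alg:specialcomposite} using Algorithm~\ref{alg:base} with $p=0$, then invoke Theorem~\ref{thm:fullreductionhighd} together with the $p=0$ regularized bound. You have simply unpacked the chain of reductions (Theorem~\ref{thm:fullreduction}, the clipping/regularization decomposition of Section~\ref{sec:overview_from_reg}, and the dimension reduction of Theorem~\ref{thm:1dreduction}) more explicitly than the paper's one-line citation, but the argument is identical; note that the paper's proof text cites Corollary~\ref{cor:constrainpzero}, which appears to be a typo for Corollary~\ref{cor:constrainphalf}, since the latter is the $p=0$ bound matching the $\epsilon\sqrt{V}$ and $\log^2(T)$-free logarithms in the statement.
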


\begin{proof}
    Apply Algorithm~\ref{alg:fullyunconstrainedhighd} with $q=1$, and $\reg$ set to Algorithm~\ref{alg:specialcomposite} using Algorithm~\ref{alg:base} with $p=0$ as $\base$. The result in 1 dimension then follows from Theorem~\ref{thm:fullreductionhighd} and Corollary~\ref{cor:constrainpzero}. Then by the reduction from $d$-dimensional  online learning to 1-dimensional online learning (\cite{cutkosky2018black} Theorem 2), the result in  high  dimensions also  follows.
\end{proof}

\thmfinalphalfqone*
\begin{proof}
    Apply Algorithm~\ref{alg:fullyunconstrainedhighd} with $q=1$, and $\reg$ set to Algorithm~\ref{alg:specialcomposite} using Algorithm~\ref{alg:base} with $p=1/2$ as $\base$. The result in 1 dimension then follows from Theorem~\ref{thm:fullreductionhighd} and Corollary~\ref{cor:constrainpzero}. Then by the reduction from $d$-dimensional  online learning to 1-dimensional online learning (\cite{cutkosky2018black} Theorem 2), the result in  high  dimensions also  follows.
\end{proof}

\begin{restatable}{Theorem}{thmfinalphalfpsi}\label{thm:finalphalfpsi}
There exists an online learning algorithm that requires $O(d)$ space and takes $O(d)$ time per update, takes as input scalar values $\gamma$, $h_1$, and $\epsilon$ and a symmetric convex function $\psi$ and ensures that for any sequence $g_1,g_2,\dots\subset \R^d$, the outputs $w_1,w_1,\dots\subset \R^d$ satisfy for all $w_\star$ and $T$:
\begin{align*}
    &\sum_{t=1}^T \langle g_t, w_t -w_\star\rangle\le O\left[ \epsilon G +\|w_\star||\sqrt{V\log\left(e+\frac{\|w_\star\|\sqrt{V} \log^2(T)}{h_1 \epsilon}\right)}+ \|w_\star\|G\log\left(e+\frac{\|w_\star\|\sqrt{V} \log^2(T)}{h_1\epsilon}\right)  \right.\\
    &\qquad \left.+\psi(\epsilon) \gamma + \gamma \psi(\|w_\star\|)  \log\left(e+\frac{\psi(\|w_\star\|)}{\psi(\epsilon)}\log\left(e+\frac{G}{h_1}\right)\right) +\gamma\log\left(1+\log\left(\frac{G}{h_1}\right)\right)\psi^\star\left(\frac{G}{\gamma}\left[1+\log\left(\frac{G}{h_1}\right)\right]\right)\right],
\end{align*}
where $\psi^\star(\theta) = \sup_{w} \theta w - \psi(w)$ is the Fenchel conjugate of $\psi$, $G=\max(h_1, \max_t \|g_t\|)$ and $V=G^2 + \sum_{t=1}^T  \|g_t\|^2$.
\end{restatable}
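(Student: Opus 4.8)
The plan is to run the two-stage reduction used for Theorem~\ref{thm:finalphalfqone}, but carry the general symmetric convex $\psi$ (which we may take to satisfy $\psi(0)=0$, hence $\psi\ge 0$, as required by Protocol~\ref{proto:reg}) through the argument instead of specializing to $\psi(w)=w^2$. First, Theorem~\ref{thm:1dreduction} reduces the $d$-dimensional problem to $\calW=\R$: the regret splits into $4\|w_\star\|\sqrt{2\sum_t\|g_t\|^2}$, absorbed into the $\|w_\star\|\sqrt{V\log(\cdot)}$ term, plus the regret of a one-dimensional learner fed gradients $g^{1d}_t$ with $|g^{1d}_t|\le 2\|g_t\|$ (so the one-dimensional $G$ and $V$ are within constant factors of the stated ones). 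On $\R$ we apply the reduction of Section~\ref{sec:overview_from_reg}: set $h_t=\max(h_1,|g_1|,\dots,|g_{t-1}|)$, clip $\tilde g_t=(1\wedge \tfrac{h_t}{|g_t|})g_t$, and feed to an algorithm $\reg$ for Protocol~\ref{proto:reg} the hint $h_t$, the gradient $\tilde g_t$, and $a_t=\gamma\,\tfrac{(h_{t+1}-h_t)/h_{t+1}}{1+\sum_{i=1}^t(h_{i+1}-h_i)/h_{i+1}}\in[0,\gamma]$. Here $\reg$ is Algorithm~\ref{alg:specialcomposite} built on Algorithm~\ref{alg:base} with $p=1/2$ and $\epsilonx=\epsilon$, $\epsilonpsi=\psi(\epsilon)$, so its guarantee is Corollary~\ref{cor:constrainpzero}; its structural hypothesis ``$a_t=0$ unless $|\tilde g_t|=h_t$'' holds since $h_{t+1}>h_t$ forces $|g_t|>h_t$, i.e.\ $|\tilde g_t|=h_t$. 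Exactly as in the sketch, for every $w_\star$,
\begin{align*}
\sum_{t=1}^T g_t(w_t-w_\star)\le \mathrm{(I)}+h_{T+1}|w_\star|+\psi(w_\star)\textstyle\sum_{t=1}^T a_t+\mathrm{(II)},
\end{align*}
with $\mathrm{(I)}=\sum_t\big(\tilde g_t(w_t-w_\star)+a_t\psi(w_t)-a_t\psi(w_\star)\big)$ and $\mathrm{(II)}=\sum_t\big((h_{t+1}-h_t)|w_t|-a_t\psi(w_t)\big)$.

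The only genuinely new step is the bound on $\mathrm{(II)}$. By Fenchel--Young applied to $a_t\psi$, whose conjugate is $\theta\mapsto a_t\psi^\star(\theta/a_t)$, we get $(h_{t+1}-h_t)|w_t|-a_t\psi(w_t)\le a_t\psi^\star\big((h_{t+1}-h_t)/a_t\big)$. The sketch already shows $(h_{t+1}-h_t)/a_t=h_{t+1}\big(1+\sum_{i\le t}(h_{i+1}-h_i)/h_{i+1}\big)/\gamma\le \tfrac{G}{\gamma}(1+\log(G/h_1))$ and $\sum_t a_t\le\gamma\log(1+\log(G/h_1))$; since $\psi^\star$ is nondecreasing on $[0,\infty)$,
\begin{align*}
\mathrm{(II)}\le \psi^\star\!\Big(\tfrac{G}{\gamma}\big(1+\log(G/h_1)\big)\Big)\sum_{t=1}^T a_t\le \gamma\log\!\big(1+\log(G/h_1)\big)\,\psi^\star\!\Big(\tfrac{G}{\gamma}\big(1+\log(G/h_1)\big)\Big),
\end{align*}
which is the last term of the claimed bound. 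Moreover $h_{T+1}|w_\star|=G|w_\star|$ and $\psi(w_\star)\sum_t a_t\le\gamma\psi(\|w_\star\|)\log(1+\log(G/h_1))$ are dominated by the $\|w_\star\|G\log(\cdot)$ and $\gamma\psi(\|w_\star\|)\log(\cdot)$ terms.

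For $\mathrm{(I)}$, plug into Corollary~\ref{cor:constrainpzero} the estimates $V_g=h_T^2+\sum_t\tilde g_t^2\le V$ and $S_a=\gamma^2+\gamma\sum_t a_t\le\gamma^2\big(1+\log(1+\log(G/h_1))\big)$. The $\epsilon h_T$ term becomes $O(\epsilon G)$; the $|x_\star|\sqrt{V_g\log(\cdot)}$ and $|x_\star|h_T\log(\cdot)$ terms become the $\|w_\star\|\sqrt{V\log(\cdot)}$ and $\|w_\star\|G\log(\cdot)$ terms; $\psi(\epsilon)\gamma$ appears verbatim; and the two $\psi(x_\star)$ terms, after absorbing the harmless $\log(1+\log(1+\log(G/h_1)))$ factor into constants, collapse to $O\!\big(\gamma\psi(\|w_\star\|)\log(e+\tfrac{\psi(\|w_\star\|)}{\psi(\epsilon)}\log(e+G/h_1))\big)$. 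Summing all contributions and then invoking \cite{cutkosky2018black} Theorem 2 to return from $\R$ to $\R^d$ yields the theorem; the $O(d)$ time and space are inherited from $\reg$ and the reductions.

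\textbf{Main obstacle.} The delicate part is the bookkeeping of logarithmic factors in $\mathrm{(I)}$: $\reg$ only controls $\mathrm{(I)}$ through $S_a=\gamma^2+\gamma\sum_t a_t$ (the efficient bound of Section~\ref{sec:reg}, not $\gamma^2+\sum_t a_t^2$), so one must verify that the peculiar choice of $a_t$ keeps $\sum_t a_t$ --- and hence the nested $\log S_a$ --- doubly logarithmic in $G/h_1$ and foldable into a single outer $\log(e+\cdots)$. A minor side issue is the degenerate case $\psi(\epsilon)=0$ (possible for a symmetric convex $\psi$ vanishing on a neighborhood of $0$), where Corollary~\ref{cor:constrainpzero} does not literally apply; there one replaces $\epsilon$ by the infimum of $\{x>0:\psi(x)>0\}$ or simply drops the regularizer, the offending term then contributing nothing.
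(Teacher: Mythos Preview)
Your proposal is correct and follows essentially the same route as the paper: the paper's proof simply cites Theorem~\ref{thm:fullreductionhighd} (whose proof is exactly your decomposition into $\mathrm{(I)}$, $h_{T+1}|w_\star|$, $\psi(w_\star)\sum_t a_t$, and $\mathrm{(II)}$, with the Fenchel--Young bound $(h_{t+1}-h_t)|w_t|-a_t\psi(|w_t|)\le a_t\psi^\star((h_{t+1}-h_t)/a_t)$ for $\mathrm{(II)}$) together with Corollary~\ref{cor:constrainpzero} for $\mathrm{(I)}$, and then the \cite{cutkosky2018black} reduction to lift from $\R$ to $\R^d$. Your flagging of the $\psi(\epsilon)=0$ edge case is a nice observation the paper glosses over.
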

\begin{proof}
    Apply Algorithm~\ref{alg:fullyunconstrainedhighd} with $\reg$ set to Algorithm~\ref{alg:specialcomposite} using Algorithm~\ref{alg:base} with $p=1/2$ as $\base$. The result in 1 dimension then follows from Theorem~\ref{thm:fullreductionhighd} and Corollary~\ref{cor:constrainpzero}. Then by the reduction from $d$-dimensional  online learning to 1-dimensional online learning (\cite{cutkosky2018black} Theorem 2), the result in  high  dimensions also  follows.
\end{proof}

\begin{algorithm}[t]
   \caption{Fully Unconstrained Learning in One Dimension}
   \label{alg:fullyunconstrained}
  \begin{algorithmic}
      \STATE{\bfseries Input: } Regularized learning algorithm $\reg$ with domain $\R$. Parameter $\gamma>0$, $h_1>0$.
      \STATE Initialize $\reg$ with parameters $\epsilon$ and $\gamma$.
      \FOR{$t=1\dots T$}
      \STATE Send $h_t$ to $\reg$ as the $t$th magnitude hint.
      \STATE Get  $w_t$ from $\reg$.
      \STATE Play $w_t$, see feedback $g_t$.
      \STATE Set $h_{t+1} = \max(h_t, |g_t|)$.
      \STATE Set $\tilde g_t = \clip_{[-h_t, h_t]} g_t$
      \STATE Set $a_t = \gamma \frac{(h_{t+1}-h_t)/h_{t+1}}{1+\sum_{i=1}^t (h_{i+1}-h_i)/h_{i+1}}$.
      \STATE Send $\tilde g_t, a_t$, to $\reg$ as $t$th loss and regularization coefficient.
      \ENDFOR
   \end{algorithmic}
\end{algorithm}

\begin{restatable}{Theorem}{thmfullreduction}\label{thm:fullreduction}
Suppose $\psi$ is a symmetric convex function. Suppose that so long as $h_t\ge |\tilde g_t|$, $\reg$ ensures for some $A,B,C,D,p,\epsilon$:
\begin{align*}
   & \sum_{t=1}^T \tilde g_t(w_t - w_\star) + a_t(\psi(w_t) - \psi(w_\star))\nn\\ &\le  C\epsilon h_T^{2p} V_g^{1/2-p}  +C \psi(\epsilon)\gamma^{2p} S_a^{1/2-p} + A|w_\star|\sqrt{V_g\log\left(e+\frac{D|x_\star|V_g^{p}}{h_1^{2p} \epsilon}\right)}\\
&+ Bh_T |w_\star|\log\left(e+\frac{D|w_\star|V_g^{p}}{h_1^{2p} \epsilon}\right)\\
    &+  A\psi(w_\star) \sqrt{S_a\log\left[e + \frac{D\psi(w_\star)}{\gamma^{2p}\psi(\epsilon)}S_a^{p} \right]}\\
    &+ \gamma B\psi(w_\star)\log\left[e + \frac{D\psi(w_\star)}{\gamma^{2p}\psi(\epsilon)}S_a^{p} \right],
\end{align*}
where $V_g = h_T^2+\sum_{t=1}^T \tilde g_t^2$ and $S_a = \gamma^2+\gamma\sum_{t=1}^T a_t$.
Then Algorithm~\ref{alg:fullyunconstrained} ensures:
\begin{align*}
    S_a&\le \gamma^2 + \gamma^2 \log\left(1+\min\left[\log\left(\frac{h_T}{h_1}\right), T\right]\right),\\
    V_g&\le h_T^2+\sum_{t=1}^T g_t^2,
\end{align*}
and:
\begin{align*}
    \sum_{t=1}^T g_t(w_t-w_\star)&\le C  \epsilon h_T^{2p} V_g^{1/2-p}+ C \psi(\epsilon)\gamma^{2p}S_a^{1/2-p} + A|w_\star|\sqrt{V_g\log\left(e+\frac{D|w_\star|V_g^{p}}{h_1^{2p} \epsilon}\right)}\\
&+ Bh_T |w_\star|\log\left(e+\frac{D|w_\star|V_g^{p}}{h_1^{2p} \epsilon}\right)\\
    &+  A\psi(w_\star) \sqrt{S_a\log\left[e + \frac{D\psi(w_\star)}{\gamma^{2p}\psi(\epsilon)}S_a^{p} \right]}\\
    &+ \gamma B\psi(w_\star)\log\left[e + \frac{D\psi(w_\star)}{\gamma^{2p}\psi(\epsilon)}S_a^{p} \right]\\
    &+h_T |u| +  \psi(w_\star)S_a\\
    &+\gamma\log\left(1+\min\left[ \log\left(\frac{h_T}{h_1}\right), T\right]\right)\psi^\star\left( \frac{h_T}{\gamma} \left[1+ \log\left(\frac{h_T}{h_1}\right)\right]\right)
\end{align*}
In the special case that  $\psi(x) = \frac{|x|^{1+q}}{1+q}$, we can replace the final term $\gamma \log\left(1+\min\left[ \log\left(\frac{h_T}{h_1}\right), T\right]\right)\psi^\star\left( h_T \left[1+ \log\left(\frac{h_T}{h_1}\right)\right]\right)$ in the above expression by:
\begin{align*}
    \frac{h_T^{1+1/q}\left[1+\log\left(\frac{h_T}{h_1}\right)\right]^{1/q}}{(1+1/q)\gamma^{1/q}}.
\end{align*}
\end{restatable}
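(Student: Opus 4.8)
The plan is to turn the heuristic calculation of Section~\ref{sec:overview_from_reg} into a proof, now carrying an arbitrary symmetric convex $\psi$ (normalized so $\psi\ge 0$, $\psi(0)=0$, which changes nothing since $a_t(\psi(w_t)-\psi(w_\star))$ is shift-invariant) instead of only $\psi(w)=w^2$. I write $h_{T+1}=\max(h_1,|g_1|,\dots,|g_T|)$ for the quantity the statement calls $h_T$; the hint $h_t=\max(h_1,|g_1|,\dots,|g_{t-1}|)$ that Algorithm~\ref{alg:fullyunconstrained} feeds to $\reg$ at round $t$ is non-decreasing with $|\tilde g_t|\le h_t$, and $a_t=\gamma(h_{t+1}-h_t)/(h_{t+1}R_t)\in[0,\gamma]$ where $R_t=1+\sum_{i=1}^t(h_{i+1}-h_i)/h_{i+1}$, so the hypothesized regret bound of $\reg$ applies. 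The only combinatorial identity needed is that the clipping error is exactly $|g_t-\tilde g_t|=\max(0,|g_t|-h_t)=h_{t+1}-h_t$.

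With that in hand I split off the clipping error and apply the $\reg$ bound:
\begin{align*}
\sum_{t=1}^T g_t(w_t-w_\star) &\le \sum_{t=1}^T \tilde g_t(w_t-w_\star)+\sum_{t=1}^T(h_{t+1}-h_t)\big(|w_t|+|w_\star|\big)\\
&=\Big[\sum_{t=1}^T\tilde g_t(w_t-w_\star)+a_t\big(\psi(w_t)-\psi(w_\star)\big)\Big]\\
&\qquad+\psi(w_\star)\sum_{t=1}^T a_t+|w_\star|\sum_{t=1}^T(h_{t+1}-h_t)+\sum_{t=1}^T\Big((h_{t+1}-h_t)|w_t|-a_t\psi(w_t)\Big).
\end{align*}
The bracketed sum is controlled by the assumed regret bound of $\reg$ (with its $V_g,S_a$ left symbolic); $|w_\star|\sum_t(h_{t+1}-h_t)=|w_\star|(h_{T+1}-h_1)\le h_{T+1}|w_\star|$ gives the $h_T|u|$ term; and $\psi(w_\star)\sum_t a_t$ is the cross term. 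So everything reduces to (i) estimating $\sum_t a_t$ (equivalently $S_a$) and (ii) estimating the last sum.

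For (i) I would apply $\log x\ge 1-1/x$ twice: with $p_0=1$ and $p_t=(h_{t+1}-h_t)/h_{t+1}\in[0,1]$ it gives $\sum_t p_t/\sum_{i\le t}p_i\le\log(1+\sum_t p_t)$, and $\sum_t p_t\le\min(\log(h_{T+1}/h_1),T)$ since each $p_t\le 1$ and $p_t\le\log(h_{t+1}/h_t)$ telescopes; hence $\sum_t a_t\le\gamma\log(1+\min[\log(h_{T+1}/h_1),T])$, yielding the claimed $S_a$ bound, and (with $|\tilde g_t|\le|g_t|$) the claimed $V_g$ bound. For (ii) — that the same $a_t$ is nonetheless large enough — use symmetry of $\psi$ and the Fenchel conjugate: for each $t$ with $h_{t+1}>h_t$,
\begin{align*}
(h_{t+1}-h_t)|w_t|-a_t\psi(w_t)\le a_t\,\psi^\star\!\left(\frac{h_{t+1}-h_t}{a_t}\right)=a_t\,\psi^\star\!\left(\frac{h_{t+1}R_t}{\gamma}\right),
\end{align*}
the term vanishing when $h_{t+1}=h_t$. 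Since $\psi^\star$ is non-decreasing on $\R_{\ge 0}$ and $h_{t+1}R_t\le h_{T+1}(1+\log(h_{T+1}/h_1))$, summing and using (i) bounds $\sum_t((h_{t+1}-h_t)|w_t|-a_t\psi(w_t))$ by $\gamma\log(1+\min[\log(h_{T+1}/h_1),T])\,\psi^\star\!\big(\tfrac{h_{T+1}}{\gamma}[1+\log(h_{T+1}/h_1)]\big)$, the trailing term of the claim. Assembling the four pieces proves the general statement.

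For the special case $\psi(x)=|x|^{1+q}/(1+q)$ one has $\psi^\star(\theta)=\tfrac{q}{1+q}\theta^{1+1/q}$ for $\theta\ge 0$, so the per-round bound $a_t\psi^\star(h_{t+1}R_t/\gamma)$ collapses to $\tfrac{1}{(1+1/q)\gamma^{1/q}}(h_{t+1}-h_t)h_{t+1}^{1/q}R_t^{1/q}$; bounding $h_{t+1}^{1/q}R_t^{1/q}\le h_{T+1}^{1/q}(1+\log(h_{T+1}/h_1))^{1/q}$ and telescoping $\sum_t(h_{t+1}-h_t)\le h_{T+1}$ gives $\tfrac{h_{T+1}^{1+1/q}[1+\log(h_{T+1}/h_1)]^{1/q}}{(1+1/q)\gamma^{1/q}}$, sharper than the generic term. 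I expect the real obstacle to be step (ii): recognizing that this specific $a_t$ makes $(h_{t+1}-h_t)/a_t$ collapse to $h_{t+1}R_t/\gamma$, so that after summing the Fenchel terms the \emph{same} doubly-logarithmic quantity $\sum_t a_t$ reappears as a harmless multiplicative factor — the mechanism by which a vanishingly small amount of regularization still tames the clipping error. Everything else is routine bookkeeping, plus keeping the $h_{T+1}$-versus-$h_T$ indexing straight.
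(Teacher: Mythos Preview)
Your proposal is correct and follows essentially the same route as the paper's proof: the same add-and-subtract decomposition into the $\reg$ regret, the cross term $\psi(w_\star)\sum_t a_t$, the telescoped clipping cost $h_{T+1}|w_\star|$, and the residual $\sum_t\big((h_{t+1}-h_t)|w_t|-a_t\psi(w_t)\big)$; the same two-fold use of $\log x\ge 1-1/x$ to bound $\sum_t a_t$; and the same Fenchel-conjugate step $(h_{t+1}-h_t)|w_t|-a_t\psi(|w_t|)\le a_t\psi^\star(h_{t+1}R_t/\gamma)$ followed by the monotonicity argument and, in the power case, the per-round collapse before summing. Your care about $h_{T+1}$ versus $h_T$ and about writing $\le$ rather than $=$ in the Fenchel step is in fact slightly tidier than the paper.
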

\begin{proof}
We have:
\begin{align*}
  & \sum_{t=1}^T g_t(w_t-u) \\ 
  &= \sum_{t=1}^T \tilde g_t(w_t-u) + a_t(\psi(w_t) - \psi(u)) + a_t\psi(u) + (g_t-\tilde g_t)(w_t - u)-a_t\psi(w_t),\\
   &\le \psi(u)\sum_{t=1}^T a_t  + |u|\sum_{t=1}^T |g_t - \tilde g_t|+\sum_{t=1}^T \tilde g_t(w_t-u) + a_t(\psi(w_t) - \psi(u)), \\
   &\qquad+ \sum_{t=1}^T |g_t-\tilde g_t||w_t| -a_t\psi(w_t)
   \intertext{Observing that $|g_t -\tilde g_t| =h_{t+1}-h_t$:}
   &= \psi(u)\sum_{t=1}^T a_t  + |u|\sum_{t=1}^T [h_{t+1}-h_t]  + \sum_{t=1}^T (h_{t+1}-h_t)|w_t| -a_t\psi(w_t) + \sum_{t=1}^T \tilde g_t(w_t-u) + a_t(\psi(w_t) - \psi(u)).
\end{align*}
Next, we will bound the terms $\sum_{t=1}^T a_t \psi(u)$ and $ |u|\sum_{t=1}^T [|g_t| - h_t]_+$.. Moreover, $h_t = h_{t-1} + [|g_t| - h_t]_+$, so that $|u|\sum_{t=1}^T |g_t -\tilde g_t|\le |u|h_T$.

Further, notice that for any $s_0,s_1,\dots,s_T$, $\sum_{t=1}^T \log\left(\frac{s_t}{\sum_{i=0}^t s_i}\right)\le \log(s_T/s_0)$, so that:
\begin{align*}
    \sum_{t=1}^T a_t &\le \gamma \log\left(1+\sum_{t=1}^T \frac{h_{t+1}-h_t}{h_{t+1}}\right)
\end{align*}
Notice that $[|g_t|-h_t]_+/h_{t+1}\le 1$, so we also have:
\begin{align*}
    \sum_{t=1}^T \frac{h_{t+1}-h_t}{h_{t+1}}&\le \min\left[ \log\left(\frac{h_T}{h_1}\right), T\right]
\end{align*}
so that overall:
\begin{align*}
    S_a = \gamma^2+\gamma \sum_{t=1}^T a_t &\le \gamma^2 + \gamma^2\log\left(1+\min\left[ \log\left(\frac{h_T}{h_1}\right), T\right]\right)
\end{align*}

Next, we bound the terms $(h_{t+1}-h_t) |w_t| - a_t\psi(w_t)$. Let $\psi^\star(w)$ be the Fenchel conjugate of $\psi$. Recall that $\psi$ is symmetric so that $\psi(w_t) = \psi(|w_t|)$. This also implies that $\psi^\star$ is symmetric and is minimized at zero. Thus:
\begin{align*}
    (h_{t+1}-h_t) |w_t| - a_t\psi(w_t)&= (h_{t+1}-h_t) |w_t| - a_t\psi(|w_t|),\\
    &= a_t \psi^\star\left(\frac{h_{t+1}-h_t}{a_t}\right),\\
    &= a_t \psi^\star\left( \frac{h_{t+1}}{\gamma} \left[1+\sum_{i=1}^t \frac{h_{i+1} - h_i}{h_{i+1}}\right]\right).
\end{align*}
So, in general we have:
\begin{align*}
\sum_{t=1}^T (h_{t+1}-h_t) |w_t| - a_t\psi(w_t) &\le \sum_{t=1}^T a_t \psi^\star\left( \frac{h_{t+1}}{\gamma} \left[1+\sum_{i=1}^t \frac{h_{i+1} - h_i}{h_{i+1}}\right]\right),\\
&\le \sum_{t=1}^T a_t \psi^\star\left( \frac{h_T}{\gamma} \left[1+ \log\left(\frac{h_T}{h_1}\right)\right]\right),\\
&\le \gamma \log\left(1+\min\left[ \log\left(\frac{h_T}{h_1}\right), T\right]\right)\psi^\star\left( \frac{h_T}{\gamma} \left[1+ \log\left(\frac{h_T}{h_1}\right)\right]\right).
\end{align*}

In the  special case that $\psi(w) = \frac{|w|^{1+q}}{1+q}$, we have $\psi^\star(h)  =  \frac{h^{1+1/q}}{1+1/q}$ so that we can improve the logarithmic factors and simplify the calculation:
\begin{align*}
    a_t \psi^\star\left( h_{t+1} \left[1+\sum_{i=1}^t \frac{h_{i+1} - h_i}{h_{i+1}}\right]\right)&=\frac{a_t h_{t+1}^{1+1/q}\left[1+\sum_{i=1}^t \frac{h_{i+1} - h_i}{h_{i+1}}\right]^{1+1/q}}{(1+1/q)\gamma^{1+1/q}},\\
    &= \frac{(h_{t+1}-h_t)h_{t+1}^{1/q}\left[1+\sum_{i=1}^t \frac{h_{i+1} - h_i}{h_{i+1}}\right]^{1/q}}{(1+1/q)\gamma^{1/q}},\\
    &= \frac{(h_{t+1}-h_t)h_{t+1}^{1/q}\left[1+\sum_{i=1}^t \frac{h_{i+1} - h_i}{h_{i+1}}\right]^{1/q}}{(1+1/q)\gamma^{1/q}},\\
    &\le \frac{(h_{t+1}-h_t)h_{T}^{1/q}\left[1+\log\left(\frac{h_T}{h_1}\right)\right]^{1/q}}{(1+1/q)\gamma^{1/q}}\\
    \sum_{t=1}^T (h_{t+1}-h_t) |w_t| - a_t\psi(w_t) &\le \frac{h_T^{1+1/q}\left[1+\log\left(\frac{h_T}{h_1}\right)\right]^{1/q}}{(1+1/q)\gamma^{1/q}}.
\end{align*}

Finally, it is clear that $|\tilde g_t|\le h_t$ so the summation $\sum_{t=1}^T \tilde g_t(w_t-u) + a_t(\psi(w_t) - \psi(u))$ is controlled by the  regret bound of $\reg$:
\begin{align*}
     \sum_{t=1}^T \tilde g_t(w_t-u) + a_t(\psi(w_t) - \psi(u))&\le C\epsilon h_T^{2p}V_g^{1/2-p}  +C \psi(\epsilon)\gamma^{2p}S_a^{1/2-p} + A|x_\star|\sqrt{V_g\log\left(e+\frac{D|x_\star|V_g^{p}}{h_1^{2p} \epsilon}\right)}\\
&+ Bh_T |x_\star|\log\left(e+\frac{D|x_\star|V_g^{p}}{h_1^{2p} \epsilon}\right)\\
    &+  A\psi(x_\star) \sqrt{S_a\log\left[e + \frac{D\psi(x_\star)}{\gamma^{2p}\psi(\epsilon)}S_a^{p} \right]}\\
    &+ \gamma B\psi(x_\star)\log\left[e + \frac{D\psi(x_\star)}{\gamma^{2p}\psi(\epsilon)}S_a^{p} \right].
\end{align*}
Finally, we also have:
\begin{align*}
    V_g & = h_T^2 + \sum_{t=1}^T \tilde g_t^2,\\
    &\le h_T^2 + \sum_{t=1}^T g_t^2.
\end{align*}

\end{proof}

\subsection{Full Statement of Main Result in High Dimensions}\label{sec:fullunconstrained}

Throughout this paper, we have considered the special case that $\calW=\R$. This suffices due to the reductions of \cite{cutkosky2018black} as discussed in Section~\ref{sec:1dreduction}. However, here we provide a more complete theorem and algorithm for the case $\calW=\R^d$. The pseudocode is provided in Algorithm~\ref{alg:fullyunconstrainedhighd}, and the regret bound is stated in Theorem~\ref{thm:fullreductionhighd}. Note that the regret bound follows essentially immediately from Theorem~\ref{thm:fullreduction}.

\begin{algorithm}[t]
   \caption{Fully Unconstrained Learning}
   \label{alg:fullyunconstrainedhighd}
  \begin{algorithmic}
      \STATE{\bfseries Input: } Symmetric convex function $\psi:\R\to\R$ with $0=\psi(0)$. Scalars $\epsilon>0$, $h_1$, $\gamma>0$, $p\in[0,1/2]$.
      \STATE Let $\reg$ be an instance of Algorithm~\ref{alg:regularized} with input $\psi$, $\gamma$, $p$, $\epsilonx=\epsilon$ and $\epsilonpsi=\psi(\epsilon)$.
      \STATE Set vector $\vec{w}^{direction}_1 =0$
      \STATE Send $h_1$ to $\reg$ as the first magnitude hint.
      \FOR{$t=1\dots T$}
      \STATE // Apply reduction to 1-dimensional learning from \cite{cutkosky2018black} using adaptive gradient descent as ``direction learner''.
      \STATE Let  $w^{magnitude}_t\in \R$ be the $t$th output of $\reg$.
      \STATE Set $\vec{w}_t = w^{magnitude}_t \cdot \vec{w}^{direction}_t$
      \STATE Play $w_t$, see feedback $g_t$.
      \STATE Set $\vec{w}^{direction}_{t+1} = \Pi_{\|w\|\le 1}\left[\vec{w}^{direction}_t - \frac{g_t}{\sqrt{2\sum_{i=1}^t \|g_i\|^2}}\right]$.
      \STATE // Compute feedback for ``magnitude learner''
      \STATE Set $g^{1d}_t = \langle g_t, d_t\rangle$
      \STATE // Apply our new fully unconstrained magnitude learner.
      \STATE Set $h_{t+1} = \max(h_t, | g^{1d}_t|)$.
      \STATE Set $\tilde g_t = \clip_{[-h_t, h_t]} g^{1d}_t$
      \STATE Set $a_t = \gamma \frac{(h_{t+1}-h_t)/h_{t+1}}{1+\sum_{i=1}^t (h_{i+1}-h_i)/h_{i+1}}$.
      \STATE Send $\tilde g_t, a_t$ to $\reg$ as $t$th loss and regularization coefficient.
      \STATE Send $h_{t+1}$ to $\reg$ as the $t+1$st magnitude hint.
      \ENDFOR
   \end{algorithmic}
\end{algorithm}

\begin{restatable}{Theorem}{thmfullreductionhighd}\label{thm:fullreductionhighd}
There exists universal constants $A$, $B$, $C$, such that Algorithm~\ref{alg:fullyunconstrainedhighd} guarantees for all $T$:
\begin{align*}
    \sum_{t=1}^T \langle g_t, w_t-w_\star\rangle&\le C  \epsilon h_T^{2p} V_g^{1/2-p}+ C \psi(\epsilon)\gamma^{2p}S_a^{1/2-p} + A\|w_\star\|\sqrt{V_g\log\left(e+\frac{\|w_\star\|V_g^{p}}{h_1^{2p} \epsilon}\right)}\\
&+ Bh_T \|w_\star\|\log\left(e+\frac{\|w_\star\|V_g^{p}}{h_1^{2p} \epsilon}\right)\\
    &+  A\psi(\|w_\star\|) \sqrt{S_a\log\left[e + \frac{\psi(\|w_\star\|)}{\gamma^{2p}\psi(\epsilon)}S_a^{p} \right]}\\
    &+ \gamma B\psi(\|w_\star\|)\log\left[e + \frac{\psi(\|w_\star\|)}{\gamma^{2p}\psi(\epsilon)}S_a^{p} \right]\\
    &+h_T |u| +  \psi(\|w_\star\|)S_a\\
    &+\gamma\log\left(1+\min\left[ \log\left(\frac{h_T}{h_1}\right), T\right]\right)\psi^\star\left( \frac{h_T}{\gamma} \left[1+ \log\left(\frac{h_T}{h_1}\right)\right]\right)
\end{align*}
where
\begin{align*}
    S_a&\le \gamma^2 + \gamma^2 \log\left(1+\min\left[\log\left(\frac{h_T}{h_1}\right), T\right]\right)\\
    V_g&\le h_T^2+\sum_{t=1}^T g_t^2
\end{align*}
In the special case that  $\psi(x) = \frac{|x|^{1+q}}{1+q}$, we can replace the final term $\gamma \log\left(1+\min\left[ \log\left(\frac{h_T}{h_1}\right), T\right]\right)\psi^\star\left( h_T \left[1+ \log\left(\frac{h_T}{h_1}\right)\right]\right)$ in the above expression by:
\begin{align*}
    \frac{h_T^{1+1/q}\left[1+\log\left(\frac{h_T}{h_1}\right)\right]^{1/q}}{(1+1/q)\gamma^{1/q}}.
\end{align*}
\end{restatable}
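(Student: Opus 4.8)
The plan is to recognize Algorithm~\ref{alg:fullyunconstrainedhighd} as the composition of two reductions already analyzed in the paper --- the dimension reduction of \cite{cutkosky2018black} (Theorem~\ref{thm:1dreduction}) and the one-dimensional fully-unconstrained reduction of Theorem~\ref{thm:fullreduction} --- so that the bound drops out with only bookkeeping.

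First I would unwind the pseudocode: specialized to $\calW = \R^d$ the projection $\Pi_\calW$ is the identity, the ``unconstrained gradient'' correction of \cite{cutkosky2018black} vanishes (so $g^{\text{unconstrained}}_t = g_t$), and the procedure reduces to running adaptive gradient descent on the unit ball as the ``direction learner'' producing $\vec{w}^{direction}_t$, while the scalar ``magnitude learner'' is fed exactly the hints $h_t$, clipped gradients $\tilde g_t$, and coefficients $a_t$ constructed from the one-dimensional stream $g^{1d}_t = \langle g_t, \vec{w}^{direction}_t\rangle$ --- that is, it is literally Algorithm~\ref{alg:fullyunconstrained} run on the sequence $g^{1d}_1, g^{1d}_2, \dots$, with $\reg$ instantiated as in Algorithm~\ref{alg:specialcomposite} with $\base$ from Section~\ref{sec:hints}. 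I would check the side condition needed by Theorem~\ref{thm:specialregret}/\ref{thm:fullreduction}, namely $a_t = 0$ unless $|\tilde g_t| = h_t$: if $a_t \neq 0$ then $h_{t+1} > h_t$, hence $|g^{1d}_t| > h_t$, hence $|\tilde g_t| = h_t$ after clipping. Since $\|\vec{w}^{direction}_t\| \le 1$ we also have $|g^{1d}_t| \le \|g_t\|$, so the hints $h_t = \max(h_1, \max_{s<t}|g^{1d}_s|)$ are legitimate and satisfy $h_T \le \max(h_1, \max_t\|g_t\|)$.

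Next I would apply Theorem~\ref{thm:fullreduction} to the magnitude learner, whose hypothesis on $\reg$ holds via Theorems~\ref{thm:specialregret} and~\ref{thm:base}. With comparator $\|w_\star\| \ge 0$ this bounds $\sum_{t=1}^T g^{1d}_t(w^{1d}_t - \|w_\star\|)$ by the stated right-hand side, in which $V_g = h_T^2 + \sum_t \tilde g_t^2 \le h_T^2 + \sum_t \|g_t\|^2$ (using $|\tilde g_t| \le |g^{1d}_t| \le \|g_t\|$) and $S_a \le \gamma^2 + \gamma^2\log(1 + \min[\log(h_T/h_1), T])$, and the special-case simplification of the $\psi^\star$ term for $\psi(x) = |x|^{1+q}/(1+q)$ transfers verbatim. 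Then I would invoke Theorem~\ref{thm:1dreduction}, which with $g^{\text{unconstrained}}_t = g_t$ gives $\sum_t \langle g_t, w_t - w_\star\rangle \le 4\|w_\star\|\sqrt{2\sum_t\|g_t\|^2} + \sum_t g^{1d}_t(w^{1d}_t - \|w_\star\|)$, and combine the two displays.

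The last step --- absorbing the residual $4\|w_\star\|\sqrt{2\sum_t\|g_t\|^2}$ --- is the only place requiring care. Because the bound of Theorem~\ref{thm:fullreduction} is monotone nondecreasing in $V_g$ (for every $p \in [0,1/2]$), I may replace $V_g$ throughout by the larger quantity $h_T^2 + \sum_t\|g_t\|^2$; then $4\sqrt{2}\,\|w_\star\|\sqrt{\sum_t\|g_t\|^2} \le 4\sqrt{2}\,\|w_\star\|\sqrt{V_g} \le 4\sqrt{2}\,\|w_\star\|\sqrt{V_g\log(e + \cdots)}$, which is dominated after enlarging the universal constant $A$. This produces the claimed inequality, with the $S_a$ and $V_g$ side bounds read off from the computation above. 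I expect no genuinely hard step: the substance is entirely in Theorems~\ref{thm:1dreduction} and~\ref{thm:fullreduction}, and the obstacle, such as it is, is purely the verification that each one-dimensional quantity ($h_T$, $V_g$, $S_a$, and every logarithmic argument) is correctly upper-bounded by its $d$-dimensional analogue and that the leftover $\|w_\star\|\sqrt{\sum_t\|g_t\|^2}$ truly fits under an existing term.
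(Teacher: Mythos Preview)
Your proposal is correct and follows essentially the same approach as the paper's own proof: decompose via the dimension-to-one-dimension reduction of \cite{cutkosky2018black}, bound the direction learner by adaptive gradient descent, bound the magnitude learner via Theorem~\ref{thm:fullreduction}, and combine. You are in fact more thorough than the paper in two respects --- you explicitly verify the side condition $a_t = 0$ unless $|\tilde g_t| = h_t$, and you spell out how the residual direction-learner term $O(\|w_\star\|\sqrt{\sum_t\|g_t\|^2})$ is absorbed into the $A\|w_\star\|\sqrt{V_g\log(\cdots)}$ term by inflating the universal constant --- whereas the paper simply asserts ``putting the two bounds together we have the stated result.''
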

\begin{proof}
    Algorithm~\ref{alg:fullyunconstrainedhighd} is applying the dimension-free-to-one-dimension reduction provided by Theorem 2 of \cite{cutkosky2018black}. So overall the reduction tells us that the regret is bounded by
    \begin{align*}
        \sum_{t=1}^T \langle g_t, w_t-w_\star\rangle&\le \sum_{t=1}^T \langle g^{1d}_t, w^{magnitude}_t - \|w_\star\|\rangle + \|w_\star\|\sum_{t=1}^T \langle g_t, w^{direction}_t-w_\star/\|w_\star\|\rangle
    \end{align*}
     In this case, the ``direction'' learner's iterates $w^{direction}_t$ are generated by standard adaptive gradient descent \cite{hazan2008adaptive}, which guarantees the regret bound: $\sum_{t=1}^T \langle g_t, w^{direction}_t-w_\star/\|w_\star\|\rangle \le 2\sqrt{2\sum_{t=1}^T \|g_t\|^2}$. 

    For the first sum $\sum_{t=1}^T \langle g^{1d}_t, w^{magnitude}_t - \|w_\star\|\rangle$, notice that $w^{magnitude}$ is simply an application of Algorithm~\ref{alg:fullyunconstrained} using an instance of Algorithm~\ref{alg:regularized}
    The first sum is bounded by application of Theorem~\ref{thm:fullreduction}, noticing that $|g^{1d}_t|\le \|g_t\|$. So, putting the two bounds together we have the stated result.
\end{proof}

\section{Technical Lemmas}\label{sec:technical}

\begin{restatable}{Lemma}{thmlogsupremum}\label{thm:logsupremum}
Let $A$, $B$, $C$, $D$, $E$ be positive numbers and let $e$ be the base of the natural logarithm. Then:
\begin{align*}
    \sup_{M} A \sqrt{M \log(e +DM^C)} + B\log(e+DM^C) - EM \le \left(\frac{A^2}{E} + B\right)\log\left(e + D \left(\frac{2CA^2}{E^2}+\frac{2CB}{E}\right)^C\right)
\end{align*}
 \end{restatable}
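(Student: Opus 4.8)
The plan is to reduce the two-term objective to a one-variable problem by linearizing the square root with AM--GM, and then control the resulting supremum by crudely locating where the maximizer can lie. Throughout, write $L(M) := \log(e + DM^C)$, which is increasing on $[0,\infty)$ with $L(0) = 1$.

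\textbf{Step 1: AM--GM on the square root.} For every $M \ge 0$, since $E>0$,
\begin{align*}
A\sqrt{M\, L(M)} \;=\; \sqrt{(EM)\cdot\tfrac{A^2 L(M)}{E}} \;\le\; \tfrac{E}{2}M + \tfrac{A^2}{2E}L(M).
\end{align*}
Substituting this into the objective and cancelling the $EM$ terms gives, for all $M \ge 0$,
\begin{align*}
A\sqrt{M\,L(M)} + B L(M) - EM \;\le\; \Bigl(\tfrac{A^2}{2E} + B\Bigr) L(M) - \tfrac{E}{2}M \;=:\; g(M).
\end{align*}
Hence it suffices to bound $\sup_{M\ge 0} g(M)$; set $P := \tfrac{A^2}{2E} + B > 0$.

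\textbf{Step 2: controlling the tail of $g$.} The elementary estimate
\begin{align*}
L'(M) \;=\; \frac{C D M^{C-1}}{e + D M^C} \;=\; \frac{C}{M}\cdot\frac{D M^C}{e + D M^C} \;\le\; \frac{C}{M}
\end{align*}
holds for all $M > 0$. Consequently, whenever $M > \tfrac{2PC}{E}$ we get $g'(M) = P L'(M) - \tfrac{E}{2} < P\cdot\tfrac{C}{M} - \tfrac{E}{2} < 0$, so $g$ is strictly decreasing on $\bigl(\tfrac{2PC}{E}, \infty\bigr)$. Therefore $\sup_{M\ge 0} g(M) = \sup_{0 \le M \le 2PC/E} g(M) \le P\, L\!\bigl(\tfrac{2PC}{E}\bigr)$, where in the last step one drops the term $-\tfrac{E}{2}M \le 0$ and uses that $L$ is increasing. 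In particular the left endpoint $M=0$ is harmless, since $g(0) = P$.

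\textbf{Step 3: bookkeeping.} A direct computation gives $\tfrac{2PC}{E} = \tfrac{CA^2}{E^2} + \tfrac{2CB}{E}$. Since $P \le \tfrac{A^2}{E} + B$ and $\tfrac{2PC}{E} \le \tfrac{2CA^2}{E^2} + \tfrac{2CB}{E}$, and $L$ is increasing, the quantity $P\, L(\tfrac{2PC}{E})$ is at most $\bigl(\tfrac{A^2}{E}+B\bigr)\log\!\bigl(e + D\bigl(\tfrac{2CA^2}{E^2}+\tfrac{2CB}{E}\bigr)^C\bigr)$, which is the claimed right-hand side. (In fact this argument yields the slightly sharper constants $\tfrac{A^2}{2E}+B$ in front and $\tfrac{CA^2}{E^2}+\tfrac{2CB}{E}$ inside the logarithm.)

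\textbf{Expected obstacle.} There is no substantive difficulty. The only point needing care is that $g$ is not concave and its maximizer need not be interior --- depending on $C$ and $D$, $g'(0^+)$ can be $0$, finite, or $+\infty$ --- so rather than solving $g'=0$ I would argue purely via the monotonicity of $g$ beyond the explicit threshold $\tfrac{2PC}{E}$ as in Step 2, which sidesteps any case analysis on $C$. Everything else is routine algebra.
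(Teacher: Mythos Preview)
Your proof is correct and follows essentially the same route as the paper: linearize the square-root term to reduce to $\sup_M\, P\,L(M)-\tfrac{E}{2}M$, then bound the maximizer by showing the derivative is negative beyond an explicit threshold and plug in. Your Step~1 (direct AM--GM) is in fact a cleaner and tighter version of the paper's Young-inequality-plus-$\sqrt{x+y}\le\sqrt{x}+\sqrt{y}$ maneuver, which is why you pick up the sharper constants you note at the end.
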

\begin{proof}
First, by Young inequality $xy\le \inf_{\lambda} x^2/2\lambda + \lambda y^2/2$, we have for all $M$:
\begin{align*}
    M\log(e  + DM^C) \le \frac{M^2 E^2}{4A^2} + \frac{A^2\log^2(e+DM^2)}{E^2}
\end{align*}
Then using the identity $\sqrt{x+y}\le \sqrt{x}+\sqrt{y}$:
\begin{align*}
    \sup_{M} A \sqrt{M \log(e +DM^C)} + B\log(e+DM^C) - EM \le \sup_{M} \left(\frac{A^2}{E} + B\right)\log(e+DM^C) - \frac{EM}{2}
\end{align*}

Now, from first order optimality conditions we are looking for a solution to:
\begin{align*}
    \frac{\left(\frac{A^2}{E} + B\right)DCM^{C-1}}{e+DM^C} &= \frac{E}{2}\\
    \left(\frac{A^2}{E} + B\right)DCM^{C-1}&= \frac{E}{2}e + \frac{E}{2}DM^C
\end{align*}
Notice that for any $M\ge \frac{2CA^2}{E^2}+\frac{2CB}{E}$ we have:
\begin{align*}
    \left(\frac{A^2}{E} + B\right)CD&\le \frac{E}{2}DM\\
    \left(\frac{A^2}{E} + B\right)DCM^{C-1}&\le \frac{E}{2}DM^C\\
    \left(\frac{A^2}{E} + B\right)DCM^{C-1}&< \frac{E}{2}e + \frac{E}{2}DM^C
\end{align*}
Therefore, the optimal value for $M$ can be at most $\frac{2CA^2}{E^2}+\frac{2CB}{E}$. Now, notice that $\left(\frac{A^2}{E} + B\right)\log(e+DM^C)$ is strictly increasing in $M$. Thus, our quantity of interest is upper-bounded by substituting in $M=\frac{2CA^2}{E^2}+\frac{2CB}{E}$ into this increasing term:
\begin{align*}
    \sup_{M} \left(\frac{A^2}{E} + B\right)\log(e+DM^C) - EM\le \left(\frac{A^2}{E} + B\right)\log\left(e + D \left(\frac{2CA^2}{E^2}+\frac{2CB}{E}\right)^C\right)
\end{align*}

\end{proof}

\begin{Lemma}\label{thm:polysupremum}
Let $A$, $B< 1$, $C$ be positive numbers. Then:
\begin{align*}
    \sup_{M} AM^B - CM= \left(\frac{BA}{C^B}\right)^{1/(1-B)}\left(\frac{1-B}{B}\right)
\end{align*}
\end{Lemma}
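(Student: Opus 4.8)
The plan is to treat this as an elementary one-dimensional concave maximization, exactly parallel to the proof of Lemma~\ref{thm:logsupremum} but simpler. First I would record the structural facts about $f(M) \coloneqq AM^B - CM$ on $M\ge 0$: since $0<B<1$, the map $M\mapsto M^B$ is strictly concave, so $f$ is strictly concave; moreover $f(0)=0$ and $f(M)\to-\infty$ as $M\to\infty$, because the linear term $-CM$ dominates $AM^B$ once $B<1$. Consequently the supremum is finite and is attained at the unique interior stationary point.

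Next I would solve the first-order condition $f'(M) = ABM^{B-1} - C = 0$. This gives $M^{B-1} = C/(AB)$, i.e.\ the maximizer is
\[
    M_\star = \left(\frac{AB}{C}\right)^{1/(1-B)}.
\]
Strict concavity confirms this is the global maximum on $[0,\infty)$.

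The remaining step is to substitute $M_\star$ back into $f$ and simplify the fractional exponents. I would compute $AM_\star^{B}$ and $CM_\star$ separately: expanding each as a product of powers of $A$, $B$, $C$, one finds
\[
    AM_\star^{B} = A^{1/(1-B)}B^{B/(1-B)}C^{-B/(1-B)}, \qquad
    CM_\star = A^{1/(1-B)}B^{1/(1-B)}C^{-B/(1-B)},
\]
so that $f(M_\star) = A^{1/(1-B)}C^{-B/(1-B)}B^{B/(1-B)}\bigl(1-B\bigr)$. A final exponent rearrangement, rewriting $B^{B/(1-B)}(1-B) = B^{1/(1-B)}\cdot\frac{1-B}{B}$ and collecting the $A$ and $C$ powers, identifies this with $\bigl(BA/C^{B}\bigr)^{1/(1-B)}\cdot\frac{1-B}{B}$, which is the claimed value.

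The only place requiring any care is the bookkeeping of the fractional exponents; I would manage this by consistently writing every quantity over the common exponent $1/(1-B)$, at which point the two contributions $AM_\star^B$ and $CM_\star$ differ only by a factor of $B$ and the simplification is immediate. There is no genuine obstacle here beyond that routine algebra — the conceptual content is simply ``concave function, set derivative to zero, plug back in.''
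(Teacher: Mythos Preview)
Your proposal is correct and follows essentially the same approach as the paper: differentiate, solve the first-order condition for the optimal $M$, and substitute back. Your version is slightly more careful in explicitly invoking concavity and boundary behavior to justify that the stationary point is the global maximizer, but the argument is otherwise identical.
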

\begin{proof}
    We differentiate with respect to $M$:
    \begin{align*}
        ABM^{B-1}&=C\\
        M&= \left(\frac{C}{AB}\right)^{1/(B-1)}
    \end{align*}
    So, plugging in this optimal $M$ value we have:
    $\sup_{M} AM^B - CM=\left(\frac{C^B}{BA}\right)^{1/(B-1)}\left(\frac{1}{B} - 1\right)$
\end{proof}

\begin{Lemma}\label{thm:penultimatesupremum}
Let $A$, $B$, $C$, $D$, $E$, $F$, $G <1$ be positive numbers and let $e$ be the base of the natural logarithm. Then:
\begin{align*}
   & \sup_{M} A \sqrt{ M\log\left(e +DM^C\right)} + B\log\left(e +DM^C\right) +FM^G - \frac{EM}{2}\nn \\ 
   &\le \left(\frac{4A^2}{E} + B\right)\log\left(e + D \left(\frac{32CA^2}{E^2}+\frac{8CB}{E}\right)^C\right)+ \left(\frac{4^G GF}{E^G}\right)^{1/(1-G)}\left(\frac{1-G}{G}\right)
\end{align*}
When $G=0$, the last term $\left(\frac{4^G GF}{E^G}\right)^{1/(1-G)}\left(\frac{1}{G} - 1\right)$ should be replaced with  the limiting value $F$.
\end{Lemma}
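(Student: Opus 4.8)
The plan is to split the linear penalty $\tfrac{EM}{2}$ into two equal halves, $\tfrac{EM}{4}+\tfrac{EM}{4}$, and bound the supremum of the resulting sum by the sum of suprema. Concretely, I would write the bracketed expression as $f(M)+g(M)$ with
\[
f(M)=A\sqrt{M\log(e+DM^{C})}+B\log(e+DM^{C})-\tfrac{EM}{4},
\qquad
g(M)=FM^{G}-\tfrac{EM}{4},
\]
so that $\sup_M\big(f(M)+g(M)\big)\le \sup_M f(M)+\sup_M g(M)$. The point of the split is that $f$ is exactly of the form handled by Lemma~\ref{thm:logsupremum} and $g$ is exactly of the form handled by Lemma~\ref{thm:polysupremum}, each with the penalty coefficient $E$ replaced by $E/4$.

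For the first term, apply Lemma~\ref{thm:logsupremum} with $E$ replaced by $E/4$, giving
\[
\sup_M f(M)\le \Big(\tfrac{A^{2}}{E/4}+B\Big)\log\!\Big(e+D\big(\tfrac{2C A^{2}}{(E/4)^{2}}+\tfrac{2CB}{E/4}\big)^{C}\Big)
=\Big(\tfrac{4A^{2}}{E}+B\Big)\log\!\Big(e+D\big(\tfrac{32C A^{2}}{E^{2}}+\tfrac{8CB}{E}\big)^{C}\Big),
\]
which is precisely the first term of the claimed bound. For the second term, apply Lemma~\ref{thm:polysupremum} with the substitution $A\mapsto F$, $B\mapsto G$, $C\mapsto E/4$ (legitimate since $G<1$), obtaining
\[
\sup_M g(M)=\Big(\tfrac{GF}{(E/4)^{G}}\Big)^{1/(1-G)}\Big(\tfrac{1-G}{G}\Big)
=\Big(\tfrac{4^{G}GF}{E^{G}}\Big)^{1/(1-G)}\Big(\tfrac{1-G}{G}\Big),
\]
which is the second term. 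Adding the two bounds finishes the $G>0$ case.

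For the degenerate case $G=0$, I would argue directly: $g(M)=F-\tfrac{EM}{4}$ is maximized at $M=0$ with value $F$, so the last term should be read as $F$; equivalently one checks that $\big(\tfrac{4^{G}GF}{E^{G}}\big)^{1/(1-G)}\big(\tfrac{1-G}{G}\big)=F^{1/(1-G)}(1-G)G^{G/(1-G)}(4/E)^{G/(1-G)}\to F$ as $G\to0^{+}$. There is no real obstacle in this proof—it is essentially a bookkeeping exercise—so the only thing to be careful about is tracking the constant substitutions so that the bound emerges in exactly the stated form, and handling the $G=0$ limit separately rather than dividing by $G$.
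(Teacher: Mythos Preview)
Your proposal is correct and follows essentially the same approach as the paper: split $\tfrac{EM}{2}$ into two halves, bound the sup of the sum by the sum of sups, then invoke Lemma~\ref{thm:logsupremum} and Lemma~\ref{thm:polysupremum} with $E$ replaced by $E/4$, handling $G=0$ separately. Your write-up is in fact more detailed than the paper's, which simply states that the result follows from the two lemmas after the split.
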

\begin{proof}
    Notice that 
    \begin{align*}
       & \sup_{M} A \sqrt{M \log\left(e +DM^C\right)} + Blog\left(e +DM^C\right) +FM^G - \frac{EM}{2}\nn \\ &\le \sup_{M} A \sqrt{M\log\left(e +DM^C\right)} + B\log\left(e +DM^C\right) -\frac{EM}{4}+ \sup_{M} FM^G - \frac{EM}{4}
    \end{align*}
    The result now follows from Lemmas~\ref{thm:logsupremum} and \ref{thm:polysupremum}.
    Alternatively, if $G=0$, clearly $\sup_{M} FM^G - \frac{EM}{4}=F$.
\end{proof}


\begin{Lemma}\label{thm:finalsupremum}
Let $\psi$, $A$, $B$, $C$, $D$, $F$, $S$, $\gamma$, and $p\le 1/2$ be positive numbers with $S\ge \gamma^2$, and let $e$ be the base of the natural logarithm. Then:
\begin{align*}
    &\inf_{E}\sup_{V} A(\psi+E) \sqrt{V \log\left(e +\frac{D(\psi+E)}{\gamma^{2p}}V^p\right)} + B\gamma(\psi+E)\log\left(e +\frac{D(\psi+E)}{\gamma^{2p}}V^p\right) +F\gamma^{2p}V^{1/2-p} - \frac{EV}{2\gamma}+\frac{ES}{2\gamma} \\
    &\le (1/2+16A^2)\psi \sqrt{S\log\left(e + \frac{2D\psi S^p}{\gamma^{2p}} (128pA^2\psi  +16pB)^p\right)}\\
        &\qquad\qquad+ \gamma \psi\left(16A^2 + 2B\right)\log\left(e + \frac{2D\psi S^p}{\gamma^{2p}} (128pA^2\psi  +16pB)^p\right)\\
        &\qquad\qquad +\left((16A^2 + 2B ) \log\left( e + 2D  F\left(128  p A^2 + 16 p B\right)^p\right)+\frac{1}{2}+\frac{(2p+1)(2-4p)^{\frac{1-2p}{1+2p}}}{2}\right)\gamma^{2p} F S^{1/2-p}
\end{align*}
\end{Lemma}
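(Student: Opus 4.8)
The plan is to upper bound the $\inf_E$ by evaluating it at one explicit, near‑optimal value of $E$, and then to control the remaining $\sup_V$ using the one‑variable suprema already in hand. Writing $\hat E := E/\gamma$ turns $EV/(2\gamma)$ into $\hat E V/2$, so the expression inside $\sup_V$ has exactly the shape handled by \cref{thm:penultimatesupremum} under the substitutions $A\mapsto A(\psi+E)$, $B\mapsto B\gamma(\psi+E)$, $C\mapsto p$, $D\mapsto D(\psi+E)/\gamma^{2p}$, $F\mapsto F\gamma^{2p}$, $G\mapsto 1/2-p$, except for the extra additive constant $ES/(2\gamma)$, which is independent of $V$ and is simply carried along (when $p=1/2$ one uses the limiting $G\to 0$ form of that lemma). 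This produces, for every $E>0$, an upper bound of the shape $\bigl(\tfrac{4\gamma A^2(\psi+E)^2}{E}+B\gamma(\psi+E)\bigr)\log(\cdot)$ plus a polynomial term $\asymp\gamma^{2p}F(E/\gamma)^{-(1-2p)/(1+2p)}$ plus $\tfrac{ES}{2\gamma}$, with logarithmic argument $e+D(\psi+E)\bigl(\tfrac{32pA^2(\psi+E)^2}{E^2}+\tfrac{8pB(\psi+E)}{E}\bigr)^p$.

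The decisive step is choosing $E$. The guiding principle is to make the maximizing $V$ in the inner supremum of order $S$, i.e.\ to take $E$ of order $\gamma\bigl(\tfrac{A\psi\sqrt{\mathcal L}}{\sqrt S}+\tfrac{B\gamma\psi}{S}+F\gamma^{2p}S^{-1/2-p}\bigr)$, where $\mathcal L$ is the logarithmic factor $\log\!\bigl(e+\tfrac{2D\psi S^p}{\gamma^{2p}}(128pA^2\psi+16pB)^p\bigr)$ that appears in the claimed bound; using the hypothesis $S\ge\gamma^2$ one checks $E\le\psi$ in the main regime, so $\psi+E\le 2\psi$ throughout. With this $E$: the carried constant $\tfrac{ES}{2\gamma}$ is of order $A\psi\sqrt{S\mathcal L}+\gamma\psi+F\gamma^{2p}S^{1/2-p}$ and supplies one contribution to each of the three groups of terms on the right‑hand side, in particular the leading $\tfrac{1}{2}$ of the first line; the $\sqrt{V\log}$ part, converted via \cref{thm:logsupremum} against a penalty $\asymp E$, becomes $\asymp\tfrac{\gamma A^2(\psi+E)^2}{E}\log(\cdot)\asymp A^2\psi\sqrt{S\mathcal L}$ (the quadratic $16A^2$, the $2\psi$ inside the logarithm, and the numerical constants $128,16$ all arising from the crude bound $\psi+E\le 2\psi$); the residual pure‑logarithm part together with $B\gamma(\psi+E)\log(\cdot)\le 2B\gamma\psi\log(\cdot)$ produces the second line $\gamma\psi(16A^2+2B)\log(\cdot)$; and the $F\gamma^{2p}V^{1/2-p}$ part, handled either through \cref{thm:polysupremum} or by evaluating at $V\asymp S$, gives the third line $\bigl((16A^2+2B)\log(\cdot)+\tfrac{1}{2}+\tfrac{(2p+1)(2-4p)^{(1-2p)/(1+2p)}}{2}\bigr)\gamma^{2p}FS^{1/2-p}$, the explicit constant $\tfrac{(2p+1)(2-4p)^{(1-2p)/(1+2p)}}{2}$ being precisely the value output by \cref{thm:polysupremum} at exponent $G=1/2-p$.

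Two difficulties need attention. First, the logarithmic argument in the bound for $\sup_V$ depends on $E$, so the chosen $E$ is self‑referential; the circularity is removed by \emph{over‑estimating} — since the chosen $E$ satisfies $E\le\psi$ and $1/E\le\sqrt S/(\gamma\psi)$ in the main regime, the true logarithmic argument is dominated by the fixed quantity defining $\mathcal L$, at the price of the explicit constants. Second, when $\gamma^2\mathcal L>S$ the above $E$ would exceed $\psi$; there one instead notes $\psi\sqrt{S\mathcal L}\le\gamma\psi\mathcal L$, so the claimed bound is already dominated by its $\gamma\psi\log(\cdot)$ term, and a larger choice such as $E\asymp\gamma\psi$ (or the $E$ balancing $\tfrac{\gamma A^2(\psi+E)^2}{E}\log(\cdot)$ against $\tfrac{ES}{2\gamma}$ directly) already gives an $O(\gamma\psi\log(\cdot))$ bound, the $F$‑term treated identically. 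The main obstacle is precisely this bookkeeping: exhibiting a single explicit $E$ (or an explicit two‑case choice) for which all three groups of terms simultaneously fall below the stated constants, while keeping the self‑referential logarithm under control. Once $E$ is fixed, the rest is routine substitution into \cref{thm:logsupremum} and \cref{thm:polysupremum}.
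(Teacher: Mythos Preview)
Your proposal is correct and follows essentially the same route as the paper: first apply \cref{thm:penultimatesupremum} to the inner $\sup_V$, then plug in a specific near-optimal $E$ and carry out case analysis. The one noteworthy difference is that you parameterize $E$ as a \emph{sum} of three scales and split into two regimes, whereas the paper takes $E=\max\bigl[\min\bigl[\psi,\ \tfrac{\gamma\psi}{\sqrt S}\sqrt{\mathcal L}\bigr],\ \tfrac{\gamma^{1+2p}F}{S^{(1+2p)/2}}\bigr]$ and splits into three cases according to which branch is active; the $\max/\min$ structure makes the self-referential logarithm and the bound on $ES/(2\gamma)$ slightly cleaner to track, but both choices produce the same final constants.
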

\begin{proof}
    By Lemma~\ref{thm:penultimatesupremum}, we have:
    \begin{align*}
        &\sup_{V} A (\psi+E)\sqrt{V \log\left(e +\frac{D(\psi+E)}{\gamma^{2p}}M^p\right)} + B\gamma(\psi+E)\log\left(e +\frac{D(\psi+E)}{\gamma^{2p}}M^p\right) +F\gamma^{2p}V^{1/2-p} - \frac{EV}{2\gamma} \\
        &\qquad\le \left(\frac{4A^2(\psi+E)^2\gamma}{E} + B\gamma(\psi+E)\right)\log\left(e + \frac{D(\psi+E)}{\gamma^{2p}} \left(\frac{32p\gamma^2A^2(\psi+E)^2}{E^2}+\frac{8p\gamma^2 B(\psi+E)}{E}\right)^p\right)\\
    &\qquad\qquad+\left(\frac{4^{1/2-p} (1/2-p)F\gamma^{2p}}{(E/\gamma)^{1/2-p}}\right)^{\frac{2}{1+2p}}\frac{2p+1}{1-2p}\\
    &\qquad= \underbrace{\left(\frac{4A^2(\psi+E)^2\gamma}{E} + B\gamma(\psi+E)\right)\log\left(e + D(\psi+E) \left(\frac{32pA^2(\psi+E)^2}{E^2}+\frac{8pB(\psi+E)}{E}\right)^p\right)}_{(*)}\\
    &\qquad\qquad+\underbrace{\gamma\frac{(2p+1)(2-4p)^{\frac{1-2p}{1+2p}}}{2}\left(\frac{F}{E^{1/2-p}}\right)^{\frac{2}{1+2p}}}_{(**)}\\
    \end{align*}
    Now, set:
    \begin{align*}
        E = \max\left[\min\left[\psi, \frac{\gamma \psi}{\sqrt{S}} \sqrt{\log\left(e + \frac{2D\psi S^p}{\gamma^{2p}} \left(128pA^2+16pB\right)^p\right)}\right], \frac{\gamma^{1+2p}F}{S^{\frac{1+2p}{2}}}\right]
    \end{align*}
    We will bound the above expression by first considering $(*)$ and then $(**)$.
    Now, if $E=\psi$, we have:
    \begin{align*}
        (*) &= \gamma \psi\left(16A^2 + 2B\right)\log\left(e + 2D\psi\left(128pA^2+16pB\right)^p\right)\\
        \intertext{recalling that $S\ge \gamma^2$:}
        &\le \gamma \psi\left(16A^2 + 2B\right)\log\left(e + \frac{2D\psi S^p}{\gamma^{2p}} (128pA^2  +16pB)^p\right)
    \end{align*}
    Alternatively, if $E=\frac{\gamma \psi}{\sqrt{S}} \sqrt{\log\left(e + \frac{2D\psi S^p}{\gamma^{2p}} \left(128pA^2+16pB\right)^p\right)}$, then we have $E+\psi \le 2\psi$ and so:
    \begin{align}
        (*) & \le \left(\frac{16A^2\psi^2\gamma}{E} + 2B\gamma\psi\right)\log\left(e + 2D\psi \left(\frac{128pA^2\psi^2}{E^2}+\frac{16pB\psi}{E}\right)^p\right)\label{eqn:midcase}
    \end{align}
    Before we  bound this expression, let us consider just the value inside the logarithm:
    \begin{align*}
        \frac{128pA^2\psi^2}{E^2}+\frac{16pB\psi}{E}&\le 128pA^2\psi \frac{S}{\gamma^2} +16pB \frac{\sqrt{S}}{\gamma}
        \intertext{now, since $S\ge \gamma^2$:}
        &\le (128pA^2\psi  +16pB)\frac{S}{\gamma^2}
    \end{align*}
    So, putting this back in the previous expression:
    \begin{align*}
        \log\left(e + 2D\psi \left(\frac{128pA^2\psi^2}{E^2}+\frac{16pB\psi}{E}\right)^p\right)&\le \log\left(e + \frac{2D\psi S^p}{\gamma^{2p}} (128pA^2\psi  +16pB)^p\right)
    \end{align*}
    from which we conclude:
    \begin{align*}
        (*) &  \le 16A^2\psi \sqrt{S\log\left(e + \frac{2D\psi S^p}{\gamma^{2p}} (128pA^2\psi  +16pB)^p\right)}\\
        &\qquad\qquad+ 2B\gamma\psi\log\left(e + \frac{2D\psi S^p}{\gamma^{2p}} (128pA^2\psi  +16pB)^p\right)
    \end{align*}


    Finally, let us consider the case $E= \frac{\gamma^{1+2p} F}{S^{\frac{1+2p}{2}}}$. To handle this situation, we will work with two more subcases: either $E\le \psi$ or not. If $E\le \psi$, then $E+\psi \le 2\psi$. Therefore:
    \begin{align*}
        (*) & \le \left(\frac{16A^2\psi^2\gamma}{E} + 2B\gamma\psi\right)\log\left(e + 2D\psi \left(\frac{128pA^2\psi^2}{E^2}+\frac{16pB\psi}{E}\right)^p\right)
    \end{align*}
    However, if $E\le \psi$, then it must be that $E\ge\frac{\gamma \psi}{\sqrt{S}} \sqrt{\log\left(e + \frac{2D\psi S^p}{\gamma^{2p}} \left(128pA^2+16pB\right)^p\right)} $.  Thus by the exact same analysis following equation (\ref{eqn:midcase}), we again have
    \begin{align*}
        (*) &  \le 16A^2\psi \sqrt{S\log\left(e + \frac{2D\psi S^p}{\gamma^{2p}} (128pA^2\psi  +16pB)^p\right)}\\
        &\qquad\qquad+ 2B\gamma\psi\log\left(e + \frac{2D\psi S^p}{\gamma^{2p}} (128pA^2\psi  +16pB)^p\right)
    \end{align*}
    So, for our final subcase we consider $E= \frac{\gamma^{1+2p} F}{S^{\frac{1+2p}{2}}}$ and also $E\ge  \psi$. Then $E+\psi \le 2E$, which yields:
    \begin{align*}
        (*)&\le \gamma E(16A^2 + 2B ) \log\left( e + 2D  E\left(128  p A^2 + 16 p B\right)^p\right)
        \intertext{Since $S\ge \gamma^2$, $E\le F$ and so:}
        &\le \gamma F(16A^2 + 2B ) \log\left( e + 2D  F\left(128  p A^2 + 16 p B\right)^p\right)
    \end{align*}
    So, in all cases we have:
    \begin{align*}
        (*) &  \le 16A^2\psi \sqrt{S\log\left(e + \frac{2D\psi S^p}{\gamma^{2p}} (128pA^2\psi  +16pB)^p\right)}\\
        &\qquad\qquad+ \gamma \psi\left(16A^2 + 2B\right)\log\left(e + \frac{2D\psi S^p}{\gamma^{2p}} (128pA^2\psi  +16pB)^p\right)\\
        &\qquad\qquad+\gamma F(16A^2 + 2B ) \log\left( e + 2D  F\left(128  p A^2 + 16 p B\right)^p\right)
    \end{align*}
    Where the last term $\gamma F(16A^2 + 2B ) \log\left( e + 2D  F\left(128  p A^2 + 16 p B\right)^p\right)$ is only present if $p\ne 1/2$.

    
    Notice that we must have $E\ge \frac{\gamma^{1+2p} F}{S^{\frac{1+2p}{2}}}$. Therefore:
    \begin{align*}
    \gamma \frac{F^{\frac{2}{1+2p}}}{E^{\frac{1-2p}{1+2p}}}&\le  \gamma^{2p} F S^{1/2-p}\\
    (**)&\le \frac{(2p+1)(2-4p)^{\frac{1-2p}{1+2p}}}{2}\gamma^{2p} F S^{1/2-p}
    \end{align*}

    So, overall it holds that:
\begin{align*}
(*) + (**) &\le 16A^2\psi \sqrt{S\log\left(e + \frac{2D\psi S^p}{\gamma^{2p}} (128pA^2\psi  +16pB)^p\right)}\\
        &\qquad\qquad+ \gamma \psi\left(16A^2 + 2B\right)\log\left(e + \frac{2D\psi S^p}{\gamma^{2p}} (128pA^2\psi  +16pB)^p\right)\\
        &\qquad\qquad+\gamma F(16A^2 + 2B ) \log\left( e + 2D  F\left(128  p A^2 + 16 p B\right)^p\right)\\
        &\qquad\qquad +\frac{(2p+1)(2-4p)^{\frac{1-2p}{1+2p}}}{2}\gamma^{2p} F S^{1/2-p}
\end{align*}

To conclude, let us bound $\frac{ES}{2\gamma}$. If $E\ne \frac{\gamma^{1+2p} F}{S^{\frac{1+2p}{2}}}$, then it must be that $E\le \frac{\gamma \psi}{\sqrt{S}} \sqrt{\log\left(e + \frac{2D\psi S^p}{\gamma^{2p}} \left(128pA^2+16pB\right)^p\right)} $. Therefore:
\begin{align*}
    \frac{ES}{2\gamma}&\le \frac{\psi}{2} \sqrt{S\log\left(e + \frac{2D\psi S^p}{\gamma^{2p}} \left(128pA^2+16pB\right)^p\right)}
\end{align*}

Alternatively, if $E= \frac{\gamma^{1+2p} F}{S^{\frac{1+2p}{2}}}$. In this case:
\begin{align*}
    \frac{ES}{2\gamma}&= \frac{\gamma^{2p} F S^{1/2-p}}{2}
\end{align*}

So, combining all these facts, we have when $p<1/2$:
\begin{align*}
    &\inf_{E}\sup_{V} A (\psi+E)\sqrt{V \log\left(e +\frac{D(\psi+E)}{\gamma^{2p}}M^p\right)} + B\gamma(\psi+E)\log\left(e +\frac{D(\psi+E)}{\gamma^{2p}}M^p\right) +F\gamma^{2p}V^{1/2-p} - \frac{EV}{2\gamma} +\frac{ES}{2\gamma}\\
        &\qquad\le \inf_{E}\ (*)+(**)+\frac{ES}{2\gamma}\\
        &\qquad\le 16A^2\psi \sqrt{S\log\left(e + \frac{2D\psi S^p}{\gamma^{2p}} (128pA^2\psi  +16pB)^p\right)}\\
        &\qquad\qquad+ \gamma \psi\left(16A^2 + 2B\right)\log\left(e + \frac{2D\psi S^p}{\gamma^{2p}} (128pA^2\psi  +16pB)^p\right)\\
        &\qquad\qquad+\gamma F(16A^2 + 2B ) \log\left( e + 2D  F\left(128  p A^2 + 16 p B\right)^p\right)\\
        &\qquad\qquad +\frac{(2p+1)(2-4p)^{\frac{1-2p}{1+2p}}}{2}\gamma^{2p} F S^{1/2-p}\\
        &\qquad\qquad + \frac{\gamma^{2p} F S^{1/2-p}}{2}+\frac{\psi}{2} \sqrt{S\log\left(e + \frac{2D\psi S^p}{\gamma^{2p}} \left(128pA^2+16pB\right)^p\right)}\\
        \intertext{grouping terms, and using $\gamma \le \gamma^{2p}S^{1/2-p}$:}
        &\qquad\le (1/2+16A^2)\psi \sqrt{S\log\left(e + \frac{2D\psi S^p}{\gamma^{2p}} (128pA^2\psi  +16pB)^p\right)}\\
        &\qquad\qquad+ \gamma \psi\left(16A^2 + 2B\right)\log\left(e + \frac{2D\psi S^p}{\gamma^{2p}} (128pA^2\psi  +16pB)^p\right)\\
        &\qquad\qquad +\left((16A^2 + 2B ) \log\left( e + 2D  F\left(128  p A^2 + 16 p B\right)^p\right)+\frac{1}{2}+\frac{(2p+1)(2-4p)^{\frac{1-2p}{1+2p}}}{2}\right)\gamma^{2p} F S^{1/2-p}
\end{align*}
\end{proof}

\begin{Lemma}\label{lem:logsumbound}
Suppose $g_1,\dots,g_t$ and $0<h_1\le h_2\le\dots\le h_T$ are such that $|g_t|\le h_t$ for all $t$. Define $V_t = ch_t^2 +g^2_{1:t-1}$. Define $\alpha_t = \frac{\epsilon}{\sqrt{c+\sum_{i=1}^{t-1} g_i^2/h_i^2}\log^2\left( c+\sum_{i=1}^{t-1} g_i^2/h_i^2\right)}$ for some $c\ge 3$. Then:
\begin{align*}
    \sum_{t=1}^T \frac{\alpha_tg_t^2}{\sqrt{V_t}}\le=4\epsilon h_T 
\end{align*}
\end{Lemma}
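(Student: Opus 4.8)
Write $r_t := g_t^2/h_t^2\in[0,1]$, so $R_t = c+\sum_{i<t}r_i$. The single fact behind everything is that, since $h_i\le h_t$ for $i\le t$,
\[
R_t = c+\sum_{i<t}\frac{g_i^2}{h_i^2}\;\ge\;c+\frac{1}{h_t^2}\sum_{i<t}g_i^2=\frac{V_t}{h_t^2}.
\]
Setting $W_t:=V_t/R_t$, this gives $\sqrt{V_t}=\sqrt{R_t}\,\sqrt{W_t}$, as well as $W_t\le h_t^2$ and $W_t\ge c h_t^2/R_t$. Using $V_{t+1}-V_t=c(h_{t+1}^2-h_t^2)+g_t^2$ and $R_{t+1}-R_t=r_t$, a one-line cross-multiplication shows that the numerator of $W_{t+1}-W_t$ is $c(h_{t+1}^2-h_t^2)R_t+g_t^2\bigl(R_t-V_t/h_t^2\bigr)\ge 0$, so $W_t$ is nondecreasing and moreover $W_{t+1}-W_t\ge \dfrac{g_t^2\,(R_t-V_t/h_t^2)}{R_tR_{t+1}}$.

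\textbf{The potential and the reduction.} Extending the data by $h_{T+1}:=h_T$, define
\[
P_t:=2\epsilon\sqrt{W_t}\Bigl(1-\tfrac{\log c}{\log R_t}\Bigr),\qquad t=1,\dots,T+1.
\]
Because $R_1=c$ we have $P_1=0$; because $W_t$ and $1-\log c/\log R_t$ are nonnegative and nondecreasing, $P$ is nondecreasing; and because $R_{T+1}\ge V_{T+1}/h_{T+1}^2$ gives $W_{T+1}\le h_T^2$, we get $P_{T+1}\le 2\epsilon h_T$. From $R_t\ge V_t/h_t^2\ge c>1$ we also get $\alpha_t\le \epsilon/(\sqrt{R_t}\log^2R_t)$ and hence $\dfrac{\alpha_tg_t^2}{\sqrt{V_t}}\le \dfrac{\epsilon\,g_t^2}{R_t\log^2R_t\,\sqrt{W_t}}$. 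The plan is to split the sum at a constant threshold $R^\star$ (for $c=3$ one may take $R^\star=10$; larger $c$ only helps), bound the $R_t<R^\star$ part directly, and absorb the $R_t\ge R^\star$ part into a telescoping of $P$, giving $\sum_t\frac{\alpha_tg_t^2}{\sqrt{V_t}}\le 2\epsilon h_T+(P_{T+1}-P_1)\le 4\epsilon h_T$.

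\textbf{The two pieces.} For $R_t<R^\star$, using $g_t^2=r_th_t^2$ and $\sqrt{V_t}\ge\sqrt{c}\,h_t$ one gets $\frac{\alpha_tg_t^2}{\sqrt{V_t}}\le \frac{\epsilon r_th_t}{\sqrt c\,\sqrt{R_t}\log^2R_t}\le \frac{\epsilon r_th_T}{c\log^2 c}$ (since $x\mapsto\sqrt x\log^2x$ is increasing and $R_t\ge c$); as $\sum_{t:R_t<R^\star}r_t<R^\star-c$, this block contributes at most $\frac{(R^\star-c)\epsilon h_T}{c\log^2c}\le 2\epsilon h_T$. For $R_t\ge R^\star$ one proves the per-step bound $\frac{\alpha_tg_t^2}{\sqrt{V_t}}\le P_{t+1}-P_t$; since $P_{t+1}-P_t\ge0$ always, summing over this subset is $\le P_{T+1}-P_1\le 2\epsilon h_T$. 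For the per-step bound, decompose $P_{t+1}-P_t\ge 2\epsilon\sqrt{W_t}\bigl(\tilde Q(R_{t+1})-\tilde Q(R_t)\bigr)+2\epsilon\bigl(\sqrt{W_{t+1}}-\sqrt{W_t}\bigr)\tilde Q(R_{t+1})$ with $\tilde Q(R)=1-\log c/\log R$, and note $\tilde Q(R_{t+1})-\tilde Q(R_t)=\log c\int_{R_t}^{R_{t+1}}\frac{dx}{x\log^2x}\ge \frac{\log c\,r_t}{R_{t+1}\log^2R_{t+1}}$. Then split on whether $V_t$ is ``gradient‑dominated'' ($W_t\ge h_t^2/2$) or ``hint‑dominated'' ($W_t<h_t^2/2$). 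In the first case $g_t^2=r_th_t^2\le 2r_tW_t$, so the left side is $\le\frac{2\epsilon r_t\sqrt{W_t}}{R_t\log^2R_t}$ and is absorbed by the first term of $P_{t+1}-P_t$. In the second case $R_t-V_t/h_t^2>R_t/2$, so $W_{t+1}-W_t>g_t^2/(2R_{t+1})$; bounding $\sqrt{W_{t+1}}$ from above via $W_{t+1}\le W_t+\frac{c(h_{t+1}^2-h_t^2)+g_t^2}{R_{t+1}}$ together with $W_{t+1}\le h_{t+1}^2$ and $h_t^2/R_t\le W_t/c$, the left side is absorbed by the second term; this sub-case needs a further split on the size of the jump $h_{t+1}^2-h_t^2$ (a large jump makes $\sqrt{W_{t+1}}-\sqrt{W_t}$ large through the $c(h_{t+1}^2-h_t^2)R_t$ term; a small/zero jump gives $W_{t+1}\le\tfrac53W_t$). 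All of this reduces to a handful of elementary numerical inequalities in $\log^2R_t$, $R_{t+1}/R_t\le 1+1/R_t$, and $\tilde Q(R_{t+1})$, which I would check close for $R_t\ge R^\star$.

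\textbf{Main obstacle.} The delicate point is the ``hint‑dominated, large jump at step $t$'' situation: there $\sqrt{W_{t+1}}$ may be far larger than $\sqrt{W_t}$, which naively weakens the lower bound on $\sqrt{W_{t+1}}-\sqrt{W_t}$ derived from $W_{t+1}-W_t$. The resolution is that the very same jump forces $W_{t+1}\ge c(h_{t+1}^2-h_t^2)/R_{t+1}$ to be large, so $\sqrt{W_{t+1}}-\sqrt{W_t}\approx\sqrt{W_{t+1}}$ and the potential still gains enough. Conceptually, $P$ ``banks'' the credit generated by every increase of $h$ and spends it on later, small-gradient steps; the threshold split exists solely to quarantine the small‑$R_t$ regime, where the $\log^2R_t$ denominator in the bound on $\alpha_t$ is too weak to drive the telescoping. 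The book-keeping of constants (showing the two halves each come out below $2\epsilon h_T$) is the only genuinely tedious part.
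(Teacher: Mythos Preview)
Your approach is quite different from the paper's, and it has a real gap beyond book-keeping. The paper does not use a potential at all; it uses a doubling trick on $h_t$. Partition $\{1,\dots,T\}$ into epochs $[\tau_i,\tau_{i+1}-1]$ during which $h_t$ varies by at most a factor of~$2$. Within an epoch the hint is essentially constant, so $V_t$ and $\alpha_t$ can be bounded in terms of $h_{\tau_{i+1}-1}$ and the partial sum $\sum_{j=\tau_i}^t g_j^2$ alone; the epoch contribution then reduces to the classical integral $\int_0^\infty \frac{dx}{(1+x)\log^2(1+x)}\le 1$ and is at most $2\epsilon h_{\tau_i}$. Since the $h_{\tau_i}$ at least double across epochs, $\sum_i h_{\tau_i}\le 2h_T$, and the constant $4$ drops out cleanly.

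Your potential $P_t=2\epsilon\sqrt{W_t}\bigl(1-\log c/\log R_t\bigr)$ is a nice idea, and the structural facts ($R_t\ge V_t/h_t^2$, $W_t$ nondecreasing) are correct, but the two halves of your split cannot both be made to fit under $2\epsilon h_T$ with this potential. In the gradient-dominated case, absorbing the left side into the first term of $P_{t+1}-P_t$ requires $R_{t+1}\log^2 R_{t+1}\le (\log c)\,R_t\log^2 R_t$; with $c=3$ this needs the ratio $(R_t{+}1)\log^2(R_t{+}1)/\bigl(R_t\log^2 R_t\bigr)$ to fall below $\log 3\approx 1.099$, which only happens once $R_t\gtrsim 18$. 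This is not slack you can recover from the second term: when all $h_i$ are equal one has $W_t=h_t^2$ exactly and hence $\sqrt{W_{t+1}}-\sqrt{W_t}=0$, so the first term is genuinely the binding constraint. But with $R^\star\ge 18$ your small-$R_t$ piece already contributes roughly $(R^\star-c)/(c\log^2 c)\cdot\epsilon h_T\approx 4.1\,\epsilon h_T$, exceeding the total budget of $4$. So the deferred ``elementary numerical inequalities'' do not close, and the hint-dominated large-jump case remains a sketch on top of this. The doubling trick sidesteps all of these tensions by localizing to regimes where $h_t$ is nearly constant, so a single integral estimate applies uniformly within each epoch.
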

\begin{proof}
Let $1=\tau_1,\dots,\tau_{k}\le T$ be the set of indices such that $h_{\tau_{i+1}} > 2h_{\tau_i}$ and $h_{\tau_{i+1} -1} \le 2h_{\tau_i}$, with $\tau_{k+1}$ defined equal to $T+1$ for convenience. Note that this implies $h_{\tau_{k-i}}< h_{\tau_{k}}/2^{i}$. Further, $h_{\tau_{k}}\le h_T$, so overall we have for all $i$ $h_{\tau_{k-i}}\le h_T/2^i$. We will show that
\begin{align}
     \sum_{t=\tau_i}^{\tau_{i+1}-1} \frac{\alpha_tg_t^2}{\sqrt{V_t}}\le 2\epsilon h_{\tau_i}\label{eqn:tauepoch}
\end{align}
Once established, this implies:
\begin{align*}
    \sum_{t=1}^T \frac{\alpha_tg_t^2}{\sqrt{V_t}}&=\sum_{i=1}^{k}\sum_{t=\tau_i}^{\tau_{i+1}-1} \frac{\alpha_tg_t^2}{\sqrt{V_t}}\\
    &\le 2\epsilon \sum_{i=1}^{k}h_{\tau_i}\\
    &=2\epsilon \sum_{i=0}^{k-1}h_{\tau_{k-i}}\\
    &\le 2\epsilon h_T \sum_{i=0}^{k-1}2^{-i}\\
    &=4\epsilon h_T 
\end{align*}

So, to establish (\ref{eqn:tauepoch}), we observe that for any $t\in[\tau_i, \tau_{i+1}-1]$ we have
\begin{align*}
    V_t &\ge (c-1)h_t^2 + \sum_{j=\tau_i}^{t} g_j^2\\
    &\ge \frac{c-1}{2}h_{\tau_{i+1}-1}^2 + \sum_{j=\tau_i}^{t} g_j^2
\end{align*} and also:
\begin{align*}
    \alpha_t &\le \frac{\epsilon }{\sqrt{c-1 + \sum_{j=\tau_i}^{t} g_j^2/h_j^2}\log^2\left( c-1+\sum_{j=\tau_i}^{t} g_j^2/h_j^2\right)}\\
     &\le \frac{\epsilon }{\sqrt{\frac{c-1}{2} + \sum_{j=\tau_i}^{t} g_j^2/h_{\tau_{i+1}-1}^2}\log^2\left( \frac{c-1}{2}+\sum_{j=\tau_i}^{t} g_j^2/h_{\tau_{i+1}-1}^2\right)}\\
     &\le \frac{\epsilon h_{\tau_{i+1}-1}}{\sqrt{\frac{c-1}{2}h_{\tau_{i+1}-1}^2 + \sum_{j=\tau_i}^{t} g_j^2}\log^2\left( \frac{c-1}{2}+\sum_{j=\tau_i}^{t} g_j^2/h_{\tau_{i+1}-1}\right)}
\end{align*}
Combining these yields:
\begin{align*}
    \frac{\alpha_t g_t^2}{\sqrt{V_t}}&\le \frac{\epsilon h_{\tau_{i+1}-1}g_t^2}{\left(\frac{c-1}{2}h_{\tau_{i+1}-1}^2 + \sum_{j=\tau_i}^{t} g_j^2\right)\log^2\left( \frac{c-1}{2}+\sum_{j=\tau_i}^{t} g_j^2/h_{\tau_{i+1}-1}\right)}\\
    &= \epsilon h_{\tau_{i+1}-1}\frac{g_t^2/h_{\tau_{i+1}-1}^2}{\left(\frac{c-1}{2} + \sum_{j=\tau_i}^{t} g_j^2/h_{\tau_{i+1}-1}^2\right)\log^2\left( \frac{c-1}{2}+\sum_{j=\tau_i}^{t} g_j^2/h_{\tau_{i+1}-1}\right)}\\
    \intertext{using $c\ge 3$}
    &\le  \epsilon h_{\tau_{i+1}-1}\frac{g_t^2/h_{\tau_{i+1}-1}^2}{\left(1 + \sum_{j=\tau_i}^{t} g_j^2/h_{\tau_{i+1}-1}^2\right)\log^2\left( 1+\sum_{j=\tau_i}^{t} g_j^2/h_{\tau_{i+1}-1}\right)}\\
    &\le  2\epsilon h_{\tau_i}\frac{g_t^2/h_{\tau_{i+1}-1}^2}{\left(1 + \sum_{j=\tau_i}^{t} g_j^2/h_{\tau_{i+1}-1}^2\right)\log^2\left( 1+\sum_{j=\tau_i}^{t} g_j^2/h_{\tau_{i+1}-1}\right)}
\end{align*}
So, now if we define $x_s = g_{s+\tau_i-1}^2/h_{\tau_{i+1}-1}^2$, then we have:
\begin{align*}
    \sum_{t=\tau_i}^{\tau_{i+1}-1} &\le 2\epsilon h_{\tau_i}\sum_{s=1}^{\tau_{i+1}-\tau_i}\frac{x_s}{\left(1 + \sum_{s'=1}^{s} x_{s'}\right)\log^2\left( 1+\sum_{s'=1}^{s} x_{s'}\right)}
\end{align*}
And, by \cite{orabona2019modern} Lemma 4.13:
\begin{align*}
    \sum_{s=1}^{\tau_{i+1}-\tau_i}\frac{x_s}{\left(1 + \sum_{s'=1}^{s} x_{s'}\right)\log^2\left( 1+\sum_{s'=1}^{s} x_{s'}\right)}&\le \int_0^{\sum_{s=1}^{\tau_{i+1}-\tau_i}x_s} \frac{dx}{(1+x)\log^2(1+x)}\\
    &= \left.\frac{-1}{\log(1+x)}\right|_0^{\sum_{s=0}^{\tau_{i+1}-\tau_i}x_s}\\
    &\le 1
\end{align*}
So, in the end we have $\sum_{t=\tau_i}^{\tau_{i+1}-1} \le 2\epsilon h_{\tau_i}$ as desired.
\end{proof}

\begin{Lemma}\label{lem:psumbound}
Suppose $g_1,\dots,g_t$ and $0<h_1\le h_2\le\dots\le h_T$ are such that $|g_t|\le h_t$ for all $t$. Define $V_t = ch_t^2 +g^2_{1:t-1}$. Define $\alpha_t = \frac{\epsilon}{\left(c+\sum_{i=1}^{t-1} g_i^2/h_i^2\right)^p}$ for some $c\ge 1$ and $p\in[0,1/2)$. Then:
\begin{align*}
    \sum_{t=1}^T \frac{\alpha_tg_t^2}{\sqrt{V_t}}\le\frac{2\epsilon h_T ^{2p} \left(\sum_{t=1}^{T} g_t^2\right)^{1/2-p}}{1-2p}
\end{align*}
\end{Lemma}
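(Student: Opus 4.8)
The plan is to bound the sum termwise by $\epsilon h_T^{2p}\,g_t^2\big/\big(\sum_{i\le t}g_i^2\big)^{1/2+p}$ and then finish with a single integral comparison; unlike Lemma~\ref{lem:logsumbound}, no doubling-into-epochs decomposition is needed, precisely because $p<1/2$ makes the relevant integral converge all the way down to $0$. Write $\Sigma_t \coloneqq \sum_{i=1}^t g_i^2$ and $R_{t-1}\coloneqq \sum_{i=1}^{t-1}g_i^2/h_i^2$, so that $\alpha_t = \epsilon/(c+R_{t-1})^p$ and $V_t = ch_t^2 + \Sigma_{t-1}$.

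First I would establish two crude pointwise inequalities. Since $c\ge 1$ and $h_t\ge|g_t|$, we get $V_t = ch_t^2+\Sigma_{t-1}\ge g_t^2+\Sigma_{t-1}=\Sigma_t$, hence $\sqrt{V_t}\ge\sqrt{\Sigma_t}$. For the denominator of $\alpha_t$, since $h_i\le h_T$ for every $i\le t-1\le T$ we have $R_{t-1}\ge\Sigma_{t-1}/h_T^2$, so $c+R_{t-1}\ge(ch_T^2+\Sigma_{t-1})/h_T^2\ge(h_T^2+\Sigma_{t-1})/h_T^2\ge(g_t^2+\Sigma_{t-1})/h_T^2=\Sigma_t/h_T^2$, again using $c\ge 1$ and $h_T\ge h_t\ge|g_t|$; raising to the $p$-th power (legitimate as $p\ge 0$) yields $(c+R_{t-1})^p\ge\Sigma_t^p/h_T^{2p}$. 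Combining these, and discarding the (vanishing) terms with $g_t=0$ so that $\Sigma_t>0$ wherever the summand is nonzero, each summand obeys $\dfrac{\alpha_t g_t^2}{\sqrt{V_t}}\le \epsilon h_T^{2p}\,\dfrac{g_t^2}{\Sigma_t^{1/2+p}}$.

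It then remains to bound $\sum_{t=1}^T g_t^2\,\Sigma_t^{-(1/2+p)}$. Because $x\mapsto x^{-(1/2+p)}$ is positive, non-increasing, and, crucially, integrable near $0$ when $p<1/2$, the standard integral comparison (the same tool used for Lemma~\ref{lem:logsumbound}, cf.\ \cite{orabona2019modern}) gives $\sum_{t=1}^T g_t^2\,\Sigma_t^{-(1/2+p)}\le\int_0^{\Sigma_T}x^{-(1/2+p)}\,dx=\dfrac{\Sigma_T^{1/2-p}}{1/2-p}=\dfrac{2\,\Sigma_T^{1/2-p}}{1-2p}$, and multiplying by $\epsilon h_T^{2p}$ produces the claimed $\dfrac{2\epsilon h_T^{2p}(\sum_{t=1}^T g_t^2)^{1/2-p}}{1-2p}$. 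The only thing requiring genuine care is keeping the chain of crude bounds pointed the right way (each application of $c\ge 1$ and of $h_T\ge h_t\ge|g_t|$ must loosen, not tighten); I do not anticipate any real obstacle beyond that bookkeeping and the observation that the $p<1/2$ hypothesis is exactly what lets the integral comparison run down to $0$, which is why the elaborate epoch argument of Lemma~\ref{lem:logsumbound} can be bypassed here.
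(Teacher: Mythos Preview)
Your proposal is correct and follows essentially the same approach as the paper: bound each summand by $\epsilon h_T^{2p} g_t^2/\Sigma_t^{1/2+p}$ using $c\ge 1$ and $|g_t|\le h_t\le h_T$, then apply the integral comparison from \cite{orabona2019modern}. The only cosmetic difference is that the paper first bounds $\alpha_t\le \epsilon h_t^{2p}/\Sigma_t^{p}$ via $h_j\le h_t$ and then passes to $h_T$, whereas you jump straight to $h_T$; the resulting termwise bound and the rest of the argument are identical.
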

\begin{proof}

Similar to the proof of Lemma~\ref{lem:logsumbound}, we have:
\begin{align*}
    V_t &\ge (c-1)h_t^2 + \sum_{j=1}^{t} g_j^2\\
    &\ge \sum_{j=\tau_i}^{t} g_j^2
\end{align*} and also:
\begin{align*}
    \alpha_t &\le \frac{\epsilon }{\left(\sum_{j=1}^{t} g_j^2/h_j^2\right)^p}\\
     &\le \frac{\epsilon h_t^{2p} }{\left(\sum_{j=1}^{t} g_j^2\right)^p}\\
\end{align*}
Combining these yields:
\begin{align*}
    \frac{\alpha_t g_t^2}{\sqrt{V_t}}&\le \frac{\epsilon h_T^{2p}g_t^2}{\left(\sum_{j=1}^{t} g_j^2\right)^{1/2+p}}
\end{align*}
Further, by \cite{orabona2019modern} Lemma 4.13 we have:
\begin{align*}
    \sum_{t=1}^T \frac{g_t^2}{\left(\sum_{j=1}^{t} g_j^2\right)^{1/2+p}}&\le \int_0^{\sum_{t=1}^{T} g_t^2}\frac{dx}{x^{1/2+p}}\\
    &\le \frac{\left(\sum_{t=1}^{T} g_t^2\right)^{1/2-p}}{1/2-p}
\end{align*}
from which the conclusion immediately follows.
\end{proof}

\section{Regularized Regret via Full-Matrix Bound With Constraints}\label{sec:fullmatrix}

\newcommand{\norm}[1]{\left\|#1\right\|}
\renewcommand{\what}{\widehat}
\newcommand{\g}{{g}}
\renewcommand{\G}{{G}}
\renewcommand{\V}{{V}}
\newcommand{\I}{{I}}
\newcommand{\w}{{w}}
\newcommand{\bSigma}{{\Sigma}}
\renewcommand{\v}{{v}}
\newcommand{\vzero}{{0}}
\renewcommand{\u}{{u}}
\newcommand{\veps}{\varepsilon}
\newcommand{\Gtilde}{\widetilde{\G}}
\newcommand{\barrier}{\texttt{bar}}


\DeclarePairedDelimiter\del{\lparen}{\rparen}
\DeclarePairedDelimiter\abs{\lvert}{\rvert} 
\DeclarePairedDelimiterX\innerp[2]{\langle}{\rangle}{#1, #2}
\DeclarePairedDelimiterX\set[1]\lbrace\rbrace{\def\given{\;\delimsize\vert\;}#1}

In this section, we provide an alternative approach to solving the ``epigraph-based regularized regret'' game specified by Protocol~\ref{proto:2d}. Our approach actually involves a generic improvement to the class of so-called ``full-matrix'' regret bounds, and so may be of independent interest.

Specifically, we will provide an algorithm for online learning with ``magnitude hints'' (Protocol~\ref{proto:hint}) that ensures the regret bound:
\begin{align}
    \sum_{t=1}^T \langle g_t, w_t - w_\star\rangle &\le O\left(\epsilon h_{T+1} + \sqrt{d \sum_{t=1}^T \langle g_t, w_\star\rangle^2\log(\|w_\star\|T/\epsilon)}\right).\label{eqn:fullmatrix}
\end{align}
This type of bound is sometimes called a ``full-matrix'' bound as the term inside the square root can be expressed as $w_\star^T \Sigma w_\star$ where $\Sigma$ is the matrix of gradient outer products $\Sigma=\sum_{t=1}^T g_tg_t^\top$. Bounds of this form have appeared before in the literature. For the case that $\calW$ is an entire vector space, \cite{cutkosky2018black, mhammedi2020lipschitz} both provide full-matrix bounds. For the case in which $\calW$ is \emph{not} an entire vector space, \cite{cutkosky2020better} provides to our knowledge the only full-matrix bound. However, their algorithm suffers a suboptimal logarithmic factor: the $\log(T)$ term appears \emph{outside} rather than inside the square root. We provide a method that fixes this issue. 

However, before delving into the technical details of our approach, let us explain how achieving a full-matrix bound allows us to solve Protocol~\ref{proto:2d}. The argument is nearly immediate: observe that in the 2-d game, we would have $w_\star \mapsto (w_\star, \psi(w_\star))$ and $g_t\mapsto (\tilde g_t, a_t)$. Then the bound (\ref{eqn:2dgoal}) is immediate from (\ref{eqn:fullmatrix}). So, without further ado, let us provide our bound and analysis.


\subsection{Full Matrix Algorithm and Analysis}
Assume that $\calW\subset \reals^d$ is a closed convex set that contains the origin within its interior. Further, let $\Phi_{\barrier}$ be a self-concordant barrier for $\calW$ with parameter $\mu>0$. In this section, we present an algorithm that achieves (\ref{eqn:fullmatrix}). The algorithm is Follow-The-Regularized-Leader (FTRL) with a specific regularizer we define next.

\paragraph{Regularizers.}
	For $\bSigma \in \reals^{d \times d}$ and $Z, \sigma,\veps>0$, define the regularizer:
	\begin{align}
	\Phi(\w; \bSigma, Z, \sigma, \veps)
	&~=~
 \sup_{\lambda \ge 0}~
	\sqrt{\w^\top \del*{\bSigma + \lambda \I} \w } \cdot 
	X\del*{\w^\top \del*{\bSigma + \lambda \I} \w  e^{-\lambda Z}  \cdot \frac{\det(\sigma^{-2} \bSigma)}{\veps^2}}, \label{eq:regularizer}
	\end{align}
    where $X(\theta) \coloneqq  W\left(\theta\right)^{1/2} - W\left(\theta\right)^{-1/2}$ and $W$ is the Lambert function; $W(x)$ is defined as the principal solution to $W(x) e^{W(x)} = x$. 

\begin{Lemma}
	\label{lem:fenchel}
For any $\bSigma\in \reals^{d \times d}$ and $Z, \veps, \sigma>0$, the Fenchel dual of the function $\Phi(\cdot; \bSigma, Z, \sigma, \veps)$ in \eqref{eq:regularizer} satisfies:
    \[
\forall \G \in \reals^d, \quad	\Phi^\star(\G; \bSigma, Z, \sigma, \veps)
	~=~
	\inf_{\lambda \ge 0}\frac{\veps \cdot \exp \del*{
		\frac{1}{2} \G^\top \del*{\bSigma + \lambda \I}^{-1} \G
		+ \frac{\lambda Z}{2}}}
        {\sqrt{\det(\sigma^{-2}\bSigma)}}
	.
	\]
\end{Lemma}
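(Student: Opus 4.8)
The idea is to recognize $\Phi(\cdot\,;\bSigma,Z,\sigma,\veps)$ as a supremum of Fenchel conjugates of a one-parameter family of exponential--quadratic functions, and then to dualize using the elementary identity $(\inf_i f_i)^\star=\sup_i f_i^\star$ (which needs no convexity) together with a single biconjugation.

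First I would fix $\lambda\ge0$ with $\bSigma+\lambda\I\succ0$, set $\beta_\lambda:=\veps\, e^{\lambda Z/2}/\sqrt{\det(\sigma^{-2}\bSigma)}$, and define $\Psi_\lambda(\G):=\beta_\lambda\exp\!\big(\tfrac12\G^\top(\bSigma+\lambda\I)^{-1}\G\big)$, so that the claimed right-hand side is exactly $\inf_{\lambda\ge0}\Psi_\lambda(\G)$. I would then compute $\Psi_\lambda^\star$ by hand: writing $M:=\bSigma+\lambda\I$, substituting $\G=M^{1/2}\v$ and aligning $\v$ with $M^{1/2}\w$ reduces $\sup_\G\langle\w,\G\rangle-\Psi_\lambda(\G)$ to the scalar problem $\sup_{t\ge0}\rho t-\beta_\lambda e^{t^2/2}$ with $\rho:=\sqrt{\w^\top M\w}$. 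The stationarity condition $\beta_\lambda t e^{t^2/2}=\rho$ gives $t^2 e^{t^2}=\rho^2/\beta_\lambda^2$, i.e.\ $t^\star=\sqrt{W(\rho^2/\beta_\lambda^2)}$, and plugging back (using $\beta_\lambda e^{(t^\star)^2/2}=\rho/t^\star$) yields $\Psi_\lambda^\star(\w)=\rho\,(t^\star-1/t^\star)=\sqrt{\w^\top M\w}\;X\!\big(\w^\top M\w/\beta_\lambda^2\big)$. Since $1/\beta_\lambda^2=e^{-\lambda Z}\det(\sigma^{-2}\bSigma)/\veps^2$, this is precisely the $\lambda$-th term in the definition of $\Phi$, so $\Phi(\w;\bSigma,Z,\sigma,\veps)=\sup_{\lambda\ge0}\Psi_\lambda^\star(\w)$.

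Next I would apply $(\inf_\lambda\Psi_\lambda)^\star=\sup_\lambda\Psi_\lambda^\star$ to conclude $\Phi=h^\star$ with $h:=\inf_{\lambda\ge0}\Psi_\lambda$, hence $\Phi^\star=h^{\star\star}$; it then suffices to show $h$ is closed and convex, for then $h^{\star\star}=h=\inf_{\lambda\ge0}\Psi_\lambda$, which is the asserted formula. Writing $h(\G)=\tfrac{\veps}{\sqrt{\det(\sigma^{-2}\bSigma)}}\,e^{q(\G)}$ with $q(\G):=\inf_{\lambda\ge0}\big[\tfrac12\G^\top(\bSigma+\lambda\I)^{-1}\G+\tfrac{\lambda Z}{2}\big]$, I would argue that $(\G,\lambda)\mapsto\tfrac12\G^\top(\bSigma+\lambda\I)^{-1}\G+\tfrac{\lambda Z}{2}$ is jointly convex (the matrix-fractional function $(x,Y)\mapsto x^\top Y^{-1}x$ is jointly convex on $\reals^d\times\mathbf{S}^d_{++}$, e.g.\ via its Schur-complement epigraph representation, composed with the affine map $\lambda\mapsto\bSigma+\lambda\I$), so the partial infimum $q$ is convex; the term $\tfrac{\lambda Z}{2}$ is coercive in $\lambda$, so the infimum is attained and $q$ is finite, hence continuous; thus $h$ is closed convex and $\Phi^\star=h$.

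The part I expect to require the most care is the bookkeeping around degeneracies: restricting to $\lambda$ with $\bSigma+\lambda\I\succ0$ so that $M^{1/2}$ and $(\bSigma+\lambda\I)^{-1}$ make sense (automatic for $\lambda>0$ since in the application $\bSigma\succeq0$); correctly interpreting $X$ at the boundary argument $0$, where $W(0)=0$ makes $X$ formally $-\infty$ but the product $\rho\,X(\rho^2/\beta_\lambda^2)$ still tends to the finite value $-\beta_\lambda$ as $\rho\to0$; and verifying lower semicontinuity of $h$. These points are routine but are exactly where a hasty argument could break.
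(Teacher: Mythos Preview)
Your argument is correct. The paper itself does not give a proof of this lemma; it simply cites \cite{mhammedi2020lipschitz}. Your plan reconstructs the underlying computation: recognizing each $\lambda$-slice of the regularizer as the Fenchel conjugate of the exponential--quadratic $\Psi_\lambda$, using the general identity $(\inf_\lambda\Psi_\lambda)^\star=\sup_\lambda\Psi_\lambda^\star$, and closing the loop with biconjugation after verifying that $h=\inf_\lambda\Psi_\lambda$ is closed convex via joint convexity of the matrix-fractional function. This is exactly the kind of derivation the cited reference carries out (the Lambert-$W$ calculation you give matches \cite[Lemma 18]{orabona2016coin}, which the paper also invokes in Lemma~\ref{eq:Xfn}), so your approach is in line with the intended one rather than a genuinely different route. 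Your caveats about $\bSigma\succ0$, the $\rho\to0$ boundary of $X$, and attainment of the infimum in $\lambda$ are the right places to be careful, and your handling of them is fine given that in the paper's application $\bSigma=\sigma^2\I+\gamma\V\succeq\sigma^2\I$.
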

\begin{proof}
	See \cite{mhammedi2020lipschitz}.
\end{proof}
\newtheorem{remark}{Remark}

\paragraph{FTRL.} We will consider the FTRL algorithm with regularizer $\Phi(\cdot; \bSigma, Z, \sigma, \veps) + \Phi_\barrier(\cdot)$, for some choices of $\bSigma$, $Z$, $\sigma$, and $\veps$. To specify these choices, let 
\begin{align}
	\rho(\gamma) =\sqrt{2} \cdot \left(1 -  e^{\frac{1}{2\gamma}- \frac{1}{2}}\right),
\label{eq:rho}
\end{align}
for $\gamma>1$. With this, and given the history of gradients $\g_1, \dots, \g_{t-1}$ up to round $t-1$ and parameters $\gamma,\sigma, \veps>0$ and hint $h_t>0$, the algorithm outputs:
\begin{gather}
    \what\w_{t} \in \argmin_{\w\in \reals^d}  \innerp{\G_{t-1}}{\w} + \Psi(-\w; \V_{t-1},h_{t}, \sigma, \veps),\label{eq:ftrl}
\shortintertext{where}
\Psi(\w; \V, h, \sigma,\veps)\coloneqq \Phi(\w; \sigma^{2}\I + \gamma\V, \rho(\gamma)^2/h^2, \sigma, \veps) + \Phi_\barrier(-\w), \label{eq:regulari}
\shortintertext{and} 
\G_\tau \coloneqq \sum_{s=1}^{\tau} \g_s,\quad \text{and}\quad \V_\tau \coloneqq \sum_{s=1}^\tau \g_s \g_s^\top.
\end{gather}
\begin{remark}[Connection to $\texttt{Matrix}$-$\texttt{FreeGrad}$]
\label{rem:matrix}
We note that without the barrier term $\Phi_\barrier$ in \eqref{eq:regulari}, the iterates in \eqref{eq:ftrl} can be computed in closed-form; in this case, the iterates exactly matches those of the $\texttt{Matrix}$-$\texttt{FreeGrad}$ algorithm by \cite{mhammedi2020lipschitz} for \emph{unconstrained} Online Convex Optimization (the connection to FTRL was not made explicit in \cite{mhammedi2020lipschitz}). The advantage of adding a barrier $\Phi_\barrier$ is that it ensure that the iterates $(\what\w_t)$ are always in the feasible set without requiring any sophisticated constrained-to-unconstrained reductions that may lead to sub-optimal logarithmic terms in the regret \cite{cutkosky2019matrix} (see \cref{rem:comparision} in the sequel).  
\end{remark}
\begin{Lemma}[Monotocity of potential]
	\label{lem:monotonicity}
	Let $\sigma, \veps>0$ and $\gamma>1$ be given. For all $\g_t \in \reals^d$ and $h_t >0$ such that $\|\g_t\|\leq h_{t}$, we have 
\begin{align}
    \innerp{\g_t}{\what\w_t} \leq \Psi^\star(\G_{t-1}; \V_{t-1}, h_{t}, \sigma, \veps) - \Psi^\star(\G_{t}; \V_{t},h_{t}, \sigma, \veps). \label{eq:monotone}
\end{align}
where $\G \mapsto \Psi^\star(\G; \V, h, \sigma, \veps)$ denotes the Fenchel dual of $\w \mapsto \Psi(\w; \V, h, \sigma, \veps)$.
\end{Lemma}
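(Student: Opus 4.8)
The plan is to run the standard Follow-the-Regularized-Leader potential telescoping, with \eqref{eq:monotone} playing the role of the per-round inequality that makes the sum collapse; the only non-routine feature is that the regularizer $\Psi$ is the sum of the Lambert-$W$ ``FreeGrad''-type term $\Phi$ and the barrier $\Phi_\barrier$. First I would recast the FTRL update \eqref{eq:ftrl} in dual form: since $\Phi_\barrier$ forces the minimizer to be attained in $\mathrm{int}\,\calW$ and makes $\Psi(\cdot;\V,h,\sigma,\veps)$ strictly convex there (cf.\ Remark~\ref{rem:matrix}), the iterate is $\what\w_t=-\nabla_\G\Psi^\star(\G_{t-1};\V_{t-1},h_t,\sigma,\veps)$. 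Write $\v_t:=-\what\w_t$, so that $\G_{t-1}=\nabla\Psi(\v_t;\V_{t-1},h_t,\sigma,\veps)$ and $\Psi^\star(\G_{t-1};\V_{t-1},h_t,\sigma,\veps)=\innerp{\G_{t-1}}{\v_t}-\Psi(\v_t;\V_{t-1},h_t,\sigma,\veps)$. Substituting these identities, $\what\w_t=-\v_t$, $\G_t=\G_{t-1}+\g_t$, and $\Psi^\star(\G_t;\V_t,h_t,\sigma,\veps)=\sup_{\v}\bigl[\innerp{\G_t}{\v}-\Psi(\v;\V_t,h_t,\sigma,\veps)\bigr]$ into \eqref{eq:monotone}, the claim becomes equivalent to the ``one-step growth'' inequality
\[
\innerp{\G_t}{\v-\v_t}\;\le\;\Psi(\v;\V_t,h_t,\sigma,\veps)-\Psi(\v_t;\V_{t-1},h_t,\sigma,\veps)\qquad\text{for all }\v\in\mathrm{int}\,\calW.
\]

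Next I would peel off the barrier. Within \eqref{eq:monotone} only $\G$ and $\V$ change (the hint is $h_t$ on both sides), $\Psi=\Phi(\cdot;\sigma^2\I+\gamma\V,\rho(\gamma)^2/h_t^2,\sigma,\veps)+\Phi_\barrier(-\cdot)$, and $\Phi_\barrier$ does not depend on $\V$. Write $\bar\Phi_s(\cdot):=\Phi(\cdot;\sigma^2\I+\gamma\V_s,\rho(\gamma)^2/h_t^2,\sigma,\veps)$. Convexity of $\Phi_\barrier$ at $-\v_t$ together with the first-order optimality condition for $\v_t$, namely $\G_{t-1}=\nabla\bar\Phi_{t-1}(\v_t)-\nabla\Phi_\barrier(-\v_t)$, makes the barrier contributions cancel; an extra appeal to convexity of $\bar\Phi_{t-1}$ then reduces the displayed inequality to the purely ``$\texttt{Matrix}$-$\texttt{FreeGrad}$'' statement
\[
\bar\Phi_t(\v)-\bar\Phi_{t-1}(\v)\;\ge\;\innerp{\g_t}{\v-\v_t}\qquad\text{for all }\v.
\]

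The remaining step — which I expect to be the main obstacle — is this last inequality, where the precise design of $\Phi$ enters. With $\bSigma:=\sigma^2\I+\gamma\V_{t-1}$, the rank-one update $\V_t=\V_{t-1}+\g_t\g_t^\top$ affects $\Phi$ only through the substitution $\v^\top(\bSigma+\lambda\I)\v\mapsto\v^\top(\bSigma+\lambda\I)\v+\gamma\,\langle\g_t,\v\rangle^2$ (in both the prefactor and the argument of $X$) and through $\det(\sigma^{-2}\bSigma)\mapsto\det(\sigma^{-2}\bSigma)\bigl(1+\gamma\,\g_t^\top\bSigma^{-1}\g_t\bigr)$, by the matrix-determinant lemma. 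Plugging these into the $\sup_{\lambda}$ definition \eqref{eq:regularizer}, using monotonicity of $X$ and the defining identities of the Lambert-$W$ function, and using that the discount $e^{-\lambda\rho(\gamma)^2/h_t^2}$ is calibrated — via the choice of $\rho(\gamma)$ in \eqref{eq:rho} — exactly to the assumption $\|\g_t\|\le h_t$, one shows that the gain in $\Phi$ dominates the affine term $\innerp{\g_t}{\v-\v_t}$; this is essentially the computation performed for $\texttt{Matrix}$-$\texttt{FreeGrad}$ in \cite{mhammedi2020lipschitz}, reusable here precisely because the barrier has already been removed. The delicate parts are this calibrated estimate and the check that $\v_t\in\mathrm{int}\,\calW$, so that the duality identities of the first step are valid — which is exactly what $\Phi_\barrier$ provides; the rest is bookkeeping with Fenchel conjugates and convexity.
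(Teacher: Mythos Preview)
Your reduction to the ``one-step growth'' inequality and the first convexity step (peeling off $\Phi_\barrier$) are correct, but the second convexity step --- using convexity of $\bar\Phi_{t-1}$ to pass from
\[
\langle \nabla \bar\Phi_{t-1}(\v_t)+\g_t,\v-\v_t\rangle \le \bar\Phi_t(\v)-\bar\Phi_{t-1}(\v_t)
\]
to your ``purely \texttt{Matrix}-\texttt{FreeGrad}'' inequality $(\ast\ast)$: $\bar\Phi_t(\v)-\bar\Phi_{t-1}(\v)\ge\langle \g_t,\v-\v_t\rangle$ --- throws away too much and leaves you with a false statement. To see this, evaluate $(\ast\ast)$ at $\v=0$: since $\Phi(0;\bSigma,\cdot)=-\Phi^\star(0;\bSigma,\cdot)=-\veps/\sqrt{\det(\sigma^{-2}\bSigma)}$, the left side equals $\veps\bigl[\det(\sigma^{-2}\bSigma_{t-1})^{-1/2}-\det(\sigma^{-2}\bSigma_t)^{-1/2}\bigr]\le\veps$, while the right side is $-\langle \g_t,\v_t\rangle=\langle \g_t,\what\w_t\rangle$. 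Even in the unconstrained case ($\Phi_\barrier\equiv 0$), take $\G_{t-1}$ large and $\g_t$ in the direction of $\what\w_t$: then $\langle \g_t,\what\w_t\rangle$ is of order $h_t\cdot\|\bSigma_{t-1}^{-1}\G_{t-1}\|\cdot\Phi^\star(\G_{t-1})$, which is unbounded, so $(\ast\ast)$ fails. The point is that after your convexity step the only surviving information about $\v_t$ is its value --- the relation $\nabla\bar\Phi_{t-1}(\v_t)=\G_{t-1}+\nabla\Phi_\barrier(-\v_t)$, which is what controls $\v_t$, has been discarded. The \texttt{Matrix}-\texttt{FreeGrad} computation you invoke establishes a \emph{dual} inequality, which in primal form is exactly your \emph{pre-convexity} display, not $(\ast\ast)$.

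The paper avoids this by working entirely in the dual. It writes $\Psi^\star(\G)=\inf_{\u}\bigl[\Phi^\star(\G-\u;\cdot)+\Phi^\star_\barrier(-\u)\bigr]$ via infimal convolution, lets $(\lambda_\star,\u_\star)$ be the optimizer at $\G_{t-1}$, computes $\what\w_t$ at this optimizer via the envelope theorem, and then bounds $\Psi^\star(\G_t;\V_t,\cdot)$ from above by \emph{freezing} the same $(\lambda_\star,\u_\star)$. Sherman--Morrison and the matrix determinant lemma reduce everything to a scalar inequality in $r=\g_t^\top\bSigma^{-1}(\G_{t-1}-\u_\star)$ and $z=\g_t^\top\bSigma^{-1}\g_t$, with $\bSigma=\sigma^2\I+\gamma\V_{t-1}+\lambda_\star\I$. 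The crucial ingredient you are missing is then used here: first-order optimality of $\lambda_\star$ in the \emph{dual} infimum forces $\|\bSigma^{-1}(\G_{t-1}-\u_\star)\|$ to equal $\rho(\gamma)/(\sqrt{2}\,h_t)$, so that $|r|\le\rho(\gamma)/\sqrt{2}$ --- this is precisely the quantitative control on $\v_t$ that your primal-convexity route erases, and it is what the calibration of $\rho(\gamma)$ in \eqref{eq:rho} is designed to exploit.
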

The proof of the lemma is in \cref{proof:monotonicity}. By summing \eqref{eq:monotone} over $t$ and using Fenchel duality, we obtain the following regret bound for the FTRL iterates in (\ref{eq:ftrl}). 
\begin{Theorem}[Regret with valid hints]
	\label{thm:hintregret}
	Let $\sigma, \veps>0$ and $\gamma>1$ be given. The FTRL iterates $(\what\w_t)$ in \eqref{eq:ftrl} in response to any sequence $(\g_t)$ such that $\|\g_t\|\leq h_t$, for all $t\geq 1$, satisfy: for all $T\in \mathbb{N}$ and $\w\in \mathrm{int}\ \calW$:
	\begin{gather}
		\sum_{t=1}^T \innerp{\g_t}{\w_t - \w} \leq \veps+ \Phi^\star_\barrier(0) + \Phi_\barrier(\w)
		+
		\sqrt{Q^{\w}_T \ln_+\left(\det\del*{\sigma^{-2} \bSigma_T} \cdot Q^{\w}_T\right)}, \label{eq:maine}
		\shortintertext{where $\ln_+(\cdot) \coloneqq 0 \vee\ln(\cdot)$, $\bSigma_T = \sigma^{2} \I + \gamma \V_T$, and}
		Q_T^{\w}
		\coloneqq
		 \max \set*{
		   \w^\top \bSigma_T \w
		   ,
		   \frac{1}{2} \del*{\frac{h_T^2 \norm{\w}^2}{\rho(\gamma)^2} \ln \del*{
				 \det\del*{\sigma^{-2} \bSigma_T}
			   \frac{h_T^2 \norm{\w}^2}{\veps^2\rho(\gamma)^2}
			 }
			 + \w^\top \bSigma_T \w}
		 }.
	\end{gather}
\end{Theorem}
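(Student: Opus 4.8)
The strategy is to sum the per-round potential drop of Lemma~\ref{lem:monotonicity}, telescope across rounds, convert the result into regret against $\w$ via Fenchel--Young, and finally bound the primal potential $\Phi$ using analytic properties of the Lambert-$W$-based function $X$. Summing $\innerp{\g_t}{\what\w_t}\le\Psi^\star(\G_{t-1};\V_{t-1},h_t,\sigma,\veps)-\Psi^\star(\G_{t};\V_{t},h_t,\sigma,\veps)$ over $t=1,\dots,T$, the one obstruction to telescoping is that the hint $h_t$ appears in both $\Psi^\star$ terms of round $t$. I resolve this by monotonicity: by Lemma~\ref{lem:fenchel}, $h$ enters $\Phi^\star(\cdot;\bSigma,\rho(\gamma)^2/h^2,\sigma,\veps)$ only through the factor $\exp(\lambda\rho(\gamma)^2/(2h^2))$, which is non-increasing in $h$, and since $\Psi^\star$ is the infimal convolution of this with $\Phi^\star_\barrier$, it too is non-increasing in $h$. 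As $h_t\ge h_{t-1}$, for $t\ge 2$ we may replace $\Psi^\star(\G_{t-1};\V_{t-1},h_t,\sigma,\veps)$ by $\Psi^\star(\G_{t-1};\V_{t-1},h_{t-1},\sigma,\veps)$, and the sum telescopes to
\begin{align*}
\sum_{t=1}^T \innerp{\g_t}{\what\w_t}\le \Psi^\star(\bzero;\bzero,h_1,\sigma,\veps)-\Psi^\star(\G_T;\V_T,h_T,\sigma,\veps).
\end{align*}
Since $\V_0=\bzero$, we have $\bSigma_0=\sigma^2\I$ and $\det(\sigma^{-2}\bSigma_0)=1$, so Lemma~\ref{lem:fenchel} with $\G=\bzero$ and $\lambda=0$ gives $\Phi^\star(\bzero;\sigma^2\I,\cdot,\sigma,\veps)=\veps$; splitting the conjugate of $\Psi=\Phi+\Phi_\barrier(-\cdot)$ and choosing the trivial split yields $\Psi^\star(\bzero;\bzero,h_1,\sigma,\veps)\le\veps+\Phi^\star_\barrier(0)$.

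Next I introduce the comparator. Writing $\sum_t\innerp{\g_t}{\what\w_t-\w}=\sum_t\innerp{\g_t}{\what\w_t}-\innerp{\G_T}{\w}$ and applying Fenchel--Young to $v\mapsto\Psi(v;\V_T,h_T,\sigma,\veps)$ at the primal point $-\w$ and dual point $\G_T$ gives $-\Psi^\star(\G_T;\V_T,h_T,\sigma,\veps)-\innerp{\G_T}{\w}\le\Psi(-\w;\V_T,h_T,\sigma,\veps)$. Because $\Phi(\cdot;\bSigma,Z,\sigma,\veps)$ depends on its argument only through the quadratic form $\w^\top(\bSigma+\lambda\I)\w$, it is even, hence $\Psi(-\w;\V_T,h_T,\sigma,\veps)=\Phi(\w;\bSigma_T,\rho(\gamma)^2/h_T^2,\sigma,\veps)+\Phi_\barrier(\w)$ with $\bSigma_T=\sigma^2\I+\gamma\V_T$. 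Combining,
\begin{align*}
\sum_{t=1}^T\innerp{\g_t}{\what\w_t-\w}\le\veps+\Phi^\star_\barrier(0)+\Phi_\barrier(\w)+\Phi(\w;\bSigma_T,\rho(\gamma)^2/h_T^2,\sigma,\veps),
\end{align*}
so the proof reduces to the estimate $\Phi(\w;\bSigma_T,\rho(\gamma)^2/h_T^2,\sigma,\veps)\le\sqrt{Q_T^{\w}\,\ln_+\!\big(\det(\sigma^{-2}\bSigma_T)\,Q_T^{\w}\big)}$.

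This final estimate is the technical heart, and I expect it to be the main obstacle. Set $D=\det(\sigma^{-2}\bSigma_T)$, $u_0=\w^\top\bSigma_T\w$, $\nu=\norm{\w}^2$ and $Z=\rho(\gamma)^2/h_T^2$, so that $\w^\top(\bSigma_T+\lambda\I)\w=u_0+\lambda\nu$ and $Q_T^{\w}=\max\{u_0,\tfrac12(\tfrac{\nu}{Z}\ln(D\nu/(Z\veps^2))+u_0)\}$. Using $X(\theta)=\sqrt{W(\theta)}-W(\theta)^{-1/2}\le\sqrt{W(\theta)}\le\sqrt{\ln_+\theta}$ in the definition of $\Phi$ gives $\Phi^2\le\sup_{\lambda\ge0}(u_0+\lambda\nu)\ln_+\!\big((u_0+\lambda\nu)e^{-\lambda Z}D/\veps^2\big)$, and the substitution $s=u_0+\lambda\nu$ turns this into the scalar problem $\sup_{s\ge u_0}s\big[\ln(sD/\veps^2)-(s-u_0)Z/\nu\big]_+$. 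Letting $g(s)$ denote the bracketed product without the positive part, the first-order condition shows an interior maximizer $s^\star$ satisfies $s^\star=\tfrac{\nu}{2Z}\big(\ln(s^\star D/\veps^2)+1\big)+\tfrac{u_0}{2}$, at which $g(s^\star)=\tfrac{Z}{\nu}(s^\star)^2-s^\star$; a self-bounding step $\ln x\le\beta x-\ln\beta-1$ with $\beta$ a suitable submultiple of $Z\veps^2/(\nu D)$ bounds $s^\star$ by a constant times $\tfrac{\nu}{Z}\ln_+\!\big(\nu D/(Z\veps^2)\big)+u_0$, i.e.\ by $Q_T^{\w}$, and substituting back recovers $\sqrt{Q_T^{\w}\ln_+(D\,Q_T^{\w})}$ after recombining the logarithmic factors. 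The delicate point is tracking the constants and the argument of $\ln_+$ tightly enough that the bound lands exactly on $Q_T^{\w}$ rather than an inflated surrogate; the boundary cases where the scalar supremum is attained at $s=u_0$ (equivalently $\lambda=0$) or where $X(\theta)\le0$ must be checked separately, but these are easy --- when $X(\theta)\le 0$ the left side is $\le 0$, and when the supremum sits at $s=u_0$ the condition $g'(u_0)\le 0$ supplies an a priori bound on $\ln(u_0D/\veps^2)$ that suffices to dominate $g(u_0)$ by $Q_T^{\w}\ln_+(D\,Q_T^{\w})$.
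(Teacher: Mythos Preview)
Your high-level structure---telescope Lemma~\ref{lem:monotonicity} using monotonicity of $\Psi^\star$ in $h$, bound the initial potential by $\veps+\Phi^\star_\barrier(0)$, and convert the terminal potential into $\Phi(\w;\bSigma_T,Z_T,\sigma,\veps)+\Phi_\barrier(\w)$ via Fenchel--Young---is exactly what the paper does. The divergence is only in the final step, bounding $\Phi(\w;\bSigma_T,Z_T,\sigma,\veps)$.

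The paper does \emph{not} relax $X\le\sqrt{\ln_+}$ first. Instead it optimizes the original $\sup_{\lambda\ge 0}\sqrt{u_0+\lambda\nu}\,X\bigl((u_0+\lambda\nu)e^{-\lambda Z}D/\veps^2\bigr)$ directly: the critical $\lambda^\star$ has a closed form, and plugging it in gives $u_0+\lambda^\star\nu$ equal to (one branch of) $Q_T^{\w}$ \emph{by definition}. Only then do they use that $X$ is increasing (to drop the $e^{-\lambda^\star Z}$ factor from its argument) and $X(\theta)\le\sqrt{\ln_+\theta}$. So $Q_T^{\w}$ is not the output of a bounding step---it is the exact value of the quadratic at the maximizer, and the final inequality is immediate.

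Your route---relax $X\le\sqrt{\ln_+}$ first, then bound $\sup_{s\ge u_0}s\bigl[\ln(sD/\veps^2)-(s-u_0)Z/\nu\bigr]_+$---is a legitimate alternative, but as you suspect it does not land exactly on $Q_T^{\w}$. Your fixed-point equation $s^\star=\tfrac{\nu}{2Z}(\ln(s^\star D/\veps^2)+1)+\tfrac{u_0}{2}$ already carries an extra $+1$ relative to the theorem's $Q_T^{\w}$, and any self-bounding inequality $\ln x\le\beta x-\ln\beta-1$ will introduce a further multiplicative constant on the $\tfrac{\nu}{Z}\ln(\cdot)$ term. So the argument as sketched proves the theorem with $Q_T^{\w}$ replaced by $c\cdot Q_T^{\w}$ for some $c>1$, not the statement as written. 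If the goal is the exact form, you should compute the critical $\lambda$ of the original $X$-based objective; if a constant-factor version suffices, your approach is fine once the self-bounding step is made explicit.
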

\begin{remark}[Comparison to previous "full-matrix" bounds in the constrained setting]
\label{rem:comparision}
We note that by having the $O(\log T)$ factor in \eqref{eq:maine} inside the square root, the bound in \eqref{eq:maine} improves on previous "full-matrix" bounds in the constraint setting \cite{cutkosky2019matrix}, which have the log factor outside. 
\end{remark}

\subsection{Implementation Considerations}
As stated in \cref{rem:comparision}, if we remove $\Phi_\barrier$ from the regularizer, then iterates in \eqref{eq:ftrl} match those of \texttt{Matrix}-\texttt{FreeGrad}, which are available in \emph{closed-form}. Unfortunately, in the presence of $\Phi_\barrier$ (which ensures that the iterates are always in the feasible set $\mathcal{W}$), the iterate $\what\w_t$ in \eqref{eq:ftrl} no longer admits a closed-form expression, and computing $\what\w_t$, for $t\in[T]$, now requires solving a convex optimizing problem. This is not ideal from a computational perspective; most first-order OCO algorithms require only $O(d)$ operation per round. It might be possible (at least in the case where $\mathcal{W}$ is bounded) to efficiently approximate $(\what\w_t)$ without solving an optimization problem at each step and without sacrificing the regret by much using \emph{Newton steps} such as in the recent works of \cite{mhammedi2022damped,mhammedi2023quasi,gatmiry2024projection}. We leave this investigation for future work.

\subsection{Proof of \cref{lem:monotonicity}}
\label{proof:monotonicity}
\begin{proof}
By \cref{lem:fenchel}, we have that for all $\V$ and $h$, $\Psi^\star(\cdot; \V, h, \sigma, \veps)$ satisfies 
\begin{align}
    \Psi^\star(\G;\V,h, \sigma, \veps) &= \inf_{\u \in \reals^d} \Phi^\star(\G - \u ; \gamma\V, \rho(\gamma)^2/h^2, \sigma, \veps) + \Phi^\star_\barrier(-\u), \nn \\
    & = \inf_{\lambda \ge 0,\u \in \reals^d} 	\frac{\veps \cdot \exp \del*{
		\frac{1}{2} (\G-\u)^\top \del*{\sigma^{2}\I + \gamma \V + \lambda \I}^{-1} (\G-\u) 
		+ \frac{\lambda \rho(\gamma)^2}{2 h^2}}}{\sqrt{\det(\I + \sigma^{-2} \gamma \V)}}+ \Phi_\barrier^\star(-\u). \label{eq:problem}
\end{align}
We will use this to prove \eqref{eq:monotone}.

	Let $(\lambda_\star, \u_\star) \in  \reals_{\ge 0} \times \reals^d $ be the minimizer in the problem $\Psi^\star(\G_{t-1}; \V_{t-1}, h_{t}, \sigma, \veps)$. With this notation, we have
    \begin{align}
        \what\w_t & = \argmin_{\w\in \reals^d}  \innerp{\G_{t-1}}{\w} + \Psi(-\w; \V_{t-1},h_{t}),  \nn \\ 
        & = \argmax_{\w\in \reals^d}  \innerp{\G_{t-1}}{-\w} - \Psi(-\w; \V_{t-1},h_{t}),\nn \\
        & = - \argmax_{\v \in \reals^d}  \left\{ \innerp{\G_{t-1}}{\v} - \Psi(\v; \V_{t-1},h_{t}) \right\}, \nn \\
        & = - \nabla\Psi^\star(\G_{t-1}; \V_{t-1},h_t, \sigma,\veps),\nn \\ 
		\intertext{and so by \cref{lem:partial},}
		& = - \del*{\sigma^{2} \I + \gamma \V_{t-1} + \lambda_\star \I}^{-1} (\G_{t-1}- \u_\star) \cdot \Phi^\star(\G_{t-1}-\u_\star; \sigma^{2}\I + \gamma\V_{t-1}, \rho(\gamma)^2/h^2_t, \sigma, \veps). \label{eq:leat}
    \end{align}
 Moving forward, we define \[\G_{t-1,\star} \coloneqq \G_{t-1}- \u_\star \quad \text{and} \quad  \G_{t,\star} \coloneqq \G_{t}- \u_\star.\]
 To prove the lemmsa, it suffices to prove the stronger statement obtained by picking the sub-optimal choice $(\lambda,\u)=(\lambda_\star,\u_\star)$ for the problem $\Psi^\star(\G_t, \V_t, h_t, \sigma, \veps)$; that is,
	\begin{align*}
 &\innerp{\what\w_t}{\g_t} \\ &\leq \frac{\veps \cdot \exp \del*{
	\frac{1}{2} \G_{t-1,\star}^\top \del*{\sigma^{2}\I + \gamma \V_{t-1} + \lambda_\star \I}^{-1} \G_{t-1,\star}
	+ \frac{\lambda_\star \rho(\gamma)^2}{2 h^2_t}}}{\sqrt{\det(\I + \sigma^{-2} \gamma \V_{t-1})}}+ \Phi_\barrier^\star(-\u_\star) \nn \\  
	& \quad - \frac{\veps \cdot \exp \del*{
		\frac{1}{2} \G_{t,\star}^\top \del*{\sigma^{2}\I + \gamma \V_t + \lambda_\star \I}^{-1} \G_{t,\star} 
		+ \frac{\lambda_\star \rho(\gamma)^2}{2 h^2_t}}}{\sqrt{\det(\I + \sigma^{-2} \gamma \V_t)}}- \Phi_\barrier^\star(-\u_\star), \\
	& = \Phi^\star(\G_{t-1,\star}; \sigma^{2}\I + \gamma\V_{t-1}, \rho(\gamma)^2/h^2_t, \sigma, \veps) - \Phi^\star(\G_{t,\star}; \sigma^{2}\I + \gamma\V_{t}, \rho(\gamma)^2/h^2_t, \sigma, \veps),
\intertext{and so dividing by $\Phi^\star(\G_{t-1,\star}; \sigma^{2}\I + \gamma\V_{t-1}, \rho(\gamma)^2/h^2_t, \sigma, \veps)$  and using \eqref{eq:leat}, this becomes}		
	&- \g_t\cdot  \del*{\sigma^{2} \I + \gamma \V_{t-1} + \lambda_\star \I}^{-1} \G_{t-1,\star}
	\\
	&~\le~
	1 - \frac{\exp \del*{
			\frac{1}{2} \G_{t,\star}^\top \del*{\sigma^{2} \I + \gamma \V_t + \lambda_\star \I}^{-1} \G_{t,\star}
			+ \frac{\lambda_\star \rho(\gamma)^2}{2 h_t^2}
			- \frac{1}{2} \ln \det\del*{\I + \sigma^{-2} \gamma \V_t}
	} }{\exp \del*{
			\frac{1}{2} \G_{t-1,\star}^\top \del*{\sigma^{2} \I + \gamma \V_{t-1} + \lambda_\star \I}^{-1} \G_{t-1,\star}
			+ \frac{\lambda_\star \rho(\gamma)^2}{2 h_t^2}
			- \frac{1}{2} \ln \det\del*{\I + \sigma^{-2} \gamma \V_{t-1}}
    }  }.
	\end{align*}
Let us abbreviate $\bSigma = \sigma^{2} \I + \gamma \V_{t-1} + \lambda_\star \I$. The matrix determinant lemma and monotonicity of matrix inverse give
	\[
	\ln \frac{
		\det\del*{\I + \sigma^{-2} \gamma \V_t}
	}{
		\det\del*{\I + \sigma^{-2} \gamma \V_{t-1}}
	}
	~=~
	\ln \del*{1 + \gamma \g_t^\top \del*{\sigma^{2} \I + \gamma \V_{t-1}}^{-1} \g_t}
	~\ge~
	\ln \del*{1 + \gamma \g_t^\top \bSigma^{-1} \g_t}.
	\]
	Then Sherman-Morrison gives
	\[
	\G_{t,\star}^\top \del*{\sigma^{2} \I + \gamma \V_t + \lambda_\star \I}^{-1} \G_{t,\star}
	~=~
	\G_{t,\star}^\top \bSigma^{-1} \G_{t,\star}
	-
	\gamma
	\frac{
		(\g_t^\top \bSigma^{-1} \G_{t,\star})^2
	}{
		1 + \gamma \g_t^\top \bSigma^{-1} \g_t
	}
	\]
	and splitting off the last round $\G_{t,\star} = \G_{t-1,\star} + \g_t$ gives
	\[
	\G_{t,\star}^\top \del*{\sigma^{2} \I + \gamma \V_t + \lambda_\star \I}^{-1} \G_{t,\star}
	~=~
	\G_{t-1,\star}^\top \bSigma^{-1} \G_{t-1,\star}
	+
	\frac{
		2 \G_{t-1,\star}^\top \bSigma^{-1} \g_t
		+ \g_t^\top \bSigma^{-1} \g_t
		- \gamma (\g_t^\top \bSigma^{-1} \G_{t-1,\star})^2
	}{
		1 + \gamma \g_t^\top \bSigma^{-1} \g_t
	}
	.
	\]
	All in all, it suffices to show
	\[
	- \g_t^\top \bSigma^{-1} \G_{t-1,\star}
	~\le~
	1 - \exp \del*{
		\frac{
			2 \G_{t-1,\star}^\top \bSigma^{-1} \g_t
			+ \g_t^\top \bSigma^{-1} \g_t
			- \gamma (\g_t^\top \bSigma^{-1} \G_{t-1,\star})^2
		}{
			2(1 + \gamma \g_t^\top \bSigma^{-1} \g_t)
		}
		- \frac{1}{2} \ln \del*{1 + \gamma \g_t^\top \bSigma^{-1} \g_t}
	}.
	\]
	Introducing scalars $r = \g_t^\top \bSigma^{-1} \G_{t-1,\star}$ and $z = \g_t^\top \bSigma^{-1} \g_t$, this simplifies to
	\[
	- r
	~\le~
	1 - \exp \del*{
		\frac{
			2 r
			+ z
			- \gamma r^2
		}{
			2(1 + \gamma z)
		}
		- \frac{1}{2} \ln \del*{1 + \gamma z}
	}
	\]
	Being a square, $z \ge 0$ is positive. In addition, optimality of $\lambda_\star$ ensures that $\norm{\bSigma^{-1} \G_{t-1,\star}} = \frac{\rho(\gamma)}{\sqrt{2}h_t}$; this follows from the fact that $\frac{\mathrm d}{\mathrm d \lambda}\left. \G_{t-1,\star}^\top (\sigma^{2} \I + \gamma \V+\lambda \I)^{-1} \G_{t-1,\star}\right|_{\lambda = \lambda_\star} = \|\bSigma^{-1} \G_{t-1,\star}\|^2$. In combination with $\norm{\g_t} \le h_t$, we find \begin{align}\abs{r} \le \rho(\gamma)/\sqrt{2} = 1 - e^{\frac{1}{2\gamma} - \frac{1}{2}} < 1. \label{eq:upper} \end{align} 
	The above requirement may hence be further reorganized to
	\[
	2 r
	- \gamma r^2
	~\le~
	- z
	+ (1 + \gamma z) \del*{
		\ln \del*{1 + \gamma z}
		+ 2 \ln (1+r)
	}.
	\]
	The convex right hand side is minimized subject to $z \ge 0$ at
	\[
	z
	~=~
	\max \set*{0,
		\frac{
			e^{
				\frac{1}{\gamma} - 1
				- 2 \ln (1+r)
			} - 1
		}{
			\gamma
		}
	}
	\]
	so it remains to show
	\begin{align}
	2 r
	- \gamma r^2
	~\le~
	\begin{cases}
	\frac{1}{\gamma}
	- (1+r)^{-2} e^{
		\frac{1}{\gamma} - 1
	},
	& \text{if}\ \frac{1}{\gamma} - 1
	\ge 2 \ln (1+r);
	\\
	2 \ln (1+r),
	& \text{otherwise.}
	\end{cases}
	\label{eq:cond}
\end{align}
	Note that by \eqref{eq:upper}, we have $2\log(1+r)\geq \frac{1}{\gamma}-1$, and so the condition in the previous display reduces to the second case; that is, 
	\begin{align}
		2 r - \gamma r^2 \leq 2 \log(1+r),\quad \forall |r|\leq 1 - e^{\frac{1}{2\gamma}-\frac{1}{2}},
	\end{align}
which is satisfied for the hardest case, where $r  = e^{\frac{1}{2\gamma}-\frac{1}{2}}-1$.
\end{proof}

\subsection{Proof of \cref{thm:hintregret}}
\label{proof:hintregret}

\begin{proof}
Fix $\w\in \reals^d$.
Using that $\Psi^\star(\G; \V, h, \sigma, \veps)$ is decreasing in $h$, we can telescope \eqref{eq:monotone} in \cref{lem:monotonicity} to obtain
\[
\sum_{t=1}^T \g_t^\top \what\w_t
~\le~
\Psi^\star(\vzero; \vzero, h_1, \sigma, \veps)
- \Psi^\star(\G_T; \V_T, h_T, \sigma, \veps)
\]
By \eqref{eq:problem}, we have $\Psi^\star(\vzero; \vzero, h_1, \sigma, \veps) \leq  \veps + \Phi^\star_\barrier(\bzero)$, yielding: 
\begin{align}
\sum_{t=1}^T \g_t^\top \what\w_t
& \le
\veps + \Phi^\star_\barrier(\bzero) 
- \Psi^\star(\G_T; \V_T, h_T, \sigma, \veps), \nn \\
& \leq \veps + \Phi^\star_\barrier(\bzero)  + \inf_{\u\in \reals^d} \innerp{\G_{T}}{\u} + \Psi(-\u;\V_T, h_T, \sigma, \veps),\nn \\
& = \veps + \Phi^\star_\barrier(\bzero) + \inf_{\u\in \reals^d}\innerp{\G_{T}}{\u} + \Phi(-\u; \sigma^{2}\I + \gamma\V_{T}, \rho(\gamma)^2/h^2_T, \sigma, \veps) + \Phi_\barrier(\u), \nn \\
& \leq  \veps + \Phi^\star_\barrier(\bzero) + \innerp{\G_{T}}{\w}+  \Phi(-\w; \sigma^{2}\I + \gamma\V_{T}, \rho(\gamma)^2/h^2_T, \sigma, \veps) + \Phi_\barrier(\w), \quad \text{(setting $\u = \w$)} \nn \\
& = \veps + \Phi^\star_\barrier(\bzero) + \innerp{\G_{T}}{\w}+ \sup_{\lambda \ge 0}~
\sqrt{\w^\top \del*{\bSigma_T + \lambda \I} \w } \cdot 
X\del*{\w^\top \del*{\bSigma_T + \lambda \I} \w e^{-\lambda Z_T}\cdot \frac{\det (\sigma^{-2} \bSigma_T)}{\veps^2}} \nn \\ & \quad + \Phi_\barrier(\w), 
 \label{eq:predual}
\end{align}
where $\bSigma_T \coloneqq \sigma^2 \I + \gamma \V_T$ and $Z_T \coloneqq \rho(\gamma)^2/h^2_T$. Zero derivative of the above objective for $\lambda$ occurs at
\[
\lambda
~=~
\frac{\ln \frac{\norm{\w}^2}{Z_T}}{2 Z_T}
- \frac{\w^\top \bSigma_T \w}{2 \norm{\w}^2}
,
\]
and hence the optimum for $\lambda$ is either at that point or at zero, whichever is higher, with the crossover point at $\frac{\norm{\w}^2}{Z_T} \ln \frac{\norm{\w}^2}{Z_T} = \w^\top \bSigma_T \w$.
Plugging that in, we find that for $C  \coloneqq \frac{\norm{\w}^2}{Z_T} \ln \frac{\norm{\w}^2}{Z_T}$, we have 
\begin{align}
	&\sup_{\lambda \ge 0}~
\sqrt{\w^\top \del*{\bSigma_T + \lambda \I} \w } \cdot 
X\del*{\w^\top \del*{\bSigma_T + \lambda \I} \w e^{-\lambda Z_T}\cdot \frac{\det (\sigma^{-2} \bSigma_T)}{\veps^2}}\nn \\
 & =	\begin{cases}
\sqrt{\frac{1}{2} \del*{C
		+ \w^\top \bSigma_T \w}} \cdot X\del*{\frac{1}{2} \del*{
		C
		+ \w^\top \bSigma_T \w } e^{ 
		- \frac{\ln \frac{\norm{\w}^2}{Z_T}}{2}
		+ \frac{Z_T \w^\top \bSigma_T \w}{2 \norm{\w}^2}
} \cdot \frac{\det(\sigma^{-2}\bSigma_T) }{\veps^2} },
&
\text{if}\	C
\ge \w^\top \bSigma_T \w;
\\
\sqrt{\w^\top \bSigma_T \w}\cdot X(\w^\top \bSigma_T \w \cdot \frac{\det(\sigma^{-2}\bSigma_T) }{\veps^2} ),
&
\text{otherwise.}
\end{cases}\nn \\
& \leq \sqrt{Q_T^{\w}} \cdot X\del*{\frac{\det(\sigma^{-2} \bSigma_T)}{\veps^2} Q_T^{\w}}, \label{eq:final}
\end{align}
where $Q_T^{\w}
\coloneqq
\max \set*{
	\w^\top \bSigma_T \w
	,
	\frac{1}{2} \del*{\frac{\norm{\w}^2}{Z_T} \ln \frac{\norm{\w}^2}{Z_T}
		+ \w^\top \bSigma_T \w}
}$; in the last inequality, we used that $X(\theta)$ is increasing to drop the exponential in its argument. Combining \eqref{eq:final} with \eqref{eq:predual} and using that $X(\theta)\leq \sqrt{\ln_+(\theta)}$ (see \cref{eq:Xfn}), we obtain the desired bound.
\end{proof}

\subsection{Helper Lemmas for Full-Matrix Analysis}
\begin{Lemma}
    \label{lem:partial}
  Let $\calW\subseteq \reals^d$ and $\mathcal{Y} \subseteq \reals$. Further, let $f : \mathcal{X} \times\mathcal{Y} \rightarrow \reals$ be a differentiable function such that for all $x\in \mathcal{X}$, the problem $\inf_{y\in \mathcal{Y}} f(x,y)$ has a unique minimizer $y(x)$. Then, 
   \begin{align}
    \nabla_x  f(x,y(x)) = \partial_x f(x,y(x)).
   \end{align}  
\end{Lemma}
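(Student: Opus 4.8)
The plan is to prove \cref{lem:partial} by recognizing it as an envelope theorem for an arbitrary choice set, using the standard argument (Milgrom--Segal style) that converts optimality of the inner minimizer into a first-order stationarity condition for an auxiliary difference function. The advantage of this route is that it never differentiates the selection map $x \mapsto y(x)$, whose regularity is not among the hypotheses. Throughout, abbreviate $g(x) := f(x, y(x)) = \inf_{y \in \mathcal{Y}} f(x,y)$ and fix $x$ in the interior of $\mathcal{X}$ at which $g$ is differentiable; in every place this lemma is invoked (notably for $\Psi^\star$ in the proof of \cref{lem:monotonicity}) the set $\mathcal{X}$ is a full Euclidean space and the inner objective is smooth, so this differentiability is automatic.

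First I would record the one-sided comparison coming from feasibility of $y(x)$ in every neighbouring problem: for all $x' \in \mathcal{X}$,
\[
g(x') \;=\; \inf_{y\in\mathcal{Y}} f(x',y) \;\le\; f(x', y(x)), \qquad\text{while}\qquad g(x) = f(x, y(x)).
\]
Hence the auxiliary function $h(x') := f(x', y(x)) - g(x')$ satisfies $h(x') \ge 0 = h(x)$ for all $x'$, so $x$ is a global minimizer of $h$ over $\mathcal{X}$. Since $f$ is jointly differentiable, the slice $x' \mapsto f(x', y(x))$ is differentiable at $x$ with gradient $\partial_x f(x, y(x))$, and $g$ is differentiable at $x$ by assumption, so $h$ is differentiable at $x$. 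As $x$ is interior, the first-order optimality condition $\nabla h(x) = 0$ yields
\[
\nabla_x f(x, y(x)) \;=\; \nabla g(x) \;=\; \nabla\bigl[f(\cdot, y(x))\bigr](x) \;=\; \partial_x f(x, y(x)),
\]
which is the claimed identity. Note that uniqueness of the minimizer is used only to make $y(\cdot)$ well defined; it plays no role in the displayed computation.

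The only genuine obstacle is the standing assumption I have isolated, namely differentiability of the composite $g = f(\cdot, y(\cdot))$ at the point in question — but the statement of the lemma already presupposes that its left-hand side exists, so this is more a matter of bookkeeping than a difficulty. Should one want it justified in the applications, it can be discharged either via the two-sided bound $f(x', y(x')) - f(x, y(x')) \le g(x') - g(x) \le f(x', y(x)) - f(x, y(x))$ together with continuity of $y(\cdot)$ (which follows from uniqueness of the minimizer and coercivity/compactness of the relevant sublevel sets), or more directly by noting that the inner objectives arising in this paper are smooth and strictly convex in the minimized variable on the relevant region, so that $y(x)$ depends smoothly on $x$ by the implicit function theorem applied to the stationarity equation, whence $g$ is smooth. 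With $g$ differentiable, the envelope argument above pins down its gradient and the proof is complete.
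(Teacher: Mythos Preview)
Your proof is correct and takes a genuinely different route from the paper's. The paper argues via the chain rule: from optimality of $y(x)$ it infers $\partial_y f(x,y(x))=0$, then expands $\nabla g(x)=\partial_x f(x,y(x))+\partial_y f(x,y(x))\,\nabla_x y(x)$ and concludes. That line implicitly assumes the selection $x\mapsto y(x)$ is differentiable, which is not among the stated hypotheses. Your envelope/Milgrom--Segal argument sidesteps this entirely: by comparing $g(x')$ with $f(x',y(x))$ you show $h(x')=f(x',y(x))-g(x')$ is minimized at $x$, and stationarity of $h$ gives the identity directly. The cost is that you must assume $g$ is differentiable, but as you note, the lemma's left-hand side already presupposes this. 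So your version is both more elementary and more robust to the actual hypotheses; the paper's is shorter but leans on unstated smoothness of the minimizer map (which, to be fair, is available in the concrete application via the implicit function theorem, as you also observe).
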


\begin{Lemma}\label{eq:Xfn}
	For $\theta \ge 0$, define $
	X(\theta)
	\coloneqq
	\sup_{\alpha}~
	\alpha
	-  e^{
		\frac{\alpha^2}{2}
		- \frac{1}{2} \ln \theta
	}$. Then $X(\theta)=(W(\theta))^{1/2} - (W(\theta))^{-1/2} = \sqrt{\ln \theta} + o(1)$.
\end{Lemma}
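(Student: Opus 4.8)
The plan is to evaluate the one-dimensional supremum defining $X(\theta)$ by a first-order condition, recognise the optimal $\alpha$ via the defining equation of the Lambert $W$ function, substitute back, and then read off the asymptotics from the standard expansion of $W$. Fix $\theta>0$ (for $\theta=0$ both sides are $-\infty$) and set $f(\alpha):=\alpha-e^{\alpha^2/2-\frac12\ln\theta}=\alpha-\theta^{-1/2}e^{\alpha^2/2}$. Since $f''(\alpha)=-\theta^{-1/2}(1+\alpha^2)e^{\alpha^2/2}<0$ and $f(\alpha)\to-\infty$ as $\alpha\to\pm\infty$, $f$ has a unique maximiser $\alpha_\star$. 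Its stationarity condition $1-\theta^{-1/2}\alpha_\star e^{\alpha_\star^2/2}=0$ forces $\alpha_\star>0$, and squaring $\alpha_\star e^{\alpha_\star^2/2}=\sqrt\theta$ gives $\alpha_\star^2 e^{\alpha_\star^2}=\theta$; by the definition $W(x)e^{W(x)}=x$ this is precisely $\alpha_\star^2=W(\theta)$, i.e.\ $\alpha_\star=\sqrt{W(\theta)}$.

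Substituting this back, and rewriting the stationarity condition as $\theta^{-1/2}e^{\alpha_\star^2/2}=1/\alpha_\star$, we obtain
\[
X(\theta)=\alpha_\star-\theta^{-1/2}e^{\alpha_\star^2/2}=\alpha_\star-\frac1{\alpha_\star}=W(\theta)^{1/2}-W(\theta)^{-1/2},
\]
the first asserted identity. For the asymptotic I would use $\ln\theta=W(\theta)+\ln W(\theta)$, immediate from $W(\theta)e^{W(\theta)}=\theta$, together with $W(\theta)\to\infty$ as $\theta\to\infty$: these give $W(\theta)=\ln\theta-\ln\ln\theta+o(1)$, hence $\sqrt{W(\theta)}=\sqrt{\ln\theta}\bigl(1-\tfrac{\ln\ln\theta}{\ln\theta}+o(\tfrac1{\ln\theta})\bigr)^{1/2}=\sqrt{\ln\theta}+o(1)$ while $W(\theta)^{-1/2}\to0$, so $X(\theta)=\sqrt{\ln\theta}+o(1)$.

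The same relation also yields the bound $X(\theta)\le\sqrt{\ln_+\theta}$ invoked in the regret analysis: for $\theta\ge e$ one has $W(\theta)\ge1$, hence $\ln W(\theta)\ge0$ and $W(\theta)\le\ln\theta$, so $X(\theta)\le\sqrt{W(\theta)}\le\sqrt{\ln\theta}$; for $\theta<e$ either $W(\theta)\le1$ (so $X(\theta)\le0$) or $\ln_+\theta=0$. The whole argument is a direct computation; the one step deserving care is verifying $\alpha_\star>0$, which is what legitimises passing to $\alpha_\star^2e^{\alpha_\star^2}=\theta$ and identifying $\alpha_\star^2$ with $W(\theta)$, together with keeping the degenerate small-$\theta$ regime out of the asymptotic statement.
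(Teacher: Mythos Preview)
Your proof is correct and more self-contained than the paper's. You derive the Lambert~$W$ identity directly by solving the first-order condition $\alpha_\star e^{\alpha_\star^2/2}=\sqrt{\theta}$ and recognising $\alpha_\star^2 e^{\alpha_\star^2}=\theta$; the paper instead cites \cite{orabona2016coin} for this step. For the asymptotic, you work from the exact relation $\ln\theta=W(\theta)+\ln W(\theta)$ and expand, whereas the paper introduces a dual variable via $-e^{z}=\inf_{\eta}\{-\eta z+\eta\ln\eta-\eta\}$, swaps the $\sup$ and $\inf$ to get $X(\theta)=\inf_{\eta}\{\tfrac{1}{2\eta}+\tfrac{\eta}{2}\ln\theta+\eta\ln\eta-\eta\}$, and then plugs in trial values of $\eta$ (such as $\eta=1/\sqrt{\ln\theta}$) to read off explicit upper bounds. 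Your route is cleaner for establishing the identity and the $o(1)$ statement; the paper's variational form has the advantage that each choice of $\eta$ immediately gives a concrete, non-asymptotic upper bound without appealing to properties of $W$, which is convenient if one wants several different envelope inequalities. Your derivation of $X(\theta)\le\sqrt{\ln_+\theta}$ from $W(\theta)\le\ln\theta$ for $\theta\ge e$ and $W(\theta)\le 1$ for $\theta<e$ is equally valid and arguably tidier.
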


\begin{proof}
The fact that $X(\theta)=(W(\theta))^{1/2} - (W(\theta))^{-1/2}$ follows from \cite[Lemma 18]{orabona2016coin}. Recall that
	\[
	\sup_x~ y x - e^x
	~=~
	y \ln y - y
	\]
	Hence
	\begin{align*}
	X(\theta)
	&~=~
	\sup_{\alpha}~
	\alpha
	-  e^{
		\frac{\alpha^2}{2}
		- \frac{1}{2} \ln \theta
	}
	\\
	&~=~
	\sup_{\alpha}
	\inf_\eta
	~
	\alpha
	- \eta \del*{
		\frac{\alpha^2}{2}
		- \frac{1}{2} \ln \theta
	}
	+ \eta \ln \eta
	- \eta
	\\
	&~=~
	\inf_\eta
	~
	\frac{1}{2 \eta}
	+ \frac{\eta}{2} \ln \theta
	+ \eta \ln \eta
	- \eta
	\\
	&~\le~
	\min \set*{
		\sqrt{\ln \theta}
		- \frac{1 + \frac{1}{2} \ln \ln \theta}{\sqrt{\ln \theta}},
		\frac{\sqrt{\theta}}{2}
		- \frac{1}{\sqrt{\theta}}
	}
	\\
	&~\le~
	\sqrt{\ln_+ \theta}
	\end{align*}
	where we plugged in the sub-optimal choices
	$\eta
	=
	\frac{1}{\sqrt{\ln \theta}}$ (this requires $\theta \ge 1$)
	and $
	\eta
	=
	\frac{1}{\sqrt{\theta}}$.
	When we stick in $\eta = \frac{1}{\sqrt{\ln (e^{e^{-2}}+\theta)}}$ we find
	\[
	X(\theta)
	~\le~
	\frac{
		\ln (e^{e^{-2}}+\theta)
		+ \ln \theta
		- \ln \del*{\ln (e^{e^{-2}}+\theta)}
		- 2
	}{
		2 \sqrt{\ln (e^{e^{-2}}+\theta)}
	}
	~\le~
	\sqrt{\ln (e^{e^{-2}}+\theta)}
	\]
	Note that $e^{e^{-2}} = 1.14492$. This is less than $2$, the value of $\theta$ where $\sqrt{\theta}/2-1/\sqrt{\theta}$ becomes positive.
\end{proof}


\section{Complete Psuedocode for Regularized 1-Dimensional Learning}\label{sec:codecomplete}
In Algorithm~\ref{alg:regularized}, we provide a self-contained implementation of an algorithm for regularized online learning (Protocol~\ref{proto:reg}). The algorithm is obtained by combing Algorithm~\ref{alg:base} with Algorithm~\ref{alg:specialcomposite}.

\begin{algorithm}
   \caption{Regularized 1-dimensional learner (\reg) for Protocol~\ref{proto:reg}}
   \label{alg:regularized}
  \begin{algorithmic}
      \STATE{\bfseries Input: }  Non-negative convex function $\psi:\R\to \R$. Parameters $\gamma>0$,  $p\in[0,1/2]$, $\epsilonx>0$ and $\epsilonpsi>0$
      \STATE Initialize $k=3$.
      \IF{$p=1/2$}
      \STATE Define constant $c=3$
      \ELSE
      \STATE Define constant $c=1$
      \ENDIF
      \FOR{$t=1\dots T$}
      \STATE Receive $h_t\ge h_{t-1}\in \R$.
      \STATE Set $h^x_t = 3h_t$.
      \STATE Set $h^y_t = 3\gamma$
      \STATE Define $V^x_t = 9h_t^2 + \sum_{i=1}^{t-1} (g^x_i)^2$.
      \STATE Define $V^y_t = 9\gamma^2 + \sum_{t=1}^{t-1} (g^y_i)^2$
      \IF{$p=1/2$}
      \STATE Set $\alpha^x_t = \frac{\epsilon}{\sqrt{c+\sum_{i=1}^{t-1} (g^x_i)^2/(h^x_i)^2} \log^2\left(c+\sum_{i=1}^{t-1} (g^x_i)^2/(h^x_i)^2\right)}$
      \STATE Set $\alpha^y_t = \frac{\psi(\epsilon)}{\sqrt{c+\sum_{i=1}^{t-1} (g^y_i)^2/(h^y_i)^2} \log^2\left(c+\sum_{i=1}^{t-1} (g^y_i)^2/(h^y_i)^2\right)}$
      \ELSE
      \STATE Define $\alpha^x_t = \frac{\epsilon}{\left(c+\sum_{i=1}^{t-1} (g^x_i)^2/(h^x_i)^2\right)^p}$
      \STATE Define $\alpha^y_t = \frac{\psi(\epsilon)}{\left(c+\sum_{i=1}^{t-1} (g^y_i)^2/(h^y_i)^2\right)^p}$
      \ENDIF
      \STATE Define $\Theta^x_t  = \left\{\begin{array}{lr}\frac{\left(\sum_{i=1}^{t-1} g^x_i\right)^2}{4k^2 V^x_t} & \text{ if }\left|\sum_{i=1}^{t-1} g^x_i\right| \le \frac{2k V^x_t}{h^x_t}\\
      \frac{\left|\sum_{i=1}^{t-1} g^x_i\right|}{k h^x_t} - \frac{V^x_t}{(h^x_t)^2}&\text{ otherwise}\end{array}\right.$
      \STATE Define $\Theta^y_t  = \left\{\begin{array}{lr}\frac{\left(\sum_{i=1}^{t-1} g^y_i\right)^2}{4k^2 V^\psi_t} & \text{ if }\left|\sum_{i=1}^{t-1} g^y_i\right| \le \frac{2k V^y_t}{h^y_t}\\
      \frac{\left|\sum_{i=1}^{t-1} g^y_i\right|}{k h^y_t} - \frac{V^y_t}{(h^y_t)^2}&\text{ otherwise}\end{array}\right.$
      \STATE Set $\hat x_t = -\text{sign}\left(\sum_{i=1}^{t-1} g^x_i\right)\alpha^x_t \left(\exp(\Theta^x_t) - 1\right)$
      \STATE Set $\hat y_t = -\text{sign}\left(\sum_{i=1}^{t-1} g^y_i\right)\alpha^x_t \left(\exp(\Theta^y_t) - 1\right)$
      \STATE Define the norm $\|(x,y)\|^2_t = h_t^2 x^2 + \gamma^2 y^2$, with dual norm $\|(g,a)\|^2_{\star,t} = \frac{g^2}{h_t^2} + \frac{a^2}{\gamma^2}$.
      \STATE Define $S_t(\hat x,\hat y) = \inf_{ \hat y \ge \psi(\hat x)} \|(x,y) - (\hat x, \hat y)\|_t$
      \STATE Compute $ x_t,  y_t = \argmin_{ y\ge \psi(x)}  \|(x_t , y_t) - (\hat x,\hat y)\|_t$.
      \STATE Output $w_t = x_t$, receive feedback $g_t\in [-h_t, h_t]$, $a_t\in[0,\gamma]$, such that $a_t=0$ unless $|g_t|=h_t$.
      \STATE Compute $(\delta^x_t,\delta_t^y) = \|g_t\|_{\star,t} \cdot \nabla S_t(\hat x_t, \hat y_t)$
      \STATE Set $g^x_t = g_t+\delta^x_t$.
      \STATE Set $g^y_t=a_t+\delta^y_t$.
      \ENDFOR
   \end{algorithmic}
\end{algorithm}

\subsection{Efficient Projections for $\psi(z)=z^2$}\label{sec:projection}

Our algorithms for regularized online learning via epigraphs (Protocol~\ref{proto:2d}) require projections to the set $\{y\ge \psi(x)\}$. While in general this projection may be expensive, for simple function $\psi$ of interest, such as $\psi(z)=z^2$, this projection is relatively straightforward. In the following we provide a formula for this projection that is easy to compute (if a little ungainly to look at).

\begin{restatable}{Proposition}{thmpolyprojection}\label{thm:polyprojection}
Let $\psi:\R\to \R$ be given by $\psi(x)=x^2$. Define the norm $\|(x,y)\|^2 = hx^2+\gamma^2 y^2$, the function $S(\hat x,\hat y) = \inf_{y\ge \psi(x)}\|(x,y) - (\hat x, \hat y)\|$, and the projection $P(\hat x, \hat y) = \argmin_{y\ge \psi(x)}\|(x,y) - (\hat x, \hat y)\|$.
Then for any $\hat y<\psi(\hat x)$, we have $P(\hat x,\hat y)=(x,y)$ with $y=x^2$ and:

    \begin{align*}
        x &=\frac{2^{1/3}(G^2-2\gamma^2 \hat y)}{Z^{1/3}} - \frac{Z^{1/3}}{6\cdot 2^{1/3} \gamma^2}
    \end{align*}
    with
    \begin{align*}
    Z&= -108 G^2 \gamma^4 \hat x + 2\sqrt{2916 G^2 \gamma^8 \hat x^2 + (6G^2 \gamma^2 -12 \gamma^4\hat y)^3}
    \end{align*}
    Moreover, $\nabla S(\hat x, \hat y) = \left(\frac{G^2 (\hat x - x)}{\sqrt{G^2(x-\hat x)^2 + \gamma^2(\hat -y)^2}}, \frac{\gamma^2 (\hat y-y)}{\sqrt{G^2(x-\hat x)^2 + \gamma^2(\hat y - y)^2}}\right)$
\end{restatable}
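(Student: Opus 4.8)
The plan is to reduce the two–dimensional constrained projection to a one–dimensional unconstrained minimization over the parabola $y=x^2$, solve the resulting cubic stationarity condition by Cardano's formula, and read off the stated closed form; the gradient formula for $S$ then follows from the standard expression for the (sub)gradient of a weighted Euclidean distance-to-a-convex-set function, exactly as already used in Lemma~\ref{lem:negdelta}.

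First I would record the easy structural facts. Since $\{(x,y):y\ge x^2\}$ is closed and convex and $\|\cdot\|$ is a rescaled Euclidean norm (hence strictly convex and coercive), the projection $P(\hat x,\hat y)=(x,y)$ exists and is unique. When $\hat y\ge\psi(\hat x)$ the projection is the identity, so assume $\hat y<\psi(\hat x)=\hat x^2$. A one-line argument (moving the projection infinitesimally toward $(\hat x,\hat y)$ would keep it feasible and strictly decrease the distance) shows the constraint is active, i.e. $y=x^2$; and by the symmetry $x\mapsto-x$ of both the constraint set and the norm I may assume $\hat x>0$. Substituting $y=x^2$ leaves $g(x):=\|(x,x^2)-(\hat x,\hat y)\|^2$, a quartic in $x$ with positive leading coefficient $\gamma^2$, and $g'(x)=0$ is a cubic $2\gamma^2x^3+(G^2-2\gamma^2\hat y)x-G^2\hat x=0$. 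Dividing through puts it in depressed form $x^3+px+q=0$ with $p,q$ affine in $\hat y$ and $\hat x$ respectively; Cardano's formula with the principal real cube root gives $x=u-p/(3u)$ where $u=\sqrt[3]{-q/2+\sqrt{q^2/4+p^3/27}}$, and a routine rearrangement (clearing the constants $2^{1/3}$ and $6\gamma^2$, and defining $Z$ to be the scaled radicand) produces exactly the displayed expressions for $x$ and $Z$, after which $y=x^2$. For the last claim I would invoke the fact --- already used via \cite[Theorem~4]{cutkosky2018black} in Lemma~\ref{lem:negdelta} --- that at a point outside a closed convex set the subgradient of the distance in the norm $\|(x,y)\|^2=G^2x^2+\gamma^2y^2$ is $\bigl(G^2(\hat x-x),\gamma^2(\hat y-y)\bigr)$ normalized by $\sqrt{G^2(x-\hat x)^2+\gamma^2(\hat y-y)^2}$, which is precisely the stated $\nabla S$.

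The step I expect to be the main obstacle is \emph{selecting the correct root}: Cardano's principal-branch formula returns one particular real root of the cubic, and it must be verified to coincide with the minimizer of $g$. Here I would argue that $g'(0)=-2G^2\hat x<0$ while $g'(\hat x)=4\gamma^2\hat x(\hat x^2-\hat y)>0$, so $g$ has a stationary point in $(0,\hat x)$, and apply Descartes' rule of signs to $g'$ to count positive and negative roots; in the regime where $g'$ has a unique real root this forces the Cardano output to be exactly that root, and in the remaining regime one checks (or passes to the trigonometric form) that the correct branch is picked. The rest --- confirming the numerical constants ($2916$, $108$, the powers of $\gamma$), the sign inside the square root, and the precise reading of the norm's weight as $G^2$ rather than $G$ or $h$ --- is mechanical algebra that I would verify but not grind through here.
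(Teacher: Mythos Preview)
Your proposal is correct and follows a route that differs slightly from the paper's. The paper introduces a Lagrange multiplier $\lambda$ for the constraint $y=x^2$, solves the stationarity conditions to get $x=G^2\hat x/(G^2-\lambda)$ and $y=\hat y-\lambda/(2\gamma^2)$, substitutes into $y=x^2$ to obtain a (non-depressed) cubic in $G^2-\lambda$, applies the cubic formula ``via Mathematica'', and then back-substitutes for $x$. You instead substitute $y=x^2$ directly into the squared distance and differentiate, yielding a cubic in $x$ that is already in depressed form $2\gamma^2x^3+(G^2-2\gamma^2\hat y)x-G^2\hat x=0$; Cardano then applies immediately with no change of variable. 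Your route is a little more elementary and one step shorter; both finish the $\nabla S$ claim by the same citation to \cite{cutkosky2018black}. One point where you are actually more careful than the paper: the paper does not discuss root selection at all, whereas your Descartes' rule argument (together with the reflection $x\mapsto -x$, which forces the minimizer to share the sign of $\hat x$) pins down the unique positive root as the correct one when $\hat x>0$.
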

\begin{proof}
Since the $(x,y)$ is on the boundary of the constraint, we clearly have $y=x^2$.
Note that $ (x,y)= \argmin_{y\ge \psi(x)}\|(x,y) - (\hat x, \hat y)\|^2$.  Thus, by LaGrange multipliers, we have for some $\lambda$:
    \begin{align*}
        2G^2(x -\hat x) &= \lambda \psi'(x)=2\lambda x\\
        2\gamma^2(y-\hat y) &= -\lambda\\
    \end{align*}
    This implies:
    \begin{align*}
        x &= \frac{G^2 \hat x}{G^2-\lambda}\\
        y&=\hat y - \frac{\lambda}{2\gamma^2}
    \end{align*}
    Moreover, we also must have $y=x^2$, so that:
    \begin{align*}
        \frac{G^4 \hat x^2}{(G^2-\lambda)^2}&=\hat y -\frac{G^2}{2\gamma^2}+ \frac{G^2-\lambda}{2\gamma^2}\\
        \frac{(G^2-\lambda)^3}{2\gamma^2} + \left(\hat y -\frac{G^2}{2\gamma^2}\right)(G^2-\lambda)^2 - G^4\hat x^2&=0
    \end{align*}
    This is clearly a cubic equation in $\lambda$, and so we can apply the cubic formula (via Mathematica) to obtain the following result:
    \begin{align*}
        \lambda &= \frac{2G^2}{3} + \frac{2\gamma^2 \hat y}{3} -\frac{2^{5/3} G^2 \gamma^2+2^{11/3} G^2 \gamma^4 \hat y+2^{11/3} \gamma^6 \hat y^2}{Z^{2/3}} -\frac{Z^{2/3}}{9\cdot2^{5/3}\gamma^2}
    \end{align*}
    where
    \begin{align*}
    Z&= -108 G^2 \gamma^4 \hat x + 2\sqrt{2916 G^2 \gamma^8 \hat x^2 + (6G^2 \gamma^2 -12 \gamma^4\hat y)^3}
    \end{align*}
    which yields:
    \begin{align*}
        x &=\frac{2^{1/3}(G^2-2\gamma^2 \hat y)}{Z^{1/3}} - \frac{Z^{1/3}}{6\cdot 2^{1/3} \gamma^2}
    \end{align*}
    and $y=x^2$.

    The expression for $\nabla S(\hat x,\hat y)$ follows directly from \cite{cutkosky2018black} Theorem 4.
\end{proof}

\end{document}